\documentclass{article}

\usepackage{arxiv}
\usepackage[numbers,sort&compress]{natbib}
\usepackage{doi}

\usepackage{iftex}
\ifPDFTeX
  \usepackage[utf8]{inputenc}
  \usepackage[T1]{fontenc}
\else
  \usepackage{fontspec}
  \defaultfontfeatures{Ligatures=TeX}
\fi
\usepackage[english]{babel}

\usepackage{amsmath,amssymb,amsfonts,amsthm,mathtools}
\usepackage{booktabs}
\usepackage{colortbl}
\usepackage{graphicx}
\usepackage{microtype}
\usepackage{multirow}
\usepackage{nicefrac}
\usepackage{xcolor}

\usepackage[font={small},labelfont=bf]{caption}
\usepackage[nameinlink]{cleveref}
\usepackage{algorithm}
\usepackage{algorithmic}
\usepackage{placeins}
\usepackage{xspace}

\newtheorem{theorem}{Theorem}[section]
\newtheorem{lemma}[theorem]{Lemma}
\newtheorem{proposition}[theorem]{Proposition}
\newtheorem{corollary}[theorem]{Corollary}
\newtheorem*{proposition*}{Proposition}
\theoremstyle{definition}
\newtheorem{definition}[theorem]{Definition}
\newtheorem{assumption}[theorem]{Assumption}
\theoremstyle{remark}
\newtheorem{remark}[theorem]{Remark}

\graphicspath{{./}{figures/}}

\newcommand{\sys}{{\normalfont\textsc{FlowForge}}\xspace}

\newcommand{\buildfigurewidth}[1]{0.8\textwidth}

\AtBeginEnvironment{thebibliography}{%
  \raggedright
  \sloppy
  \emergencystretch=3em\relax
}

\title{\sys: A Staged Local Rollout Engine for Flow Field Prediction}
\author{
  Xiaowen Zhang\thanks{These authors contributed equally.} \\
  Shanghai Jiao Tong University \\
  Shanghai, China \\
  \texttt{handshaker@sjtu.edu.cn}
  \And
  Ziming Zhou\footnotemark[1] \\
  University of Michigan \\
  Ann Arbor, USA \\
  \texttt{zimingzh@umich.edu}
  \And
  Fengnian Zhao \\
  Shanghai Jiao Tong University \\
  Shanghai, China \\
  \texttt{iclover@sjtu.edu.cn}
  \And
  David L.S. Hung \\
  Shanghai Jiao Tong University \\
  Shanghai, China \\
  \texttt{dhung@sjtu.edu.cn}
}
\date{}

\renewcommand{\headeright}{arXiv Preprint}
\renewcommand{\undertitle}{arXiv Preprint}
\renewcommand{\shorttitle}{\textit{\sys}: Staged Local Rollout for Flow Field Prediction}

\hypersetup{
  pdftitle={\sys: A Staged Local Rollout Engine for Flow Field Prediction},
  pdfauthor={Xiaowen Zhang, Ziming Zhou, Fengnian Zhao, David L.S. Hung},
  colorlinks=true,
  linkcolor=blue,
  citecolor=blue,
  urlcolor=blue
}

\begin{document}
\maketitle

\begin{abstract}
Deep learning surrogates for CFD flow field prediction often rely on large,
complex models, which can be slow and fragile when data are noisy or incomplete.
We introduce \textbf{\sys}, a staged local rollout engine that predicts future
flow fields by compiling a locality-preserving update schedule and executing it
with a shared lightweight local predictor. Rather than producing the next frame
in a single global pass, \sys rewrites spatial sites stage by stage so that each
update conditions only on bounded local context exposed by earlier stages. This
compile--execute design aligns inference with short-range physical dependence,
keeps latency predictable, and limits error amplification from global mixing.
Across PDEBench, CFDBench, and BubbleML, \sys matches or improves upon strong
baselines in pointwise accuracy, delivers consistently better robustness to noise
and missing observations, and maintains stable multi-step rollout behavior while
reducing per-step latency.

\end{abstract}

\section{Introduction}

Flow field prediction is a core operation in time-dependent
computational fluid dynamics (CFD): given a short history window $U_{t-m+1:t}$, engineers and
experimentalists repeatedly need a fast surrogate for a future state $U_{t+k}$
to explore small changes in operating conditions, boundary settings, or
geometry. \cite{ml-fluid-mechanics} In this workflow, the surrogate ML model is only useful if it is 
(i) accurate under nominal conditions,
(ii) reliable under realistic input imperfections (noise, missing regions, imperfect
boundary conditions \cite{piv-uq}),
and (iii) fast enough for interactive iteration \cite{deeponet}.

Current ML surrogates yet do not provide a reliable off-the-shelf solution. In practice, users must
choose among many families (e.g., U-Net \cite{unet}, Fourier Neural Operator (FNO) \cite{fno},
and other physics-informed neural networks (PINNs) \cite{pinns}) whose performance and stability can vary
substantially across datasets and noise regimes.
Empirical studies in Section~\ref{sec:robustness} show that even strong baselines degrade under
realistic input corruptions and incur higher per-step latency. 

A common root cause is architectural. Many popular models rely on deep encoder--decoder
pipelines or layers with global receptive fields, such as Fourier spectral operators 
or global attention, which mix information across the entire spatial grid in a single update. 
While regularization or spectral truncation can mitigate some instabilities, 
they do not alter the underlying information-flow graph: every output location is 
still computed in one global pass and can be influenced by defects anywhere in the input.

This global structure is not incidental but induced by the standard supervised objective, 
which trains the model to predict the entire future field in one global update,
\[
U_{t+k} = f_k(U_{t-m+1:t}).
\]
Over short horizons, however, physical influence is predominantly local: 
each spatial site depends almost only on a bounded neighborhood. 
Single-pass global predictors instead compute all sites simultaneously, 
allowing information from distant regions to affect every update.
As a result, local defects spread across the domain and inference needlessly pays 
for global interactions (\autoref{fig:intro_flowforge}).

\noindent Our key idea is to predict the next flow field not in a single global pass, 
but as a \textbf{local rollout}. Instead of updating all spatial locations 
simultaneously, we generate the single future state $U_{t+k}$ by progressively 
rewriting spatial sites in a locality-preserving order. At each step, a site is 
predicted using (i) the past history $U_{t-m+1:t}$ and (ii) nearby future values 
that have already been updated. In this way, information propagates locally and 
incrementally across the grid, rather than being mixed globally in one shot.

Formally, this corresponds to factorizing the conditional distribution of 
the next field into ordered, neighborhood-conditioned site-wise updates:
\begin{equation}
p(U_{t+k}\mid U_{t-m+1:t})
=
\prod_{i=1}^{N}
p\!\left(
u^{(\sigma(i))}_{t+k}
\,\middle|\,
U_{t-m+1:t},
\,u^{(\sigma(<i))}_{t+k}
\right),
\label{eq:intro_factorization}
\end{equation}
where $\sigma$ defines an order over the $N$ spatial sites, and $u^{(\sigma(<i))}_{t+k}$ 
denotes values generated earlier in that order. 
The history window $U_{t-m+1:t}$ is channel-stacked as $\mathrm{concat}(U_{t-m+1},\ldots,U_t)\in\mathbb{R}^{N\times (mc)}$.

\begin{figure}[t]
    \centering
    \includegraphics[width=\buildfigurewidth{0.8\columnwidth}]{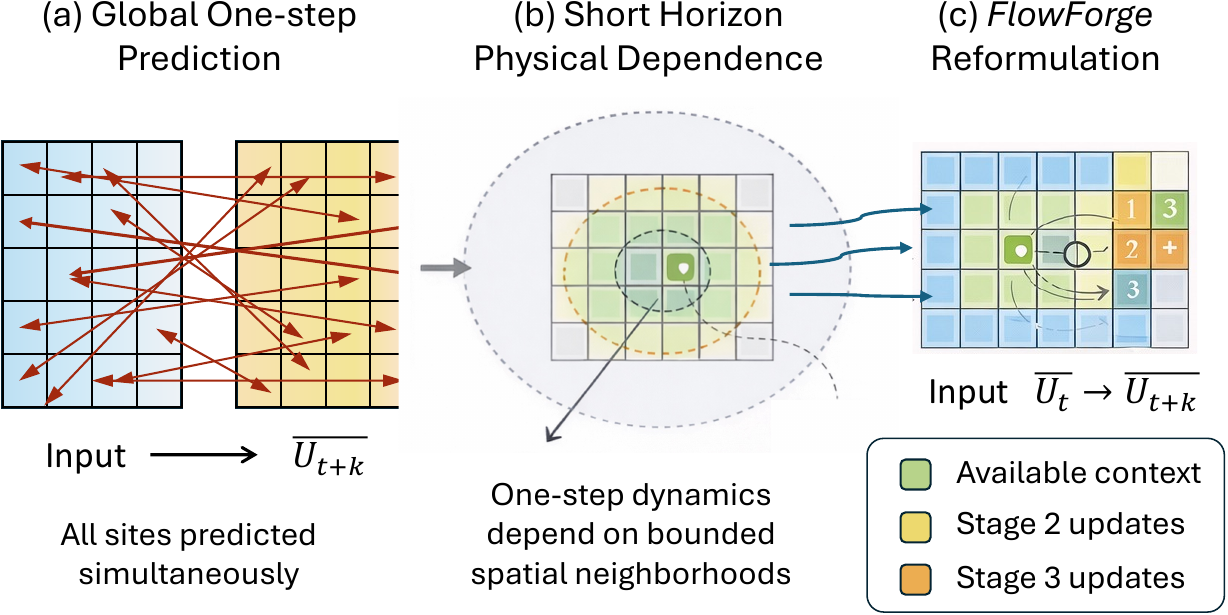}
    \caption{
    From global one-step prediction to locality-preserving rollout.
    (a) Global predictors update all sites simultaneously.
    (b) Short-horizon dynamics exhibit bounded spatial dependence.
    (c) \sys performs a staged local rollout aligned with this structure.
    }

    \label{fig:intro_flowforge}
\end{figure}

\noindent Na\"ively implementing \eqref{eq:intro_factorization} is impractical:
a strict site-by-site rollout would create a long sequential dependency chain,
severely limiting parallelism and amplifying errors across the grid. This raises
a central question: \emph{can we preserve the local physical dependence of the
update while avoiding a fully sequential execution?} In other words, how can we
maintain locality---so each site depends only on nearby values---yet still achieve
fast, predictable inference at scale?

\noindent \sys resolves this tension through a \emph{compile--execute} design.
The \emph{compiler} precomputes a rollout plan that (i) follows a
locality-preserving traversal order, (ii) partitions sites into a small number
of \emph{stages} that can be updated in parallel, and (iii) assigns each site a
bounded neighborhood drawn only from earlier stages. At inference time, the
\emph{executor} performs a stage-by-stage rewrite of the field, applying the
same lightweight local predictor to all sites within a stage in parallel while
conditioning strictly on the plan-defined context. In effect, the compiler
fixes \emph{how} information is allowed to flow across the domain, and the
executor simply executes this structured schedule.

As a result, information and errors propagate only locally rather than globally,
parallelism is exposed in a controlled, stage-wise manner, and each site is
updated using a fixed-size neighborhood. Together, these choices make inference
both hardware-friendly and latency-predictable, even as problem size grows.

The efficient local-rollout design and implementation make \sys fast to run and
resilient to input imperfections. \sys achieves best-or-second-best pointwise accuracy on
10/11 datasets,
while keeping multi-step rollouts stable. Under additive noise and block or
edge-localized masking corruptions, \sys exhibits the smallest increase in
error, particularly near boundaries, and maintains a nearly flat per-point
latency as resolution grows. These results suggest that aligning the prediction
task with short-range physical information flow enables small, stable models to
outperform complex architectures in realistic conditions.

\paragraph{Contributions.}
(1) \textbf{Formulation:} We formulate next-step flow prediction as a
\emph{local rollout}, factorizing
$p(U_{t+k}\!\mid\!U_{t-m+1:t})$ into ordered, neighborhood-conditioned spatial updates.
(2) \textbf{System:} We present \sys, a compile--execute system that enables efficient
and stable training and inference under this formulation.
(3) \textbf{Evaluation:} We empirically validate \sys on CFDBench, PDEBench, and
BubbleML, covering accuracy, rollout behavior, robustness, latency, and ablations.

\section{Staged Local Rollout}
\label{sec:theory}

A one-step update in time-dependent CFD does not depend on the entire spatial
field. Physical transport is constrained by finite propagation speed, and
diffusion spreads information over a bounded distance within a single time step.
Nevertheless, many learned surrogates either predict all locations
simultaneously---implicitly assuming global dependence---or rely on deep
receptive fields, which increases latency and degrades robustness under
perturbations.

Our goal is to realize the spatial autoregressive formulation in
\eqref{eq:intro_factorization} in a way that makes this locality explicit while
retaining parallelism. We do so by predicting the next field through a small
number of sequential stages, where each stage writes a subset of locations in
parallel and later stages are allowed to condition on values written earlier.

\subsection{Staged formulation of local rollout}
\label{sec:rollout_formulation}

Rather than predicting all $N$ spatial locations at once, we introduce an
ordering $\sigma$ over locations and partition it into $K$ disjoint stages
$S_1,\ldots,S_K$. The stages are executed sequentially, but all locations within
a stage are updated in parallel. During this process, we maintain a working
array $Y \approx U_{t+1}$ that is progressively filled.

At stage $s$, the predictor produces values for all locations in $S_s$ using the
current field $U_t$ and the values already written in earlier stages:
\begin{equation}
Y_{S_s} \;=\; G_\theta\!\big(U_t,\;Y_{S_{<s}}\big), \qquad
S_{<s} := \{S_1,\ldots,S_{s-1}\}.
\label{eq:stage_map}
\end{equation}
The update is masked in the sense that predictions at stage $s$ may read
$U_t$ and $Y_{S_{<s}}$, but never entries written in the same or later stages.
Executing stages $1{:}K$ yields a complete prediction $Y$.

This construction reduces the effective sequential dependency depth from $N$
locations to $K \ll N$. The choice of stages therefore directly controls how much
parallelism is exposed and how much predicted context each location can access.
In practice, this choice should reflect the finite spatial extent of one-step
physical dependencies.

\subsection{Physical dependency structure and stage design}
\label{sec:rollout_structure}

We exploit this locality by appealing to the finite spatial extent of one-step
physical dependencies. For instance, in an advection--diffusion system over a
time step $\Delta t$, information is transported along characteristics at speed
at most $c_{\max}$ and diffused over a radius proportional to
$\sqrt{\nu \Delta t}$. Combining these effects yields a conservative bound on the
one-step dependency radius,
\[
r_{\mathrm{req}} \;\approx\; c_{\max}\Delta t \;+\; r_p,
\]
where $r_p$ captures the effective diffusion radius
(Proposition~\ref{prop:cfl}). Beyond this distance, the influence of the current state
on a local update is negligible.

On a uniform grid with spacing $\Delta x$, this implies that predicting a
location only requires access to approximately
\[
R \;\approx\; \left\lceil \tfrac{r_{\mathrm{req}}}{\Delta x} \right\rceil
\]
neighboring rings or shells. A sufficient condition for physical consistency is
therefore that, when a location is updated, earlier stages have already covered
most of this neighborhood (Lemma~\ref{lem:ring-coverage}). This naturally motivates
locality-preserving traversal orders and outward-expanding stage schedules, in
which later stages condition on progressively larger spatial context.

\subsection{Stability and latency implications}
\label{sec:rollout_stability_latency}

With these schedule principles in mind, we next quantify what staging implies
for stability and end-to-end latency. Because later stages condition on a more
complete neighborhood, uncertainty tends to decrease as rollout progresses. If
the stage-restricted map induced by $G_\theta(U_t,\cdot)$ is Lipschitz with constant $L_s$, then a
perturbation introduced at stage $r$ can affect the output of stage $s>r$ by at
most $\prod_{q=r+1}^{s} L_q$ (Proposition~\ref{prop:prodL}). This highlights the benefit
of keeping the number of stages $K$ small and allocating more locations to later
stages, where richer context is available.

Staging also makes latency explicit. In an ideal parallel execution model, if
updating one location costs $t_{\mathrm{loc}}$ and stage $s$ processes $|S_s|$
locations with parallel width $w_s$, the total makespan is
\[
T_{\mathrm{total}}
=\sum_{s=1}^{K}\Big\lceil \tfrac{|S_s|}{w_s}\Big\rceil\,t_{\mathrm{loc}}
\qquad(\text{Proposition}~\ref{prop:makespan}).
\]
The stage schedule therefore provides a principled and tunable trade-off between
conditioning strength, stability, and end-to-end latency.

\section{\sys Design}
\label{sec:methods}

\noindent Section~\ref{sec:theory} motivates predicting $U_{t+1}$ via a \emph{staged local
rollout}: we maintain a working field and overwrite it in stages, with the rule
that predictions in stage $s$ may depend only on values written in stages
$<s$. We adopt this formulation because it makes the dependency structure
explicit and controllable. In particular, it (i) guarantees that within-stage
updates can be parallelized without circular dependencies, (ii) forces each
prediction to use a bounded local neighborhood consistent with the required
one-step dependency radius, and (iii) exposes direct accuracy--latency controls
through the number of stages and the context size. \sys implements this idea as
a compile--execute system: an offline compiler fixes an ordering, stage
boundaries, and per-site context indices; an online executor follows these
tables to produce $\widehat{U}_{t+1}$.

\vspace{0.25em}
\subsection{Overview}
\label{sec:overview}

\noindent \sys separates \emph{planning} from \emph{execution}. The plan
determines what information is visible when predicting each site: it specifies
(i) a traversal order $\sigma$ over spatial sites, (ii) a partition of the
resulting sequence into stages $S_{1:K}$, and (iii) for each site, a fixed list
of at most $H$ earlier indices that may be read as context. The executor is then
a deterministic staged rewrite of a working buffer: for stage $s$, it gathers
the plan-specified context for all $i\in S_s$, applies a shared local predictor
$G_\theta$ in batch, and writes the results back. Because the plan only allows
reads from earlier stages, causality is enforced by construction and no runtime
masking checks are required.

\noindent This structure positions the remaining subsections. Section~\ref{sec:executor}
formalizes the executor given a plan. Section~\ref{sec:compiler} describes how the plan
and lookup tables are synthesized to balance neighborhood coverage against
sequential depth. Section~\ref{sec:training} aligns training with the same plan-defined
inputs to avoid rollout mismatch. Section~\ref{sec:scalability} extends the same
mechanism to high-resolution settings.

\begin{figure}
    \centering \includegraphics[width=\buildfigurewidth{\linewidth}]{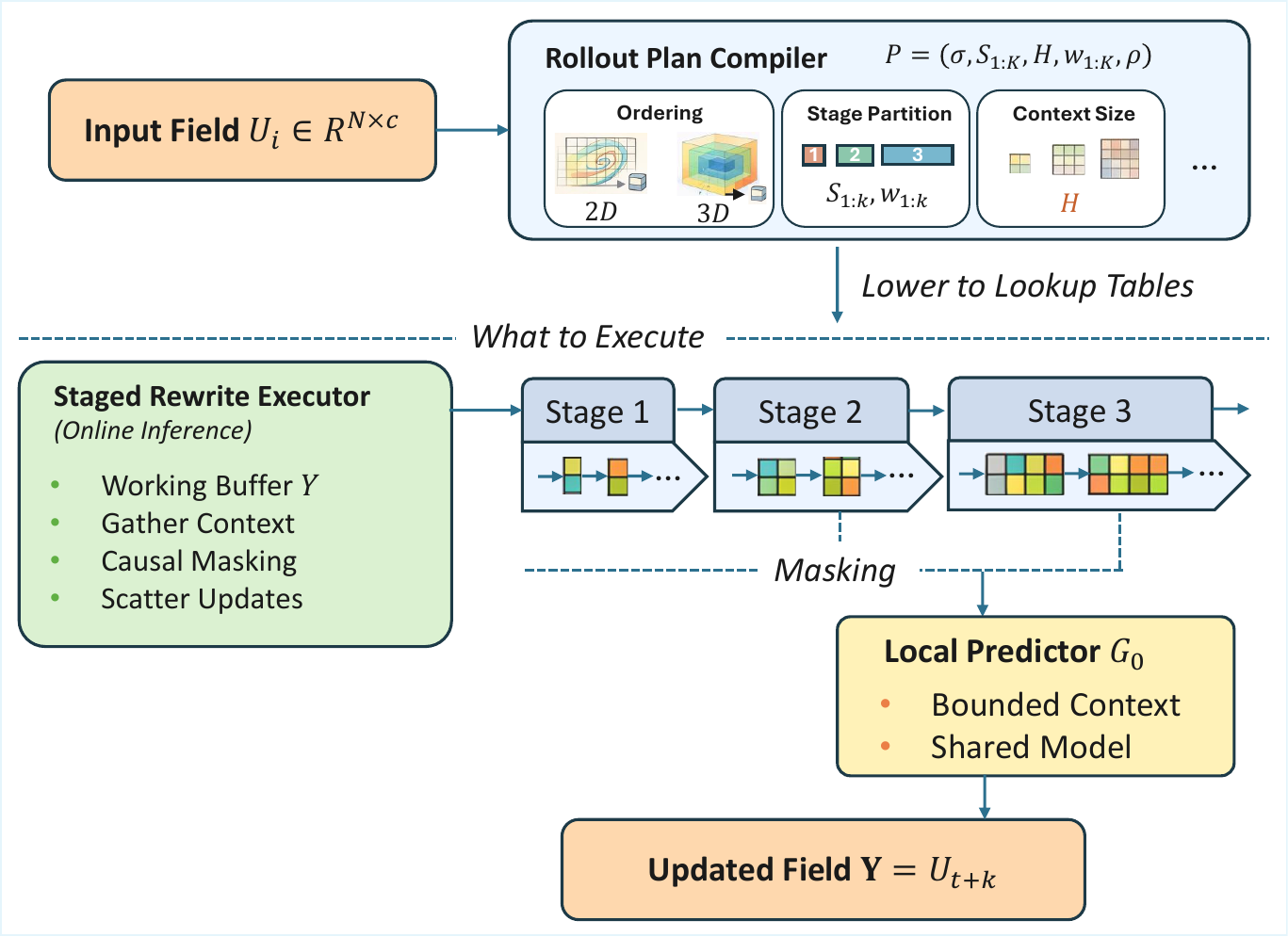}
    \caption{\sys workflow. Offline, we compile a rollout plan and lower it into
    index tables. Online, the executor overwrites a working buffer stage by stage
    using a shared local predictor $G_\theta$, producing $\widehat{U}_{t+1}$.}
    \label{fig:framework}
\end{figure}

\vspace{0.25em}
\subsection{Staged Rewrite Executor}
\label{sec:executor}

\noindent At inference time, the executor maintains a working buffer in the
compiled rollout order and rewrites it in $K$ stages. Let $\sigma$ be the
compiled ordering of spatial sites, and let
$\widetilde{U}_t := \mathrm{concat}(U_{t-m+1},\ldots,U_t)\in\mathbb{R}^{N\times (mc)}$
be the channel-stacked history window (we write $\widetilde{U}_t$ as $U_t$
below). The buffer is a sequence
\[
Y \;=\; (y_1,\ldots,y_N),\qquad y_i \in \mathbb{R}^c,
\]
where $y_i$ corresponds to site $\sigma(i)$ and is initialized from the current
field,
\[
y_i \leftarrow u_t^{(\sigma(i))}\qquad \text{for } i=1,\ldots,N,
\]
which is subsequently overwritten into the prediction $\widehat{U}_{t+1}$.

\noindent The rollout plan partitions indices $1{:}N$ into disjoint stages
$S_1,\ldots,S_K$ executed sequentially. Each stage updates all sites in $S_s$
in parallel, subject to a causality constraint: a site in stage $s$ may only
read values written in earlier stages. For each $i\in S_s$, the compiler
constructs a context set $\mathrm{ctx}(i)$ satisfying
\[
\mathrm{ctx}(i) \subseteq \bigcup_{r=1}^{s-1} S_r.
\]
The plan is lowered into index tables so the executor can gather context without
runtime decisions. At stage $s$, the executor evaluates, for all $i\in S_s$,
\begin{equation}
\label{eq:executor_stage}
  \widehat{y}_i \;=\; G_\theta\!\Big(\Phi(U_t,\sigma(i)),\;\Psi(Y;\,i),\;\mathrm{pos}(\sigma(i))\Big),
\end{equation}
and then commits the results by writing them back to the working buffer:
\begin{equation}
\label{eq:executor_scatter}
y_i \leftarrow \widehat{y}_i,\qquad i\in S_s.
\end{equation}

\noindent We use a single shared local predictor $G_\theta$ across all stages:
the update rule is stationary, and stages differ only in how much previously
written neighborhood is exposed by $\Psi(\cdot)$. The causality constraint
ensures correctness of within-stage parallelism: because every $\Psi(Y;i)$
indexes only $\cup_{r<s}S_r$, all context values are already written and remain
fixed while stage $s$ is computed, so no prediction in $S_s$ depends on another
prediction in $S_s$. Consequently, the executor can batch all sites in $S_s$
and evaluate~\eqref{eq:executor_stage} in parallel without circular
dependencies or runtime masking.

\noindent Operationally, each stage follows a three-step loop:
(1) gather a bounded set of context vectors from $Y$ using the precomputed
indices, (2) apply $G_\theta$ to the resulting batch, and (3) scatter the
outputs back into $Y$. This exposes two direct runtime controls: increasing the
number of stages $K$ increases sequential depth, while increasing the context
budget $H$ increases available information per update at the cost of additional
memory traffic. Because both quantities are fixed by the plan, inference cost is
predictable.

\subsection{Rollout Plan Compiler}
\label{sec:compiler}

\noindent After choosing the executor, the main design question is how to choose a plan
that gives each prediction enough local information without making the rollout
too sequential. The compiler outputs
\[
\mathcal{P} \;=\; (\sigma,\; S_{1:K},\; H,\; w_{1:K})
\]
Here $\sigma$ is the order of sites, $S_{1:K}$ is the stage partition in that
order, $H$ is the maximum number of context sites used by each prediction, and
$w_s$ is the batch width used when executing stage $s$. Optionally, we can also apply a coarse-to-fine strategy with factor $\rho$ to downscale the grid
and upsample back to the native grid (Section~\ref{sec:scalability}); we treat $\rho$ as an execution option rather than part of $\mathcal{P}$.

\noindent We implement the compiler in two steps. First, we choose $\sigma$ and
$S_{1:K}$ to make already-written neighbors available at decode time.
Section~\ref{sec:theory} motivates a required dependency radius $r_{\mathrm{req}}$; on
a grid with spacing $\Delta x$ this corresponds to roughly
$R \approx \lceil r_{\mathrm{req}}/\Delta x \rceil$ rings (2D) or shells (3D).
Intuitively, we want a site to be decoded after many sites in its $R$-ring
neighborhood have already been written. Second, once $\sigma$ and $S_{1:K}$ are
fixed, we precompute simple index tables so the executor can read context and
write outputs without runtime decisions.

\noindent In this work we use an Outward Spiral traversal: starting from a seed location, we visit spatially adjacent sites
while expanding the visited region outward. This ordering is easy to implement,
keeps nearby sites close in the sequence, and naturally increases the fraction
of already-written neighbors as the rollout progresses. We then use a simple,
predictable stage template (equal-sized by default), which makes $K$ a direct
knob for latency.

\begin{algorithm}[t]
\caption{Two-step compiler for \sys}
\label{alg:compiler}
\begin{algorithmic}[1]
\REQUIRE Grid sites $\mathcal{V}$ (size $N$), spacing $\Delta x$, required radius $r_{\mathrm{req}}$,
number of stages $K$ (or stage size), context budget $H$
\ENSURE Rollout plan $\mathcal{P}$ and index tables $\mathcal{T}$
\STATE $R \gets \lceil r_{\mathrm{req}}/\Delta x \rceil$
\STATE Choose traversal $\sigma$ (Outward Spiral from seed)
\STATE Partition $1{:}N$ into stages $S_1,\ldots,S_K$ (e.g., equal-sized)
\STATE Build stage tables $(\texttt{stage\_ptr},\texttt{stage\_idx})$ for iterating stage indices
\FOR{$i=1$ to $N$}
  \STATE Find candidate spatial neighbors of site $\sigma(i)$ within $R$ rings/shells
  \STATE Let $s$ be the stage containing $i$ (i.e., $i \in S_s$)
  \STATE Keep up to $H$ neighbors $j$ such that $j < i$ \textbf{and} $j \in \cup_{r=1}^{s-1} S_r$ (i.e., $j$ is in an earlier stage)
  \STATE Store them in $\texttt{ctx\_idx}[i,1{:}H]$ (and a mask if fewer than $H$ exist)
\ENDFOR
\STATE Build per-site metadata tables for $\mathrm{pos}(\cdot)$ and $\Phi(\cdot)$
\STATE \textbf{return} $\mathcal{P}=(\sigma,S_{1:K},H,w_{1:K})$ and $\mathcal{T}$
\end{algorithmic}
\end{algorithm}

\noindent The resulting executor runtime can be estimated under the same
idealized model as Section~\ref{sec:theory}. Because stages run sequentially, total
time is approximately
\[
T_{\mathrm{total}}(\mathcal{P}) \;=\;
\sum_{s=1}^{K}\Big\lceil \tfrac{|S_s|}{w_s}\Big\rceil\,t_{\mathrm{loc}}(H),
\]
where $t_{\mathrm{loc}}(H)$ summarizes the per-batch cost of reading $H$ context
vectors and running $G_\theta$. 

This analytical model defines the feasible search space of configurations satisfying a latency budget $T_{\mathrm{budget}}$. Within this feasible set, we employ a physics-informed selection strategy to identify the optimal plan $\mathcal{P}^*$. Specifically, we execute candidate configurations on a validation subset and minimize a composite objective:
\begin{equation*}
    \mathcal{L}_{\text{total}}(\mathcal{P}) = \sum_{m \in \mathcal{M}} \lambda_m \frac{\mathcal{E}_m(\mathcal{P})}{\mu_m}.
\end{equation*}
Here, $\mathcal{M} = \{\text{RMSE}, \text{Vor}, \text{Spec}, \text{Div}\}$ denotes the set of error metrics (see Appendix~\ref{app:metrics} for the metric definitions). The term $\mathcal{E}_m(\mathcal{P})$ represents the error computed for plan $\mathcal{P}$ on metric $m$, and $\mu_m$ is the mean error across all candidates used for normalization. We assign weights $\lambda_{\text{RMSE}}=\lambda_{\text{Vor}}=\lambda_{\text{Spec}}=0.3$ to balance prediction accuracy, physical structure preservation, and spectral fidelity, and $\lambda_{\text{Div}}=0.1$ as a soft physical conservation constraint. These weights were chosen to prioritize pointwise accuracy and physical structure while ensuring approximate mass conservation. This optimization is performed offline, and the resulting compiled tables are reused across all rollouts.

\vspace{0.25em}
\subsection{Training Without Rollout Mismatch}
\label{sec:training}

\noindent The staged rollout enforces a strict information pattern: each site is
predicted from its current-state features plus a bounded neighborhood drawn only
from earlier stages (Section~\ref{sec:executor}). A naive supervised setup can
accidentally leak \emph{future-stage} information during training, creating a
train--test mismatch that amplifies under staged rollout. We avoid this by
training $G_\theta$ with the \emph{same plan-defined inputs} as inference (see
Appendix~\ref{app:plan_aligned_training} for details).

\noindent Concretely, let $\mathbf{v}^\star_{1:N} =
(u^{(\sigma(1))}_{t+1},\ldots,u^{(\sigma(N))}_{t+1})$ be the ground-truth
next-step sequence under order $\sigma$. For each index $i$, we form:
\[
x_i^\star \;=\;
\Big[\;\Phi(U_t,\sigma(i)),\;
\Psi\!\big(\mathbf{v}^\star;\,i\big),\;
\mathrm{pos}(\sigma(i))\;\Big]
\quad \longmapsto \quad \mathbf{v}^\star_i,
\]
and minimize mean-squared error over $i=1{:}N$.

\vspace{0.25em}
\subsection{Scalability and Model Backend}
\label{sec:scalability}

\noindent Inference cost scales with the number of sites $N$, but \sys exposes
simple knobs that keep latency predictable: bounded per-site context and a fixed
number of sequential stages. For very high-resolution grids, we optionally use a
coarse-to-fine execution path that shortens the effective sequence length; and
once the plan fixes the information flow, a lightweight predictor backend is
sufficient. We defer these engineering details to
Appendix~\ref{app:scalability_backend}. In Section~\ref{sec:results}, we show that
this design achieves strong accuracy with low and controllable latency.

\section{Results}
\label{sec:results}
\subsection{Prediction Accuracy Analysis}
\label{sec:accuracy_analysis}

\begin{table*}[htbp]
    \centering
    \caption{Comparison of different models. Best values are \textbf{bolded}, second best are \underline{underlined}. Entries marked with $^{*}$ do not have official hyperparameters reported in the original benchmarks, and we run a short grid search for 5 epochs over canonical architectural choices.}
    \label{tab:metrics_comparison}
    \resizebox{\textwidth}{!}{%
        \setlength{\tabcolsep}{4pt}
        \begin{tabular}{l|cccc|cccc|cccc|cccc|}
        \toprule
        \multicolumn{17}{l}{\textit{CFDBench}} \\
        Dataset & \multicolumn{4}{c|}{Tube} & \multicolumn{4}{c|}{Cylinder} & \multicolumn{4}{c|}{Dam} & \multicolumn{4}{c|}{Cavity} \\
        Model & RMSE & Vor & Spec & Div & RMSE & Vor & Spec & Div & RMSE & Vor & Spec & Div & RMSE & Vor & Spec & Div \\
        \midrule
        FNO & \underline{0.1156} & \underline{0.5330} & 1.2647 & 0.2396 & 0.0915 & 0.1466 & 0.5345 & 0.2046 & \underline{0.0491} & \textbf{0.1705} & 0.5225 & 0.1070 & 0.1911 & 0.0921 & 0.4822 & 0.0650 \\
        U-Net & 0.1461 & 0.7464 & 1.5991 & 0.2402 & 0.0915 & 0.1466 & 0.5289 & 0.2044 & 0.0494 & \underline{0.1742} & \underline{0.5136} & 0.1069 & \textbf{0.1456} & \underline{0.0715} & \underline{0.3608} & \underline{0.0342} \\
        DeepONet & 0.3673 & 0.8676 & \underline{0.8279} & \underline{0.0601} & \underline{0.0709} & \underline{0.1173} & \textbf{0.2193} & \underline{0.0346} & 0.0772 & 0.5063 & \textbf{0.4794} & \underline{0.0347} & 0.2710 & 0.0879 & \textbf{0.3525} & \textbf{0.0301} \\
        CNO$^{*}$ & 0.1873 & 0.8997 & 1.5403 & 0.2306 & 0.0924 & 0.1501 & 0.5583 & 0.2056 & 0.0955 & 0.3624 & 0.6372 & 0.1120 & 0.2385 & 0.0928 & 0.4895 & 0.0587 \\
        UNO$^{*}$ & 0.1301 & 0.7057 & 1.4948 & 0.2272 & 0.1526 & 0.3376 & 0.8219 & 0.2128 & 0.0691 & 0.3458 & 1.1906 & 0.1109 & \underline{0.1476} & \textbf{0.0693} & 0.3883 & 0.0390 \\
        \sys & \textbf{0.1035} & \textbf{0.3326} & \textbf{0.8208} & \textbf{0.0205} & \textbf{0.0258} & \textbf{0.0806} & \underline{0.4325} & \textbf{0.0215} & \textbf{0.0222} & 0.2310 & 0.5303 & \textbf{0.0173} & 0.2949 & 0.2076 & 0.6573 & 0.0919 \\
        \midrule
        \multicolumn{17}{l}{\textit{PDEBench}} \\
        Dataset & \multicolumn{4}{c|}{Diff-React} & \multicolumn{4}{c|}{Rand-M0.1} & \multicolumn{4}{c|}{Rand-M1.0} & \multicolumn{4}{c|}{} \\
        Model & RMSE & Vor & Spec & Div & RMSE & Vor & Spec & Div & RMSE & Vor & Spec & Div & & & & \\[0.4ex]
        \cline{1-13}
        \noalign{\vskip 0.4ex}
        FNO & \underline{0.0850} & \underline{1.0010} & 16.9993 & 2.4304 & \underline{0.1370} & \textbf{0.3684} & \textbf{2.5852} & \underline{5.0081} & \textbf{0.1241} & \textbf{0.7569} & \textbf{4.3602} & \underline{10.5203} & & & & \\
        U-Net & 0.0871 & \textbf{0.7544} & \textbf{1.5762} & \underline{1.8133} & 0.9253 & \underline{1.0838} & 3.9059 & 14.6274 & 1.3545 & \underline{1.2576} & 6.4127 & 20.9948 & & & & \\
        \sys & \textbf{0.0837} & 1.0634 & \underline{1.6317} & \textbf{0.0422} & \textbf{0.0451} & 1.2696 & \underline{2.8513} & \textbf{0.0301} & \underline{0.1351} & 3.4283 & \underline{5.7483} & \textbf{0.0938} & & & & \\
        \midrule
        \multicolumn{17}{l}{\textit{BubbleML}} \\
        Dataset & \multicolumn{4}{c|}{FB-Gravity} & \multicolumn{4}{c|}{PB-Gravity} & \multicolumn{4}{c|}{PB-Subcooled} & \multicolumn{4}{c|}{FB-VelScale} \\
        Model & RMSE & Vor & Spec & Div & RMSE & Vor & Spec & Div & RMSE & Vor & Spec & Div & RMSE & Vor & Spec & Div \\
        \midrule
        FNO & \underline{0.4600} & \underline{1.1274} & \underline{2.5466} & \underline{79.4190} & 0.3304 & \textbf{1.0191} & 2.5540 & \underline{9.7240} & \underline{0.4922} & \textbf{1.0122} & 2.5957 & \underline{8.1798} & \underline{0.9042} & \underline{1.3128} & \textbf{2.0251} & \underline{192.8551} \\
        U-Net & 122.3470$^{*}$ & 512.2977$^{*}$ & 3.3801$^{*}$ & $6.59\times 10^{4}$\,$^{*}$ & \textbf{0.2873} & \underline{1.0467} & \underline{1.9896} & 12.2452 & 0.5030 & \underline{1.0462} & \underline{1.9598} & 12.3974 & 9.8684$^{*}$ & 6.0919$^{*}$ & 3.5103$^{*}$ & $1.35\times 10^{3}$\,$^{*}$ \\
        \sys & \textbf{0.1869} & \textbf{0.9843} & \textbf{2.2159} & \textbf{0.0240} & \underline{0.2898} & 1.0753 & \textbf{1.4567} & \textbf{0.0289} & \textbf{0.3798} & 1.1878 & \textbf{1.5041} & \textbf{0.0371} & \textbf{0.5382} & \textbf{0.9678} & \underline{2.2434} & \textbf{0.0421} \\
        \bottomrule
        \end{tabular}%
    }
\end{table*}

We compare \sys with three representative surrogate families: DeepONet, U-Net,
and FNO, and additionally include CNO/UNO where applicable. All models are
trained for 48 hours on the same hardware with identical train/validation
splits. For \sys, we use the compiler-selected configuration
(Section~\ref{sec:compiler}). For each baseline, we start from the official
reference implementation and tune only standard hyperparameters (learning rate,
batch size, and scheduler) within the same 48-hour budget
(Appendix~\ref{sec:appendix_implementation}). For configurations not covered in the
original benchmarks (e.g., U-Net with FB-Gravity/FB-VelScale or CNO/UNO on
CFDBench), we run a short 5-epoch grid search over canonical architectural
choices and select the best-validation-RMSE setting (see
Appendix~\ref{app:baseline_grid_search} for search spaces and final configurations).
If a method fails to reach a stable training loss within the budget, we report
N/A to reflect that it cannot produce a usable model under this constraint.

We evaluate on eleven datasets from CFDBench, PDEBench, and BubbleML spanning
diverse flow regimes, boundary conditions, and grid resolutions
(Appendix~\ref{app:datasets}). We report RMSE alongside three
physics-oriented diagnostics: vorticity error (rotational structure), spectral
error (frequency content), and divergence error (mass-conservation proxy), all
summarized in \autoref{tab:metrics_comparison}. Beyond one-step evaluation, we
also measure \emph{multi-step} rollout behavior by iteratively feeding
predictions back as inputs and tracking the same diagnostics over time (see
Appendix~\ref{app:multistep_analysis} and \autoref{fig:multistep}).

\sys achieves the best or second-best RMSE on 10/11 datasets, showing that
staged rollout preserves pointwise accuracy relative to strong one-shot
baselines. Its main advantage appears in physical consistency: \sys attains the
lowest divergence error on 10/11 datasets (all except Lid-driven Cavity), often
by a wide margin. This pattern aligns with the staged local rollout mechanism:
each update conditions on an expanding local neighborhood and tends to produce
compatible increments that accumulate fewer global inconsistencies.

The same analysis also surfaces a limitation. On Lid-driven Cavity, \sys is
sub-optimal in RMSE and auxiliary metrics. This dataset is dominated by global
coupling from the elliptic pressure solve in a closed domain: pressure
constraints propagate almost instantaneously across the domain and errors can
recirculate through the boundary. Such dynamics are poorly matched to a purely
local, forward-conditioned rollout that processes sites sequentially, but more
aligned with global operator baselines that directly capture domain-wide
pressure coupling.

\begin{figure}[t]
    \centering \includegraphics[width=\buildfigurewidth{1.0\linewidth}]{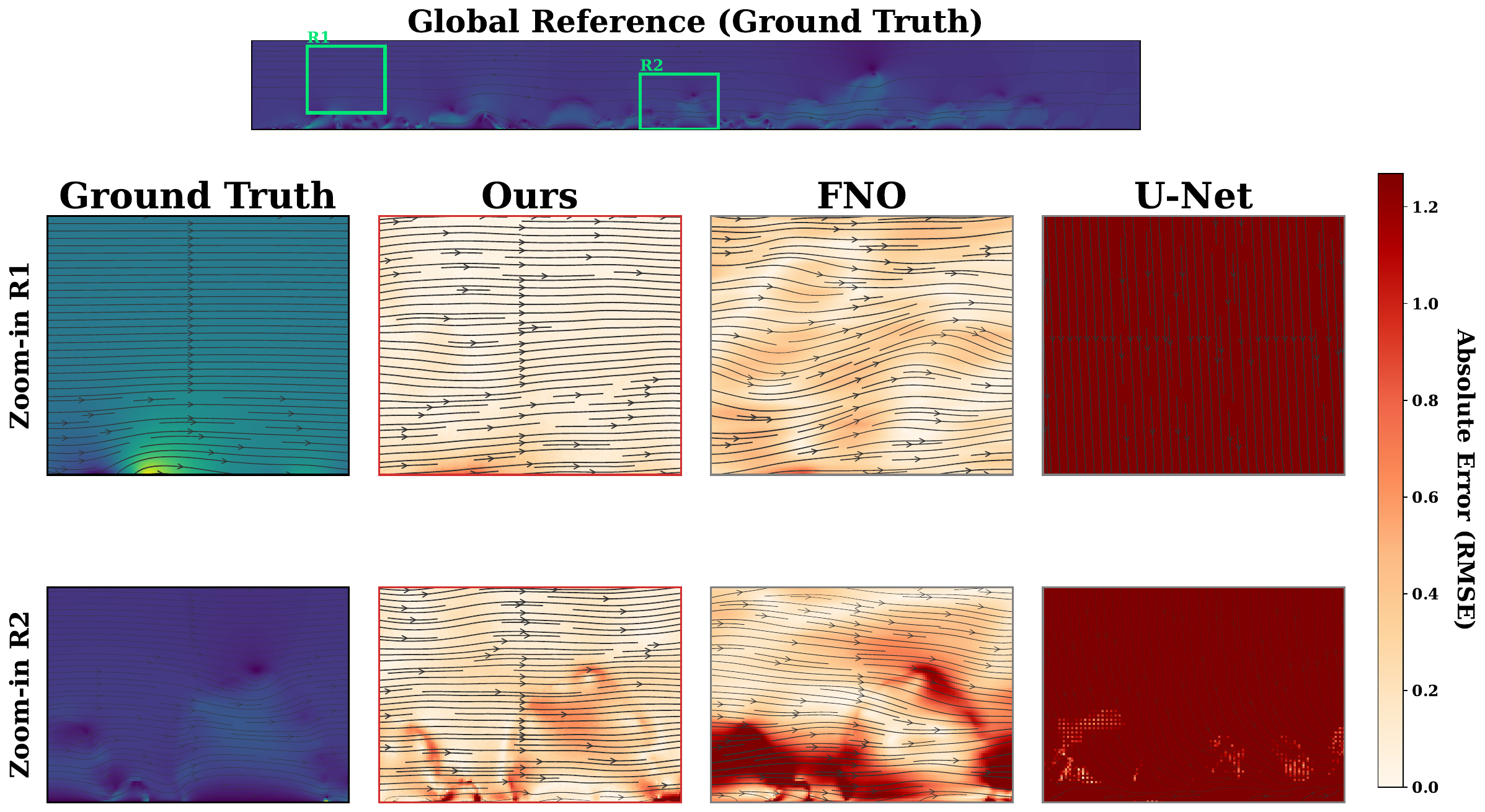}
    \caption{Streamline prediction visualizations for the FB-Gravity dataset.}
    \label{fig:dam_qualitative}
\end{figure}

\autoref{fig:dam_qualitative} zooms into regions of strong shear and vortex
shedding, where small phase or amplitude errors quickly become visually
salient. \sys closely matches the ground-truth streamlines in both regions: in
R1, predicted streamlines are smooth and follow the main flow, while in R2,
vortex cores are captured with little distortion. Errors are sparse and remain
confined to high-vorticity areas, indicating that staged local rollout limits
their spread. By contrast, FNO preserves the bulk flow but blurs
high-frequency structures near interfaces due to global spectral mixing, and
U-Net produces overly uniform streamlines with widespread artifacts, likely
from amplifying local uncertainty across the domain. Overall, this case study
shows that the benefit of \sys is not just lower aggregate error but also
better error geometry: errors are localized, physically interpretable, and tied
to genuine modeling difficulty rather than artifacts of global coupling.

\subsection{Robustness to Input Corruptions} 
\label{sec:robustness} 

We evaluate robustness on CFDBench under Gaussian noise and spatial masking, applied both globally and at edges. We aggregate losses across a range of corruption levels (see Appendix~\ref{sec:appendix_robustness} for details) and normalize them to account for scale. \autoref{fig:robustness_comparison} shows that \sys incurs
the smallest error increase across nearly all perturbations. The advantage is most
pronounced under spatial masking, particularly at boundaries, where global
single-pass predictors suffer substantial error amplification. This behavior
reflects \sys's locality-preserving rollout, which confines corrupted
inputs to bounded neighborhoods and limits cross-domain error propagation.

\begin{figure}[t]
    \centering
    \includegraphics[width=\buildfigurewidth{0.8\linewidth}]{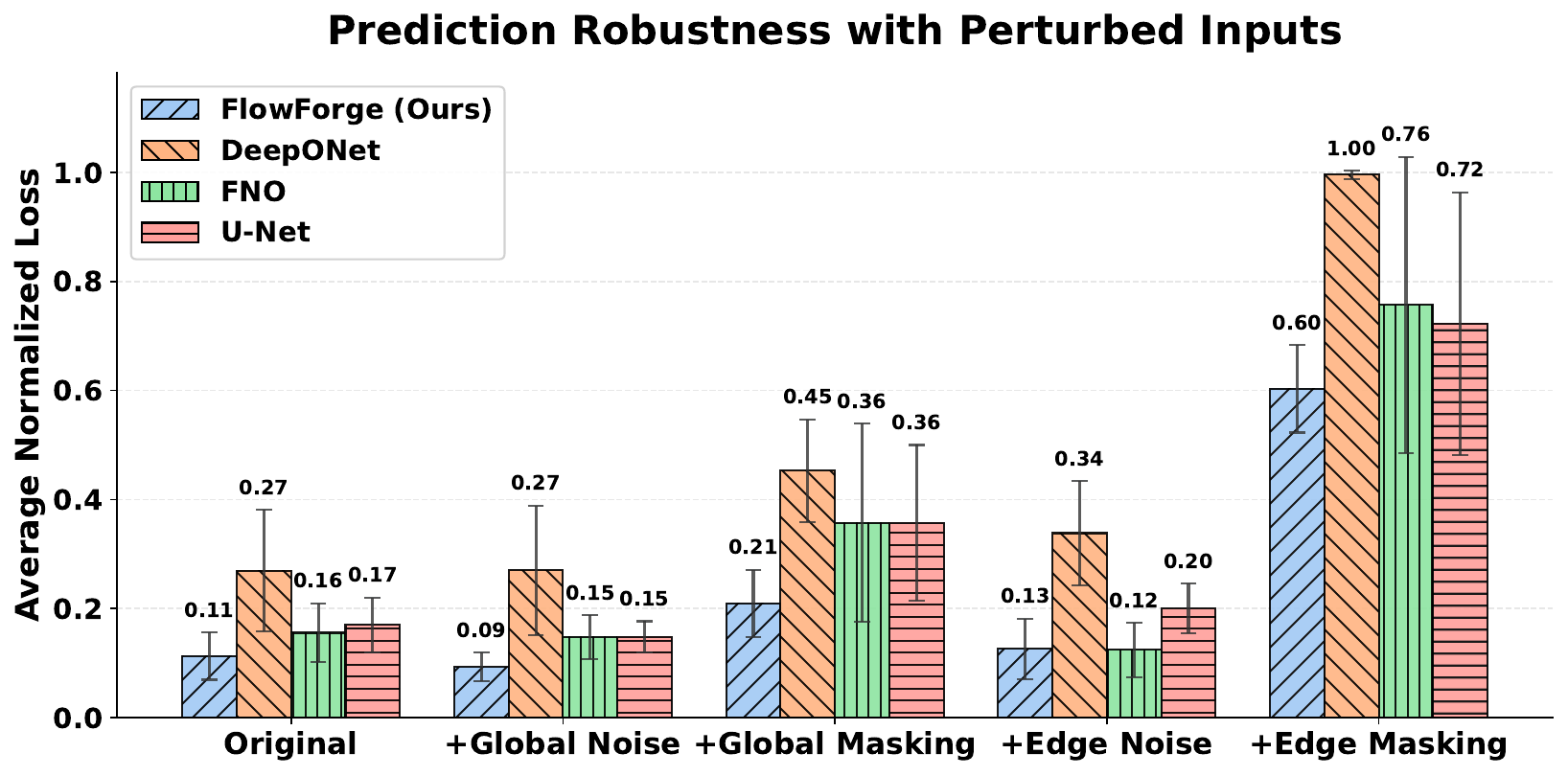}
    \caption{Quantitative evaluation of model robustness under input perturbations. RMSE losses are max-scaled normalized relative to the worst-performing case per dataset. Bar heights indicate the average normalized loss, and error bars represent the standard deviation.} 
    \label{fig:robustness_comparison}
\end{figure}

To make these effects concrete, \autoref{fig:robustness_case_study} shows 
a representative CFDBench example under strong global block masking corruption.
We apply a global mask composed of contiguous spatial blocks and compare
the reconstructed fields. For FNO, U-Net, and DeepONet, the block pattern 
of the mask is clearly imprinted in the predictions: masked regions reappear 
as patchy artifacts or over-smoothed blobs, and the corruption bleeds into 
neighboring, originally clean areas. In contrast, \sys produces a reconstruction 
that is visually close to the ground truth, with almost no visible trace of the mask 
and errors confined to a narrow band surrounding the occluded blocks.

\begin{figure}[t]
    \centering
    \includegraphics[width=\buildfigurewidth{0.8\linewidth}]{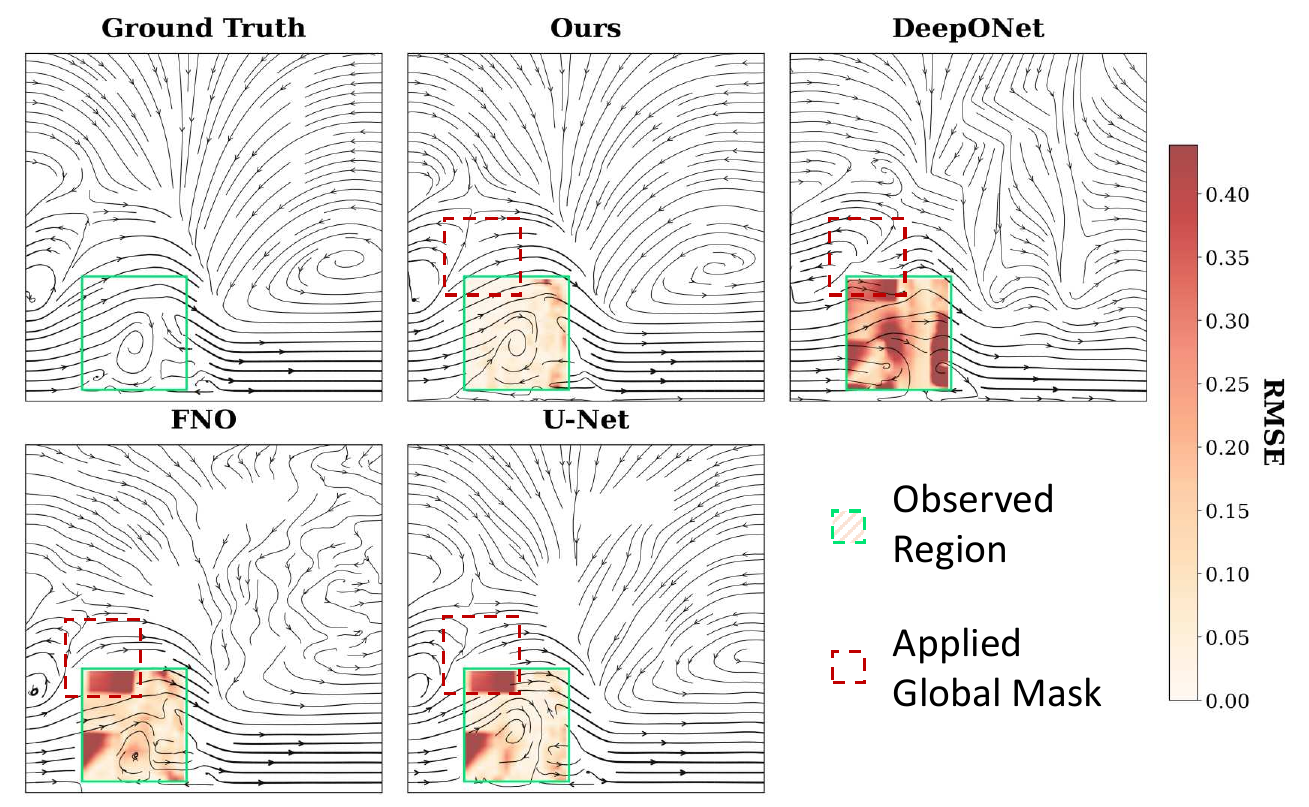}
    \caption{Robustness case study under global block masking corruption.}
    \label{fig:robustness_case_study}
\end{figure}

\subsection{Latency Analysis}
\label{sec:latency_analysis}

\begin{figure}
    \centering
    \includegraphics[width=\buildfigurewidth{0.8\linewidth}]{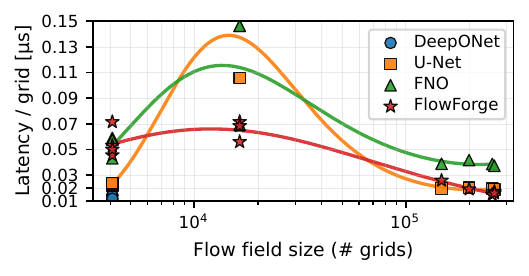}
    \caption{Latency comparison across datasets of increasing resolution. Lower is better.}
    \label{fig:latency}
\end{figure}

We report inference latency for predicting 100 future frames, normalized by the
number of spatial grid points. The normalization
captures how efficiently each method processes a single spatial location as the
problem size grows. \autoref{fig:latency} summarizes results on datasets of
increasing resolution.

\sys exhibits a relatively stable per-point latency across problem sizes.
This behavior is expected given its design: inference is dominated by local,
staged updates whose computational footprint does not grow with the global grid
size. In addition, the \sys compiler explicitly considers latency during
configuration selection, favoring stage schedules and tilings that align well
with the target hardware. As resolution increases, the normalized latency can
decrease because larger grids provide more parallel work per launch and
better amortize fixed overheads, improving hardware utilization.

In contrast, U-Net and FNO show a pronounced increase in normalized latency at
medium grid sizes (around $128\times128$), followed by partial recovery at
larger resolutions. This pattern reflects the cost of global operations---deep
encoder--decoder passes or full-field Fourier transforms---which are inefficient
when intermediate tensors exceed cache capacity but are not yet large enough to
fully amortize kernel overhead. While these methods can become more efficient at
very large scales, the intermediate regime incurs a substantial latency penalty.
DeepONet achieves low per-point latency on small grids but does not scale well to
larger resolutions, as its inference cost grows linearly with the number of
queried spatial locations and lacks reuse across neighboring points. We
therefore restrict DeepONet to coarse grids where it is computationally
competitive.

\subsection{Ablation Study} \label{sec:ablation_study}

\begin{table}[htbp]
    \centering
    \caption{\textbf{Ablation study of key design components on CFDBench-Cylinder.}
    The base configuration is our best setting on this dataset: $K{=}128$, $\sigma{=}$Outward Spiral, and local predictor $G_\theta{=}$MLP.}
    \label{tab:ablation_study} 
    \begin{tabular}{llcc}
        \toprule 
        \textbf{Block} & \textbf{Setting} & \textbf{RMSE} & \textbf{Latency (ms)} \\ 
        \midrule 
        \rowcolor{gray!10}
        Base & \textbf{FlowForge} & 0.0258 & 16.16 \\
        \midrule
        Struct. & One-shot (no rollout) & 0.3457 & 10.47 \\
        \midrule
        $K$ & 4   & 0.0403 & 14.79 \\
           & 16  & 0.0364 & 14.93 \\ 
           & 64  & 0.0291 & 14.88 \\ 
           & 256 & 0.0242 & 16.97 \\
        \midrule 
        $\sigma$ & Raster Scan & 0.0262 & 15.96 \\
                & Random & 0.0265 & 16.64 \\ 
                & Hilbert Curve & 0.0260 & 17.57 \\
        \midrule 
        $G_\theta$ & CNN & 0.3401 & 20.62 \\
                 & FFN & 0.8469 & 21.92 \\
        \bottomrule
    \end{tabular}
    \vspace{1mm}
\end{table}

We ablate core design choices of \sys on CFDBench-Cylinder: (i) whether to perform
a staged within-step rollout or predict all sites in a single parallel pass,
(ii) the rollout granularity $K$, (iii) the traversal order $\sigma$, and (iv) the
local predictor backbone $G_\theta$. \autoref{tab:ablation_study} reports the best
configuration identified by our search (highlighted) together with controlled
variations of each component.

\textbf{Is FlowForge merely enforcing locality? (Local one-shot control).}
One concern is that the observed gains may stem solely from restricting each
prediction to a local neighborhood, rather than from the staged within-step
rollout itself. To disentangle these effects, we introduce a \emph{local
one-shot} baseline that uses the same compiler-selected local context (up to $H$
neighbors per site) and identical input features, but predicts all sites in a
single parallel pass conditioned only on $U_t$ (i.e., without reading from the
partially updated buffer $Y$). Despite preserving locality, this baseline
performs substantially worse (RMSE $0.3457$ vs.\ $0.0258$), indicating that
locality alone without staged rollout is insufficient.

\textbf{Effect of stage granularity ($K$).}
Varying $K$ from $4$ (quasi-parallel) to $256$ (highly sequential) reveals a
consistent accuracy--latency trade-off. Increasing $K$ reduces RMSE by providing
stronger within-step conditioning: more sites can leverage already-updated local
values in $Y$. Latency stays nearly flat up to $K{=}64$ and increases only when
the rollout becomes highly sequential ($K{=}256$), where reduced parallelism
begins to dominate runtime.

\textbf{Effect of traversal order ($\sigma$).}
Orders that preserve locality (Outward Spiral, Raster Scan, Hilbert Curve) yield very similar RMSE,
and even Random is only slightly worse. This indicates that, under our bounded
context selection, performance is governed primarily by \emph{having access to
nearby predecessors}, while the exact geometric path is secondary as long as the
ordering does not systematically deprive updates of local, already-generated
neighbors.

\textbf{Local predictor backbone ($G_\theta$).}
Keeping the staged executor fixed, an MLP is the best-performing local predictor
in both accuracy and latency. CNN/FFN alternatives are substantially worse in
this setting, suggesting that for our compiler-produced sparse predecessor lists
(a fixed set of indexed neighbors rather than a dense image patch), additional
architectural machinery does not translate into better next-step updates and can
incur extra overhead. We therefore use the MLP backbone throughout the paper.

\section{Related Work}
Our work lies at the intersection of deep learning for scientific simulation and sequential modeling. We contrast our paradigm of problem reconstruction with three prevailing approaches in flow field prediction.

\paragraph{Time-series PDE forecasting via learned time-stepping.}
A common approach to PDE forecasting is to learn a one-step time-stepper 
$U_t\!\mapsto\!\widehat{U}_{t+k}$ and roll it out autoregressively, 
making stability, error accumulation, and per-step cost central concerns. 
Recent work spans large-scale operator- and transformer-based autoregressive models
(e.g., DPOT \cite{dpot}, PDEformer \cite{pdeformer}, Unisolver \cite{zhou2025unisolver}), continuous 
space--time representations with symmetry structure
\cite{equivariant-neural-fields}, and analyses that improve
rollout stability via message passing, multiscale stepping, or memory 
\cite{brandstetter2022message,mccabe2023ar_stability,hemmasian2023multiscale,buitrago2025memno,diab2025tno}. 
These methods primarily shape the \emph{temporal semantics} of prediction---how the
one-step map composes across physical time steps---while treating each update as an
atomic operation. In contrast, \sys preserves the same temporal formulation but
intervenes \emph{within} a step: the update is decomposed into ordered, staged
sub-updates so that partial predictions become available as context during the
same physical step.

\paragraph{Backbones for learned time-stepping: neural operators and operator transformers.}
Most learned PDE forecasters implement the one-step map $U_t\!\mapsto\!\widehat{U}_{t+1}$ 
with an \emph{operator-learning} backbone, including classical neural operators 
(e.g., Fourier-/kernel-based operators and operator regression) \cite{fno,deeponet,neuralop_gkn}, their
geometric or multiscale variants (e.g., Geo-FNO, GINO, CNO, factorized FNO, UNO) \cite{geofno,gino,cno,ffno,uno},
and transformer-style operator or PDE-conditioned models
(e.g., GNOT, ONO, HT-Net, Transolver, HAMLET, Poseidon, PDEformer, CORAL) \cite{gnot,ono,htnet,transolver,hamlet,poseidon,pdeformer,coral}. 
These works
are \emph{backbone-centric}: they expand the function class and receptive-field
structure of the predictor while keeping within-step execution fixed, typically
as a synchronous field-wide evaluation. \sys is orthogonal: it leaves the backbone
unchanged but compiles a locality-preserving execution schedule with explicit
intermediate buffers, making causality, conditioning, serial depth, and memory
traffic explicit execution-level design variables.

\paragraph{Benchmarks and evaluation protocols.}
PDE forecasting is typically evaluated using standardized benchmarks such as PDEBench and 
PDEArena, which provide diverse time-dependent PDE datasets and 
reference implementations across surrogate families \cite{pdebench-data,pdearena}. 
CFDBench targets CFD-specific regimes with varying boundary conditions, parameters, 
and geometries \cite{cfdbench}, while BubbleML extends evaluation to multiphase and
multiphysics systems \cite{hassan2023bubbleml}. APEBench focuses on autoregressive 
emulators with rollout-based metrics and controlled protocols \cite{apebench}. 
We follow these conventions to ensure fair comparisons of accuracy, robustness,
 and runtime under realistic rollout settings.

\section{Conclusion}
We presented \sys, a staged local rollout engine for next-step flow field prediction that constrains
information access during inference, enabling accurate local models with predictable cost. Across CFD 
benchmarks, \sys matches or outperforms strong global baselines and is more robust to noise and 
missing data, indicating that rethinking the execution of a prediction step can rival architectural scaling 
in scientific ML and enable more practical surrogates.

\begingroup
\sloppy
\bibliographystyle{unsrtnat}
\bibliography{bibliography}
\endgroup

\appendix
\clearpage
\section{Dataset and Evaluation Metrics}
\label{app:datasets}

To ensure reproducibility and facilitate fair comparison, we provide a comprehensive breakdown of the datasets used in our experiments, spanning three benchmark suites: CFDBench, PDEBench, and BubbleML. Our main evaluation uses eleven datasets (Turb-M0.1 is excluded from the main results). In this section, we detail the governing equations and physical parameters for each dataset.

\subsection{CFDBench}
\label{sub:cfdbench_details}

We utilize four classical 2D fluid dynamics scenarios from CFDBench \cite{cfdbench}. These flows are governed by the Navier-Stokes equations for an incompressible Newtonian fluid. 

\textbf{Governing Equations.} 
Let $\mathbf{u} = (u, v)^\top$ denote the velocity field, $p$ the pressure, $\rho$ the fluid density, and $\mu$ the dynamic viscosity. The dynamics follow:
\begin{equation*}
\label{eq:cfdbench_ns}
\begin{cases}
    \nabla \cdot \mathbf{u} = 0 \\
    \displaystyle \frac{\partial \mathbf{u}}{\partial t} + (\mathbf{u} \cdot \nabla) \mathbf{u} = -\frac{1}{\rho} \nabla p + \frac{\mu}{\rho} \nabla^2 \mathbf{u}
\end{cases}
\end{equation*}
The boundary conditions are defined such that $\mathbf{u}$ is constant on the domain boundaries $\partial \mathcal{D}$. The four specific configurations are:

\begin{itemize}
    \item \texttt{Cavity} (Lid-driven Cavity): Flow in a square container driven by a moving top lid. This dataset poses a challenge due to the mathematical singularities at the corners where the moving lid meets stationary walls, requiring the model to resolve sharp gradients and recirculating vortices.
    \item \texttt{Tube} (Circular Tube Flow): Represents the development of a laminar boundary layer. Physically, viscous resistance near the wall creates a parabolic velocity profile. This task tests the model's ability to capture boundary layer theory and liquid-gas interfaces in a confined geometry.
    \item \texttt{Dam} (Dam Break): Models the transient release of fluid over an obstacle using the Volume of Fluid (VOF) method. The flow regime transitions dynamically from viscosity-dominated (creeping flow) to inertia-dominated (jet formation), testing generalization across different physical force balances.
    \item \texttt{Cylinder} (Flow Around a Cylinder): A classic aerodynamic problem where boundary layer separation creates a Karman vortex street. This is the primary benchmark for modeling periodic oscillatory behaviors induced by rigid obstacles.
\end{itemize}

\subsection{PDEBench}
\label{sub:pdebench_details}

The selected datasets from PDEBench \cite{pdebench-data} introduce compressibility and non-linear chemical dynamics, significantly expanding the physical scope beyond standard CFD tasks.

\textbf{Compressible Navier-Stokes (\texttt{Rand-M0.1}, \texttt{Rand-M1.0}, \texttt{Turb-M0.1}).}
These datasets evolve density $\rho$, velocity $\mathbf{v}$, and pressure $p$ coupled with internal energy, governed by the full conservation laws:
\begin{equation*}
\label{eq:compressible_ns}
    \partial_t \rho + \nabla \cdot (\rho \mathbf{v}) = 0, \quad
    \rho(\partial_t \mathbf{v} + \mathbf{v} \cdot \nabla \mathbf{v}) = -\nabla p + \nabla \cdot \sigma'
\end{equation*}
To ensure precise reproducibility, we specify the exact subsets used from the PDEBench repository. The datasets differ in both compressibility (Mach number $M$) and viscosity parameters (shear $\eta$ and bulk $\zeta$):
\begin{itemize}
    \item \texttt{Rand-M0.1}: Corresponds to file \texttt{2D\_Rand\_M0.1\_Eta0.01\_Zeta0.01}. This represents a subsonic regime ($M < 0.1$) with low viscosity ($\eta=\zeta=10^{-2}$), serving as a baseline bridging incompressible and compressible physics.
    \item \texttt{Rand-M1.0}: Corresponds to file \texttt{2D\_Rand\_M1.0\_Eta0.1\_Zeta0.1}. This represents a transonic regime ($M \le 1.0$) with shock waves. Note that this dataset uses higher viscosity ($\eta=\zeta=10^{-1}$) than M0.1. The model must capture sharp discontinuities where gradients become theoretically infinite.
    
\end{itemize}

\textbf{2D Diffusion-Reaction (\texttt{Diff-React}).}
This dataset models biological pattern formation (activator $u$, inhibitor $v$) via the FitzHugh-Nagumo equations:
\begin{equation*}
    \partial_t u = D_u \Delta u + R_u(u, v), \quad \partial_t v = D_v \Delta v + R_v(u, v)
\end{equation*}
where non-linear reaction terms $R$ and differential diffusion rates ($D_v > D_u$) lead to complex Turing patterns, testing the model's capacity to maintain spectral fidelity over long rollouts.

\subsection{BubbleML}
\label{sub:bubbleml_details}

BubbleML \cite{hassan2023bubbleml} represents the most physically complex benchmark in our evaluation, involving multiphase flows with heat transfer and phase change. The simulations use FC-72 (perfluorohexane) as the working fluid.

\textbf{Governing Equations.} 
The system solves the non-dimensionalized incompressible Navier-Stokes equations coupled with the energy equation. The liquid ($l$) and vapor ($v$) phases are tracked using a level-set function $\phi$, where the zero level set $\phi=0$ represents the interface $\Gamma$. The dynamics evolve according to:
\begin{equation*}
\label{eq:bubbleml_governing}
\begin{aligned}
    \frac{\partial \mathbf{u}}{\partial t} + \mathbf{u} \cdot \nabla \mathbf{u} &= -\frac{1}{\rho'} \nabla P + \nabla \cdot \left[ \frac{\mu'}{\rho' \text{Re}} \nabla \mathbf{u} \right] + \frac{\mathbf{g}}{\text{Fr}^2} + \mathbf{S}^\Gamma_{\mathbf{u}} + \mathbf{S}^\Gamma_{P} \\
    \frac{\partial T}{\partial t} + \mathbf{u} \cdot \nabla T &= \nabla \cdot \left[ \frac{\alpha'}{\text{Re} \text{Pr}} \nabla T \right] + S^\Gamma_T
\end{aligned}
\end{equation*}
Here, $\mathbf{u}$ is velocity, $P$ is pressure, and $T$ is temperature. The system dynamics are controlled by key dimensionless numbers: Reynolds number ($\text{Re}$), Prandtl number ($\text{Pr}$), and Froude number ($\text{Fr}$). The terms $\rho', \mu', \alpha'$ denote density, viscosity, and thermal diffusivity scaled by their liquid-phase values.

Crucially, the terms $\mathbf{S}^\Gamma$ represent singular source terms at the interface $\Gamma$:
\begin{itemize}
    \item $\mathbf{S}^\Gamma_{\mathbf{u}}$ accounts for surface tension forces, modeled via the Weber number ($\text{We}$).
    \item $\mathbf{S}^\Gamma_{P}$ accounts for the pressure jump induced by phase change.
    \item $S^\Gamma_T$ represents the latent heat source/sink during evaporation or condensation.
\end{itemize}

\textbf{Dataset Configurations.} We utilize four datasets representing two boiling regimes:
\begin{itemize}
    \item \textbf{Pool Boiling (\texttt{PB-Gravity}, \texttt{PB-Subcooled}):} 
    Simulates fluid confined in a tank with a bottom heater (resembling nuclear waste cooling). The \texttt{PB-Subcooled} variant introduces a temperature deficit in the bulk liquid to suppress bubble growth.
    \item \textbf{Flow Boiling (\texttt{FB-Gravity}, \texttt{FB-VelScale}):} 
    Models fluid pumped through a channel containing a heater (resembling data center GPU liquid cooling). These datasets feature extreme aspect ratios (e.g., $1600 \times 160$) and complex interactions between bulk momentum and buoyancy-driven bubbles.
\end{itemize}

\begin{table*}[!ht]
\centering
\caption{Detailed specifications of the datasets. \textbf{Size} denotes the total storage size of the dataset in Gigabytes (GB). The "Physics" column references the specific governing equations defined in the text. Note: Turb-M0.1 is excluded from the main evaluation results.}
\label{tab:dataset_specs}
\vspace{5pt}
\resizebox{\textwidth}{!}{%
\begin{tabular}{l l c l l c}
\toprule
\textbf{Suite} & \textbf{Dataset} & \textbf{Resolution} & \textbf{Physics / PDE} & \textbf{Boundary Conditions} & \textbf{Size (GB)} \\
\midrule
\multirow{4}{*}{\textbf{CFDBench}} 
 & \texttt{Cavity} & $64\times64$ & Incompressible NS (Eq.~\ref{eq:cfdbench_ns}) & Wall & \textbf{0.8} \\
 & \texttt{Tube} & $64\times64$ & Incompressible NS (Eq.~\ref{eq:cfdbench_ns}) & Inflow/Outflow & \textbf{0.3} \\
 & \texttt{Dam} & $64\times64$ & Multiphase (VOF) & Wall & \textbf{1.3} \\
 & \texttt{Cylinder} & $64\times64$ & Incompressible NS (Eq.~\ref{eq:cfdbench_ns}) & Inflow/Outflow & \textbf{12.5} \\
\midrule
\multirow{4}{*}{\textbf{PDEBench}} 
 & \texttt{Diff-React} & $128\times128$ & Reaction-Diffusion & Periodic & \textbf{12.3} \\
 & \texttt{Rand-M0.1} & $128\times128$ & Compressible NS ($M < 0.1$) & Periodic & \textbf{51.2} \\
 & \texttt{Rand-M1.0} & $128\times128$ & Compressible NS ($M \le 1.0$) & Periodic & \textbf{51.2} \\
 & \texttt{Turb-M0.1} & $512\times512$ & Compressible Turbulence & Periodic & \textbf{82.0} \\
\midrule
\multirow{4}{*}{\textbf{BubbleML}} 
 & \texttt{FB-Gravity} & $1600\times160$ & Flow Boiling & Wall/Outflow & \textbf{16.1} \\
 & \texttt{PB-Gravity} & $512\times512$ & Pool Boiling & Wall & \textbf{24.2} \\
 & \texttt{PB-Subcooled} & $384\times384$ & Pool Boiling & Wall & \textbf{15.4} \\
 & \texttt{FB-VelScale} & $1244\times160$ & Flow Boiling & Wall/Outflow & \textbf{15.7} \\
\bottomrule
\end{tabular}
}
\end{table*}

\subsection{Evaluation Metrics.} 
\label{app:metrics}

To provide a comprehensive assessment of physical fidelity, we employ four distinct metrics targeting prediction accuracy, physical structure preservation, frequency consistency, and physical conservation. Let $U \in \mathbb{R}^{N \times C \times H \times W}$ denote the ground truth flow field batch and $\hat{U}$ denote the prediction.

\begin{description}
    \item[Root Mean Squared Error (RMSE).] 
    We measure the absolute pixel-level deviation aggregated globally over the entire dataset. This formulation ensures that the metric is insensitive to batch size variations and penalizes large outliers effectively:
    \begin{equation*}
        \mathrm{RMSE} = \sqrt{\frac{1}{N \cdot C \cdot H \cdot W} \sum_{i,c,x,y} \left( U - \hat{U} \right)^2 }
    \end{equation*}

    \item[Relative Vorticity Error (Vor).] 
    To evaluate the capture of rotational dynamics and small-scale structures, we compute the relative Frobenius norm error of the vorticity field $\omega = \nabla \times U$. Using relative error accounts for varying flow magnitudes across different samples:
    \begin{equation*}
        \mathrm{VorError} = \frac{\| \omega - \hat{\omega} \|_F}{\| \omega \|_F} = \frac{\sqrt{\sum (\omega - \hat{\omega})^2}}{\sqrt{\sum \omega^2}}
    \end{equation*}
    where $\|\cdot\|_F$ denotes the Frobenius norm over the spatial dimensions and batch.

    \item[Spectral Error (Spec).] 
    Standard metrics often ignore high-frequency details due to the spectral bias of neural networks. We compute the error in the frequency domain using the Log Spectral Distance to balance low- and high-frequency contributions:
    \begin{equation*}
        \begin{aligned}
        \mathrm{SpecError}
        &= \frac{1}{N} \sum_{i=1}^N
        \Bigl\|
        \log\!\left(|\mathcal{F}(U_i)| + \epsilon\right)
        \\
        &\qquad
        - \log\!\left(|\mathcal{F}(\hat{U}_i)| + \epsilon\right)
        \Bigr\|_2
        \end{aligned}
    \end{equation*}
    where $\mathcal{F}$ denotes the 2D Fast Fourier Transform (FFT) applied spatially, and $\epsilon=10^{-8}$ ensures numerical stability.

    \item[Divergence Error (Div).] 
    For incompressible flows, mass conservation dictates that the divergence $\nabla \cdot U$ should be zero. We measure the root mean squared deviation of the predicted divergence:
    \begin{equation*}
        \mathrm{DivError} = \sqrt{ \frac{1}{N \cdot H \cdot W} \sum_{i,x,y} \left( (\nabla \cdot U) - (\nabla \cdot \hat{U}) \right)^2 }
    \end{equation*}
    Here, $\nabla \cdot U$ is computed using second-order central finite differences for interior points.
\end{description}

\clearpage
\section{Implementation Details}
\label{sec:appendix_implementation}

\subsection{Codebase}
\sys is implemented in Python with approximately 10k lines of code. We will release the full code upon publication.

\subsection{Architectures and Backend}
\label{app:impl_details}

\paragraph{\sys predictor and backend model.}
Once the plan fixes what information is visible (bounded local context from
earlier stages plus static site features), we instantiate $G_\theta$ as a
compact, fixed predictor across all benchmarks: a 4-layer MLP with
$\approx 6\times 10^{4}$ parameters. Given local features, the model computes
\[
\big[\Phi(U_t,\sigma(i)),\;\Psi(Y;i),\;\mathrm{pos}(\sigma(i))\big]
\;\mapsto\; \widehat{y}_i \in \mathbb{R}^{c}.
\]
This emphasizes the core design message: reliability and controllability come
from the staged information flow enforced by the plan and executor, rather than
from global attention or deep encoder--decoder stacks.

\paragraph{Baselines.}
For U-Net, FNO, and DeepONet, we use the reference implementations and default
architectures provided by the corresponding benchmarks/datasets.

\subsection{Hardware and Training Protocol}
All experiments were run on a single NVIDIA RTX~3090 GPU (24\,GB).
Unless otherwise specified, we train all models with Adam (learning rate $10^{-3}$),
MSE loss, and a StepLR schedule for up to 150 epochs, with a wall-clock cap of 48 hours.
Batch size is chosen per method/dataset to maximize GPU utilization without exceeding VRAM.

\subsection{Plan-aligned Training Inputs}
\label{app:plan_aligned_training}

During inference, the executor forms each input from the current-state features
and a bounded neighborhood gathered from the partially rewritten buffer, with the
guarantee that all gathered entries come from earlier stages
(Section~\ref{sec:executor}). To avoid training--inference mismatch, we reuse the
same plan-defined neighborhood index lists when constructing training inputs.
When those indices would read from the inference-time buffer, we instead read
from the ground-truth next-step sequence $\mathbf{v}^\star_{1:N}$, but only at
indices that are valid under the stage order. This preserves the executor's
information constraints while avoiding the instability of feeding intermediate
model predictions during training.

\subsection{Inference Efficiency and Scalability}
\label{app:scalability_backend}

\noindent \textbf{CUDA Graphs.}
We execute \sys with CUDA Graphs. We capture the full inference workflow into a
static CUDA graph and replay it for each input, reducing CPU launch overhead to
a single replay call and mitigating GPU idle time caused by frequent kernel
launches. For fair comparison, all baselines are also optimized with CUDA
Graphs using the same approach.

\medskip
\noindent \textbf{Coarse-to-fine execution.} For very high-resolution grids, we
optionally reduce the effective sequence length by running \sys on a downscaled
field and then upsampling the prediction back to the native grid. Let
$\mathcal{D}_\rho$ downscale by factor $\rho$ per axis and $\mathcal{R}_\rho$
upsample back. We compute:
\[
\widehat{U}^{\,\mathrm{coarse}}_{t+1} \;=\; \sys\big(\mathcal{D}_\rho(U_t)\big),
\qquad
\widehat{U}_{t+1} \;=\; \mathcal{R}_\rho\!\big(\widehat{U}^{\,\mathrm{coarse}}_{t+1}\big).
\]

\subsection{Ordering Schemes for Tokenization}
\label{app:ordering}

Let $\Omega$ denote the set of grid coordinates (2D/3D) with $N=|\Omega|$.
An ordering is a bijection $\sigma:\{1,\dots,N\}\to\Omega$.

\paragraph{Outward Spiral}
Let $c$ be the integer-rounded grid center. Define $\|p-c\|_\infty=\max_j |p_j-c_j|$.
Outward Spiral ordering enumerates sites by increasing Chebyshev radius:
\[
\sigma \;=\; \bigcup_{r=0}^{r_{\max}} \{\, p\in\Omega \mid \|p-c\|_\infty = r \,\},
\]
where ties within the same radius are broken by a fixed deterministic scan over the
bounding box of the shell (and duplicates are skipped).

\paragraph{Raster Scan}
Lexicographic (row-major) order:
2D: $(x,y)$ with $y$ outer and $x$ inner.
3D: $(x,y,z)$ with $z$ outer, then $y$, then $x$.

\paragraph{Random}
A uniform random permutation of $\Omega$, generated once with a fixed RNG seed and reused.

\paragraph{Hilbert Curve}
Embed the grid into the smallest power-of-two cube/box with side
$M=2^{\lceil \log_2(\max(X,Y(,Z)))\rceil}$.
Assign each $p\in\Omega$ a Hilbert Curve index $h(p)\in[0,M^d)$ and sort by $h(p)$; embedded
out-of-bounds points are ignored.

Orders that preserve spatial locality (e.g., Outward Spiral and Raster Scan) tend to behave similarly,
whereas orders that fail to expose nearby predecessors early under a fixed context budget can degrade.

\clearpage
\section{Baseline Hyperparameter Search}
\label{app:baseline_grid_search}

For baseline settings that are not reported in the original benchmark suites,
we run a compact architecture sweep around the canonical public
implementations while keeping the optimizer, scheduler, and overall 48-hour
training budget unchanged. Each candidate first receives a 5-epoch proxy run,
and we promote the setting with the best validation RMSE to the full training
run. We apply this procedure to CNO and UNO on CFDBench and to U-Net on the
two BubbleML cases without benchmark-provided settings
(\texttt{FB-Gravity} and \texttt{FB-VelScale}).

We retain the implementation names \texttt{kernel\_size} for CNO and
\texttt{uno\_kernel\_size} for UNO; both denote the local operator kernel
width used inside the corresponding architecture.

\subsection{CNO on CFDBench}

\paragraph{Parameters.}
We search the three CNO knobs that most directly shape capacity and local
aggregation. \texttt{hidden\_dim} sets the feature width after the lifting
layer and therefore the channel capacity carried by every operator block,
making it the main width parameter. \texttt{depth} is the number of stacked
operator blocks, so it determines how much iterative feature refinement the
model can perform under a fixed training budget. \texttt{kernel\_size}
controls the local receptive field of the depthwise operator in each block and
is therefore the key parameter for how much near-field spatial context each
update can absorb. We do not tune padding separately, because the
implementation deterministically sets
\texttt{cno\_padding} = (\texttt{kernel\_size} - 1) / 2 to preserve spatial
resolution.

\paragraph{Search spaces and results.}
We sweep:
\begin{align*}
    \texttt{hidden\_dim} &\in \{64, 96, 128\}, \\
    \texttt{depth} &\in \{4, 6, 8\}, \\
    \texttt{kernel\_size} &\in \{3, 5\}.
\end{align*}
For each CFDBench dataset, we choose the setting with the lowest validation
RMSE among the 5-epoch proxy runs. The final selected configurations are:
\begin{center}
\begin{tabular}{lccc}
\toprule
Dataset & depth & hidden\_dim & kernel\_size \\
\midrule
Tube     & 4 & 64  & 3 \\
Cylinder & 6 & 96  & 5 \\
Dam      & 8 & 128 & 5 \\
Cavity   & 8 & 128 & 5 \\
\bottomrule
\end{tabular}
\end{center}

\subsection{UNO on CFDBench}

\paragraph{Parameters.}
We search the UNO parameters that most strongly shape its multiscale
hierarchy. \texttt{uno\_base\_dim} sets the channel width at the finest scale
and therefore the width of the full encoder--decoder stack.
\texttt{uno\_levels} controls how many resolutions the model traverses, so it
changes both receptive-field growth and total multiscale capacity.
\texttt{uno\_kernel\_size} determines the local receptive field inside each
operator block and therefore how aggressively each stage aggregates nearby
context. We keep \texttt{uno\_depth\_per\_level}=2 and
\texttt{uno\_bottleneck\_depth}=4 fixed. These parameters regulate how many
blocks are repeated within each scale and at the bottleneck, so holding them
constant keeps the sweep focused on width, hierarchy depth, and local context
size instead of exploding the search space.

\paragraph{Search spaces and results.}
We sweep:
\begin{align*}
    \texttt{uno\_base\_dim} &\in \{48, 64, 96\}, \\
    \texttt{uno\_levels} &\in \{3, 4\}, \\
    \texttt{uno\_kernel\_size} &\in \{3, 5\},
\end{align*}
while keeping the following structural parameters fixed:
\begin{align*}
    \texttt{uno\_depth\_per\_level} &= 2, \\
    \texttt{uno\_bottleneck\_depth} &= 4.
\end{align*}
For each CFDBench dataset, we again select the setting with the lowest
validation RMSE from the 5-epoch proxy runs. The resulting chosen
configurations are:
\begin{center}
\begin{tabular}{lccc}
\toprule
Dataset & base\_dim & levels & kernel\_size \\
\midrule
Tube     & 48 & 3 & 3 \\
Cylinder & 64 & 4 & 5 \\
Dam      & 96 & 4 & 5 \\
Cavity   & 96 & 4 & 5 \\
\bottomrule
\end{tabular}
\end{center}

\subsection{U-Net on BubbleML}

\paragraph{Parameters.}
For BubbleML we keep the U-Net topology used in our experiments fixed: four
encoder blocks separated by $2 \times 2$ max-pooling layers, a bottleneck
block, and four mirrored decoder blocks. Each decoder stage first upsamples
with a $2 \times 2$ transposed convolution, concatenates the corresponding
encoder skip feature, and then applies the decoder block, followed by a final
$1 \times 1$ output projection. Under this implementation,
\texttt{init\_features} is the only exposed width knob; it sets the channel
count of the first encoder block. It is therefore the main
architecture parameter for trading off model capacity against memory footprint,
while the overall encoder--decoder topology remains fixed.

\paragraph{Search spaces and results.}
We search only:
\begin{align*}
    \texttt{init\_features} \in \{32, 64, 128\}.
\end{align*}
Exhaustive BubbleML sweeps are prohibitively expensive, so each candidate
receives a 5-epoch proxy run on \texttt{Diff-React}, and we choose the width
with the lowest validation RMSE. This selects
\texttt{init\_features}=64, which we then transfer unchanged to
\texttt{FB-Gravity} and \texttt{FB-VelScale}.

\clearpage
\section{Additional Results and Analysis}

\subsection{Multi-step Autoregressive Rollout Analysis}
\label{app:multistep_analysis}

We evaluate long-horizon autoregressive rollouts by iteratively feeding model
predictions back as inputs and tracking error metrics over time
(\autoref{fig:multistep}) on CFDBench-Cylinder. While all methods exhibit
growing RMSE with rollout length, FlowForge remains comparable to FNO and more
stable than U-Net, indicating that staged rollout does not introduce additional
numerical instability. The most pronounced difference appears in physical
consistency: FlowForge maintains near-zero divergence error throughout the
rollout, whereas FNO and U-Net exhibit steadily increasing divergence,
reflecting accumulation of global inconsistency. FlowForge also preserves
vorticity and spectral content at levels comparable to or better than
baselines, despite relying on strictly local updates. Overall, these results
show that the staged local rollout mechanism yields stable long-horizon
behavior and effectively limits constraint drift under autoregressive use.

\subsection{Analysis of Effective Physical Dependency}
\label{app:dependency}

In the ablation study (Section~\ref{sec:ablation_study}), we observed that \sys
exhibits remarkable performance robustness across various spatial traversal
orders $\sigma$ (e.g., Outward Spiral, Raster Scan, Hilbert Curve, and Random).
Intuitively, one might expect that locality-preserving orders (like Outward
Spiral) would significantly outperform Random ordering. To understand why this
performance gap is minimal, we analyze the \textit{effective physical
dependency} provided by each traversal strategy.

Based on the causal inference structure analyzed in
Section~\ref{sec:rollout_structure}, we posit that the prediction accuracy at a
target location $x_t$ depends critically on the availability of spatially
proximate context from the set of previously predicted locations
$\mathcal{P}_t = \{x_{\sigma(1)}, \dots, x_{\sigma(t-1)}\}$. If a target
location is far from any known value, the predictor lacks immediate physical
cues.

We quantify this dependency using the \textbf{Nearest Preceding Neighbor
Distance}, defined for the $t$-th step in the sequence as:
\begin{equation*}
    d_t = \min_{j < t} \| x_{\sigma(t)} - x_{\sigma(j)} \|_2.
\end{equation*}
A smaller $d_t$ implies stronger local conditioning. We simulate the traversal
processes on a $16 \times 16$ grid and record the evolution of $d_t$; note that
in \sys the plan-defined context for a target location is an \emph{irregular
list} of up to $H$ earlier neighbor sites (within $R$ rings/shells), rather
than a dense $h{\times}w$ image patch.

\autoref{fig:dependency_dist} illustrates the moving average of $d_t$ for
Outward Spiral, Raster Scan, Hilbert Curve, and Random orders. Our analysis
reveals two distinct phases:

\begin{enumerate}
    \item \textbf{Warm-up Phase ($t < 30$):} In the initial steps, locality-preserving orders (Outward Spiral, Raster Scan, Hilbert Curve) maintain a consistent $d_t = 1$ (adjacent pixels), whereas Random ordering exhibits high variance and larger average distances ($d_t > 1$), as early points are scattered sparsely across the grid.

    \item \textbf{Stable Phase ($t \ge 30$):} As the grid becomes populated, the probability of a randomly selected target location falling within the immediate neighborhood of an existing point increases distinctly. Consequently, the $d_t$ for Random ordering rapidly decays and converges to the same baseline ($d_t \approx 1$), providing similar context vectors as the locality-preserving orders.
\end{enumerate}

Since the local predictor is trained to map from a set of context vectors to
the target value, its performance is dominated by the availability of close
neighbors rather than the global geometry of the sequence. Because the
``warm-up'' phase represents a negligible fraction of the total rollout steps
(30 out of 256 pixels), the aggregate loss is dominated by the stable phase,
where all traversal orders provide statistically similar physical context.
Thus, \sys is not strictly bound to a specific traversal order.
\begin{figure}[!b]
    \centering
    \includegraphics[width=\buildfigurewidth{1.0\linewidth}]{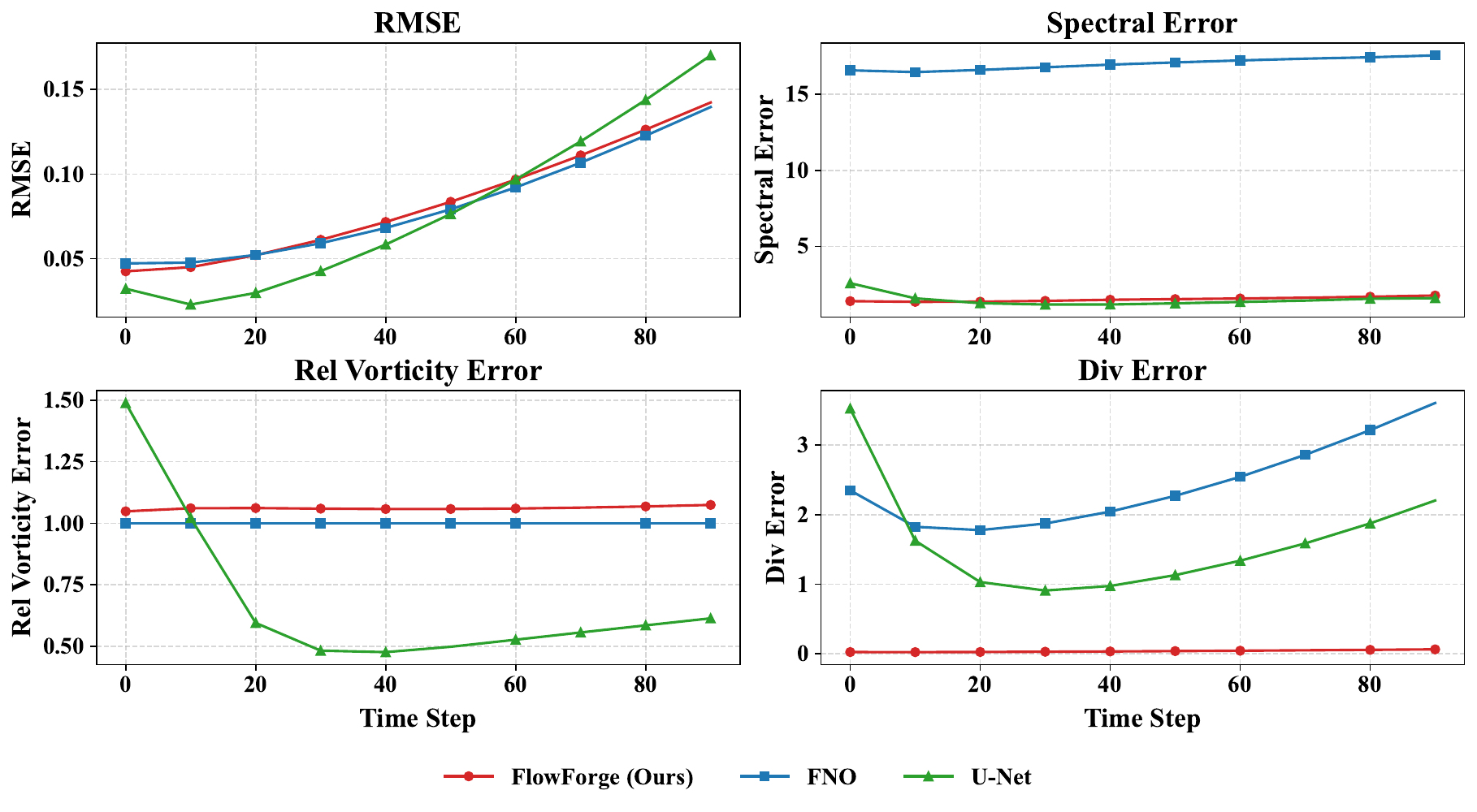}
    \caption{\textbf{Multi-step autoregressive rollout on CFDBench-Cylinder.}
    We report RMSE, spectral error, relative vorticity error, and divergence error
    over rollout time steps for FlowForge (ours), FNO, and U-Net.}
    \label{fig:multistep}
\end{figure}

\begin{figure}[!b]
    \centering
    \includegraphics[width=\buildfigurewidth{0.8\linewidth}]{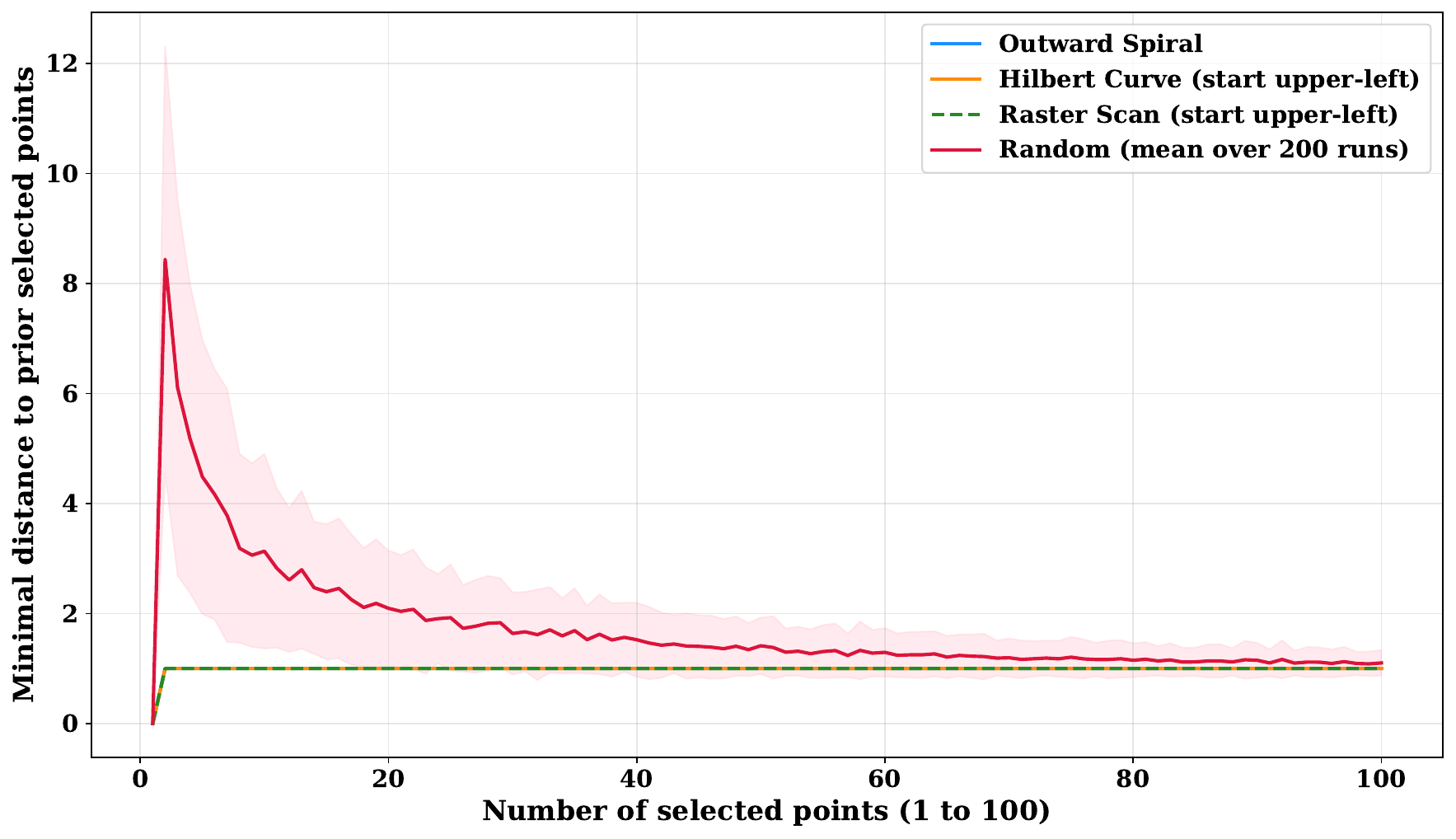}
    \caption{\textbf{Evolution of Nearest Preceding Neighbor Distance ($d_t$).} Comparison of different traversal orders on a $16 \times 16$ grid. While Random ordering starts with higher separation distances, it quickly converges to the immediate-neighbor regime ($d_t \approx 1$) after roughly 30 steps, providing similar local conditioning context to structured orders like Outward Spiral or Hilbert Curve.}
    \label{fig:dependency_dist}
\end{figure}

\FloatBarrier
\subsection{Complete Robustness Test Results}
\label{sec:appendix_robustness}

This section presents comprehensive robustness evaluations across all four CFDBench datasets, complementing the findings in Section~\ref{sec:robustness}.
We report the average loss computed over distinct severity levels: noise standard deviation $\sigma \in \{0.01, 0.05, 0.1\}$, edge width $w \in \{1, 5, 10\}$, and masking rate $k \in \{1\%, 5\%, 10\%\}$. 
As shown in Figure~\ref{fig:appendix_robustness_grid}, \sys achieves the
lowest or second-lowest mean RMSE under all four perturbation settings across
every dataset in CFDBench. These comprehensive results reinforce the analysis
in Section~\ref{sec:robustness} and indicate that \sys is consistently more
resilient to input perturbations than the baselines.

\begingroup
    \centering
    \begin{minipage}[t]{0.48\linewidth}
        \centering
        \textbf{CFDBench-Cylinder}\par\smallskip
        \includegraphics[width=\linewidth]{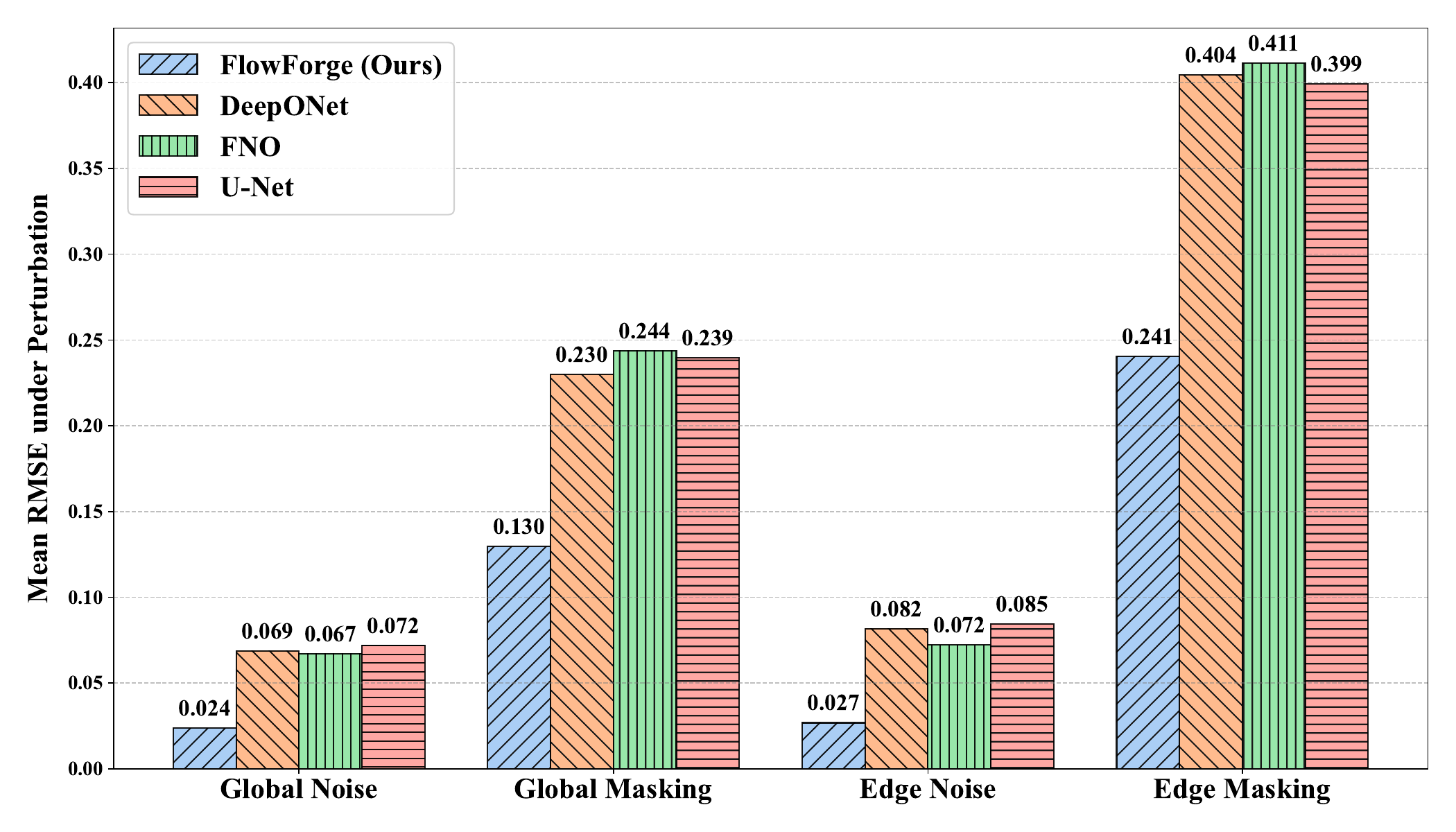}
    \end{minipage}\hfill
    \begin{minipage}[t]{0.48\linewidth}
        \centering
        \textbf{CFDBench-Cavity}\par\smallskip
        \includegraphics[width=\linewidth]{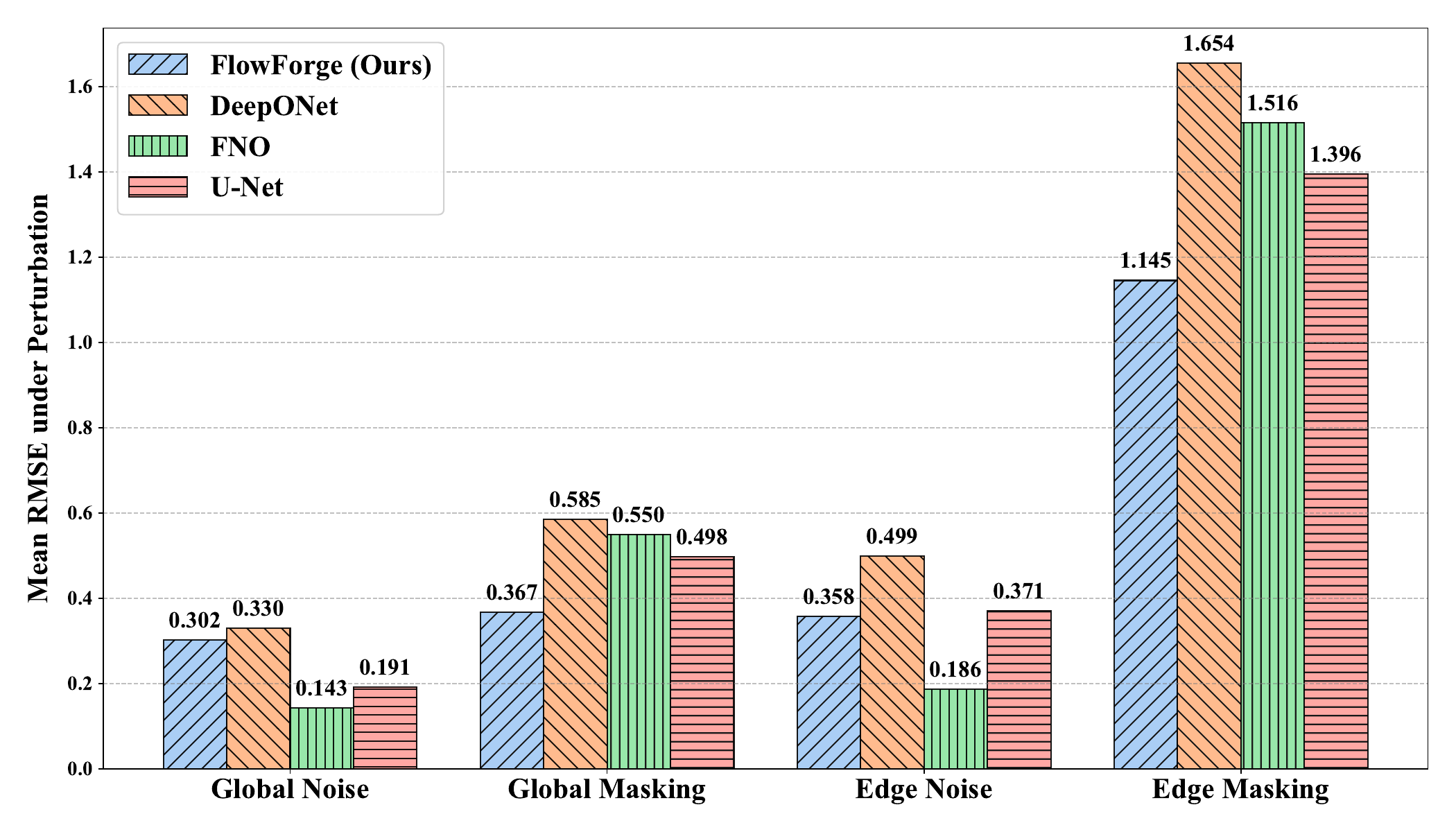}
    \end{minipage}

    \par\medskip

    \begin{minipage}[t]{0.48\linewidth}
        \centering
        \textbf{CFDBench-Tube}\par\smallskip
        \includegraphics[width=\linewidth]{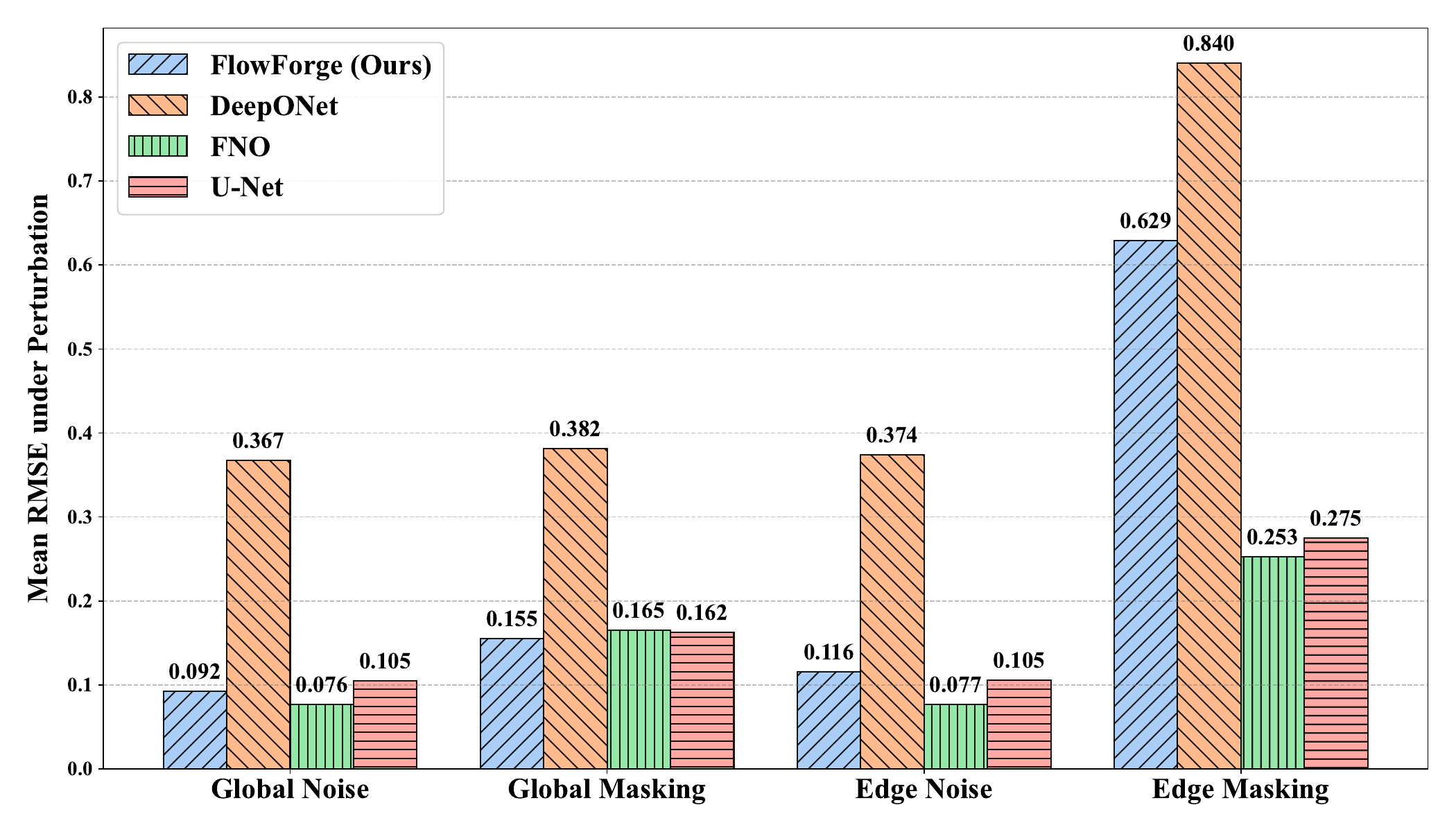}
    \end{minipage}\hfill
    \begin{minipage}[t]{0.48\linewidth}
        \centering
        \textbf{CFDBench-Dam}\par\smallskip
        \includegraphics[width=\linewidth]{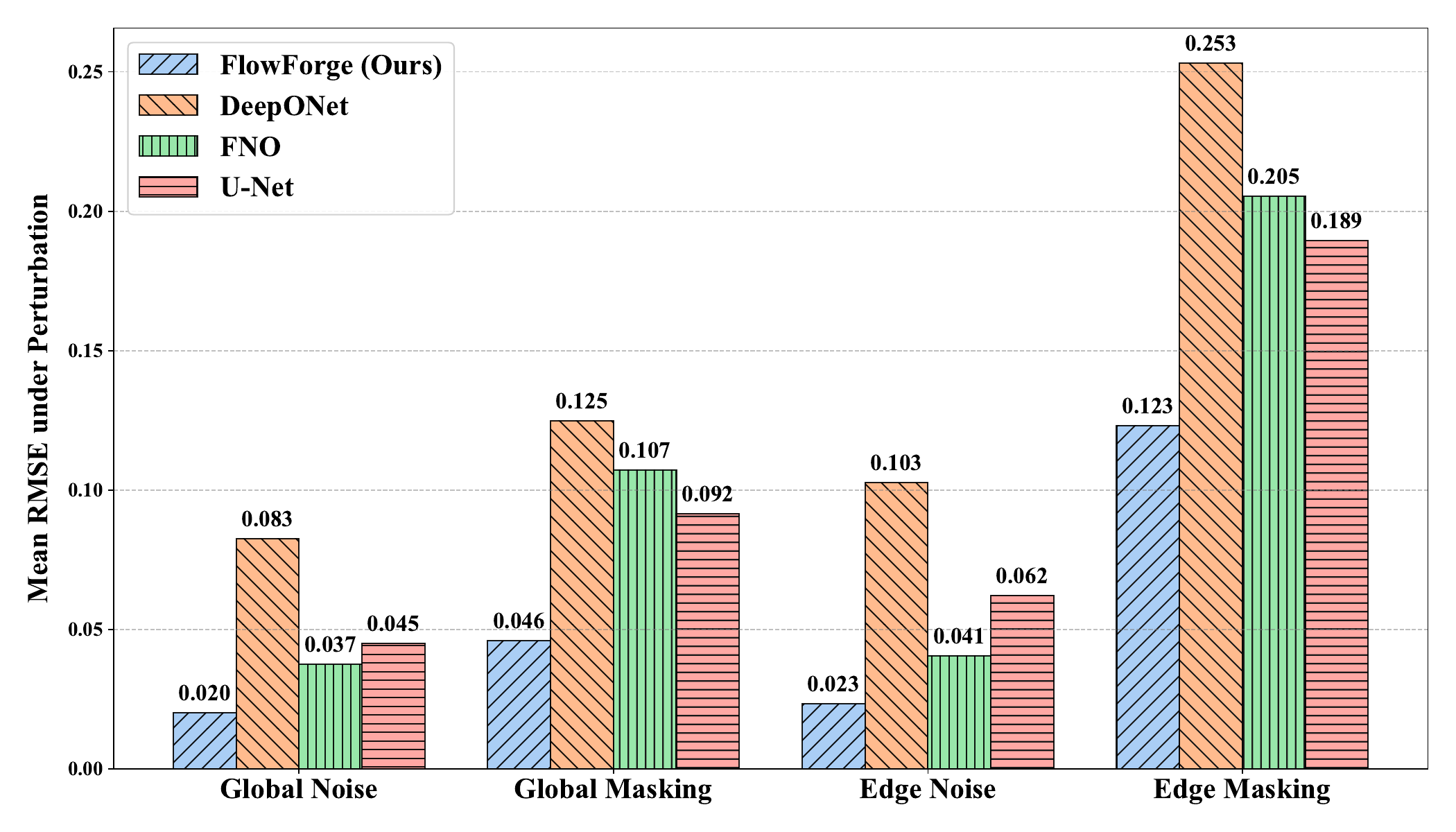}
    \end{minipage}

    \captionsetup{hypcap=false}
    \captionof{figure}{Complete robustness summaries across all four CFDBench datasets.
    Each panel aggregates performance under global and edge-localized
    perturbations across the noise, masking, and boundary-corruption protocols
    described in Section~\ref{sec:robustness}. \sys remains consistently more
    stable than the baselines throughout the suite.}
    \label{fig:appendix_robustness_grid}
    \par
\endgroup

\clearpage
\section{Extended Case Studies and Qualitative Analysis}
\label{app:extended_case_studies}

In Section~\ref{sec:accuracy_analysis}, we quantitatively analyzed the flow dynamics on the \texttt{FB-Gravity} dataset. To demonstrate the versatility of \sys across diverse flow regimes and verify its stability under corruption, we provide additional qualitative comparisons here. We focus on two complementary aspects: (1) prediction fidelity in complex multiphase flows involving sharp interfaces and phase changes, and (2) visual verification of error containment mechanisms under severe input degradation.

\subsection{Fidelity on Multiphase Benchmarks}

We examine two challenging scenarios from the CFDBench and BubbleML suites, characterized by rapid topological changes and multi-scale interfacial dynamics.

\textbf{Dam Break (Multiphase VOF).} \autoref{fig:case_dam} visualizes the Volume-of-Fluid (VOF) simulation of a collapsing fluid column. This regime poses a significant challenge due to the presence of sharp gas-liquid interfaces and the generation of fine-scale vorticity during impact.

To scrutinize performance in physically volatile regions, the visualization combines macroscopic velocity streamlines with a microscopic error analysis. We define a region of intense vorticity gradients (marked by the green bounding box) and overlay the absolute vorticity error heatmap ($|\omega_{\text{pred}} - \omega_{\text{gt}}|$) to reveal localized discrepancies.

As observed in \autoref{fig:case_dam}, the performance gap manifests differently across model families. U-Net and DeepONet exhibit substantial deviations, marked by intense red hotspots in the error heatmaps, indicating a struggle to capture the complex multi-fluid mixing dynamics. The comparison between \sys and FNO is more nuanced yet physically critical. While FNO achieves a low global error magnitude comparable to \sys (indicated by similarly faint heatmaps), a closer inspection of the streamlines reveals a key divergence: FNO suffers from \textit{over-smoothing} at the vortex core, failing to resolve the tight rotational structures. In contrast, \sys precisely preserves these fine-grained dynamics. This confirms that the staged local rollout does not merely lower pixel-wise error but, more importantly, prevents the loss of high-frequency physical information that plagues monolithic global predictors.

\textbf{Flow Boiling (FB-VelScale).}
\autoref{fig:case_inletvel} depicts the void fraction field in a heated channel flow. This regime involves small-scale bubble nucleation and advection, governed by the interplay between buoyancy and inlet velocity perturbations.

We highlight two distinct regions to illustrate scale-dependent performance.
\begin{itemize}
    \item \textbf{Inlet Region (Box 1):} Located near the channel entrance, flow dynamics are relatively smooth and dominated by inlet boundary conditions. \sys successfully captures this laminar behavior. In contrast, U-Net and FNO introduce spurious oscillations, leading to significant local errors.
    \item \textbf{Nucleation Region (Box 2):} Located near the heated wall, this area features vigorous bubble formation and detachment, characterized by high-frequency spatial variations in void fraction. Here, \sys effectively preserves individual bubble structures. Baselines, particularly FNO and U-Net, exhibit a washout effect---blurring adjacent bubbles into a unified field due to the spectral bias of global operators---which is detrimental for accurate heat transfer estimation.
\end{itemize}

\subsection{Visualizing Robustness Mechanisms}

To complement the aggregate robustness scores in Section~\ref{sec:robustness}, we visualize the flow fields under two distinct corruption protocols to empirically validate the error containment hypothesis.

\textbf{Global Block Masking.}
We apply a random binary block mask to the input frame of the \texttt{Cavity} dataset, zeroing out approximately 10\% of the spatial domain (\autoref{fig:robust_cavity}).

For global architectures like U-Net and FNO, the missing blocks introduce high-frequency discontinuities that propagate globally via convolution or Fourier mixing. This results in spectral poisoning: artifacts teleport from the masked regions to the unmasked center and distort the primary vortex structure. The rectangular regions of high error visible in the baseline heatmaps (\autoref{fig:robust_cavity}) directly correspond to this non-local error propagation.

In contrast, due to strict \textit{local causality}, \sys ensures that errors at a masked site can only influence its immediate topological neighbors in the subsequent rollout stage. Consequently, the majority of the flow field remains structurally intact, demonstrating that \sys effectively confines local defects, preventing them from destroying global physical consistency.

\textbf{Global Gaussian Noise Injection.}
We inject additive Gaussian noise ($\mathcal{N}(0, 0.1^2)$) into some random blocks of the input velocity field for the \texttt{Cylinder} dataset (\autoref{fig:robust_cylinder}).

Global models tend to amplify high-frequency noise, interpreting it as physical turbulence. This results in disordered streamlines around the cylinder and a distorted wake structure, as shown in the baseline predictions. \sys, however, leverages its local aggregation mechanism as an implicit spatial filter. By conditioning the prediction on a bounded neighborhood of previous estimates, \sys effectively averages out the incoherent zero-mean noise, recovering a clean Von Karman vortex street that closely matches the ground truth.

\begin{figure}[htbp]
    \centering
    \includegraphics[width=\buildfigurewidth{\linewidth}]{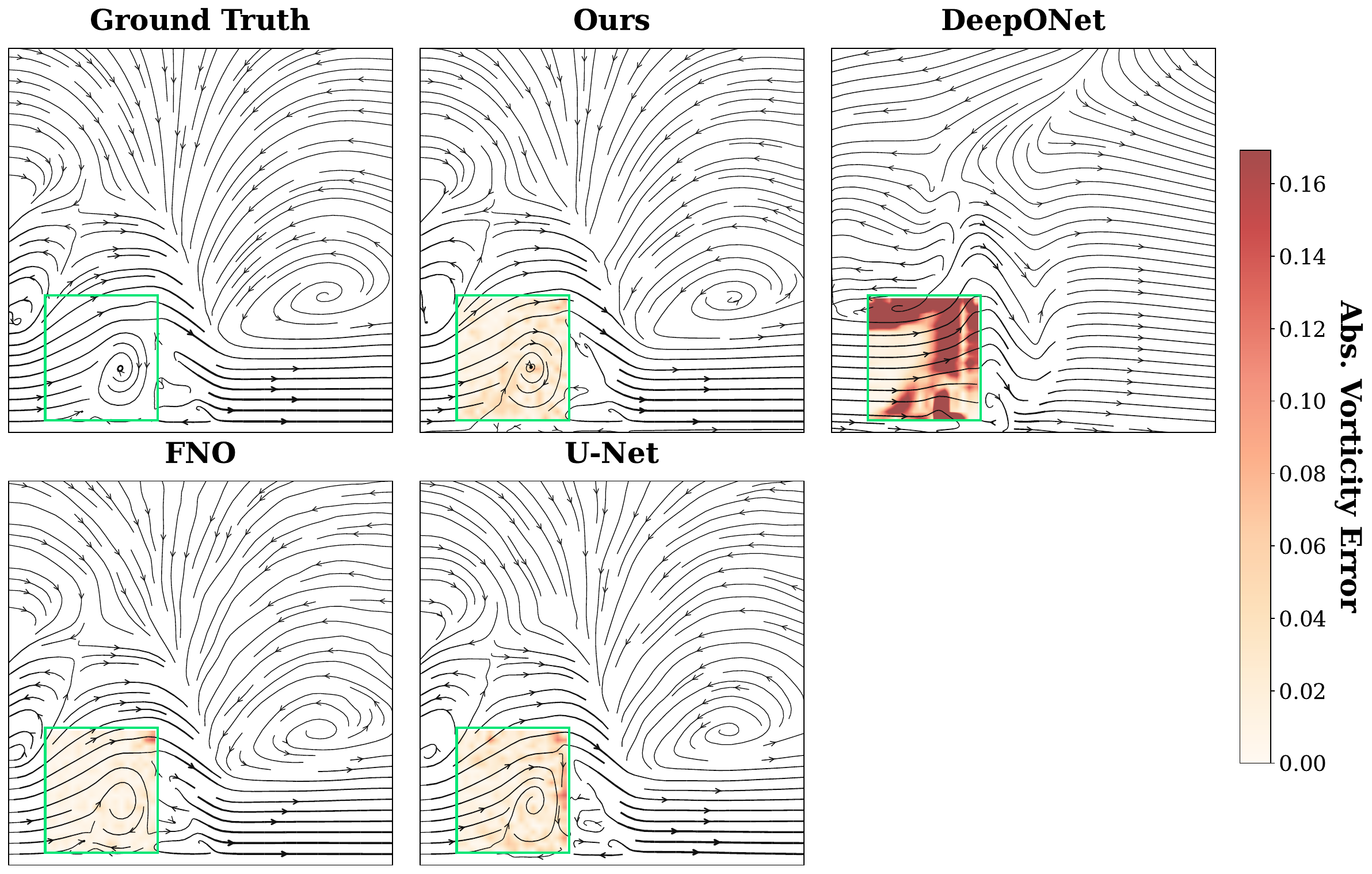}
    \caption{\textbf{Qualitative results on CFDBench-Dam.} Visualization of the fluid phase fraction and vorticity error. While FNO tends to over-smooth tight vortex cores, \sys accurately captures the sharp interface and rotational dynamics of the water column as it impacts the obstacle.}
    \label{fig:case_dam}
\end{figure}

\begin{figure}[htbp]
    \centering
    \includegraphics[width=\buildfigurewidth{\linewidth}]{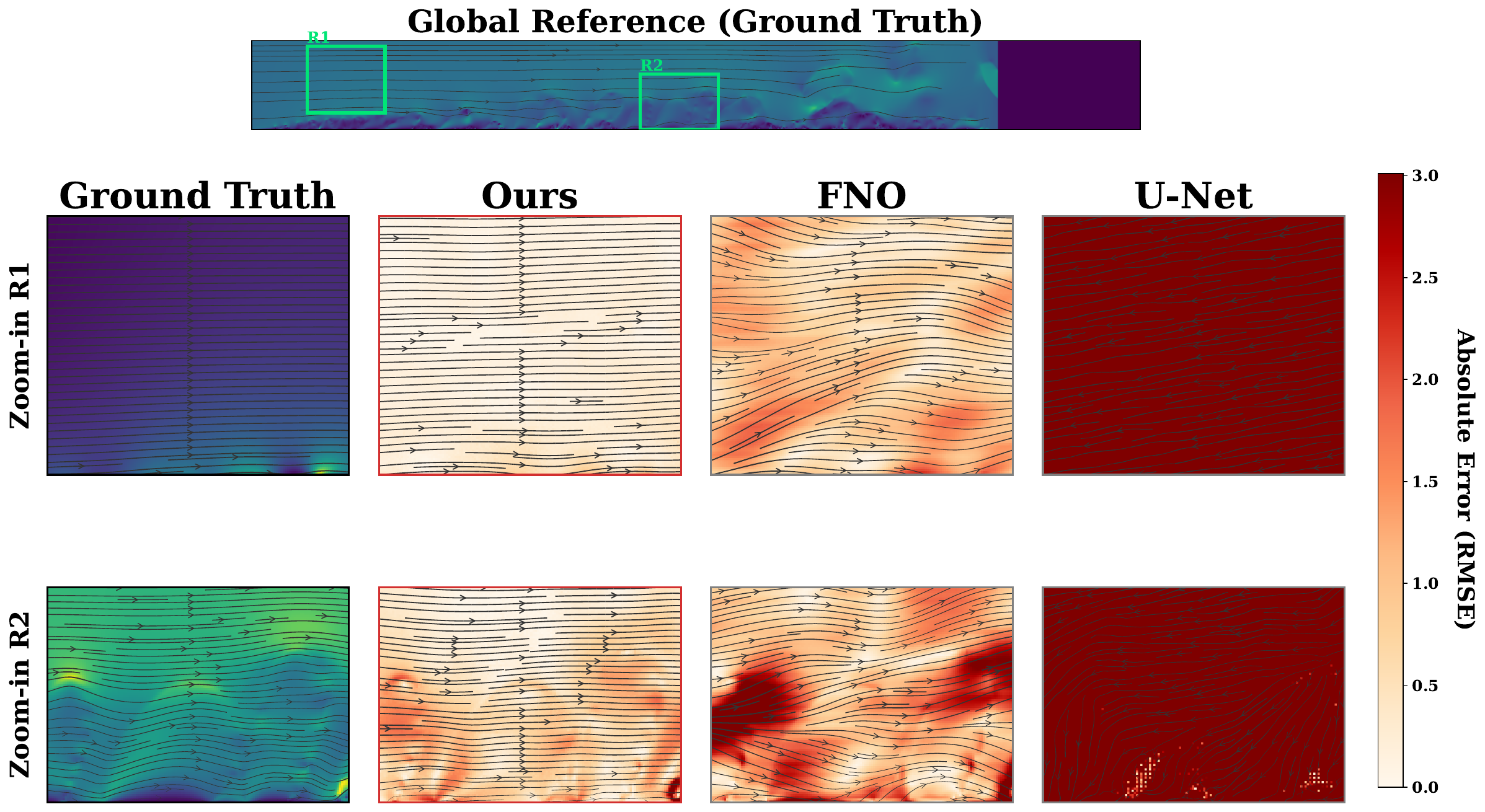}
    \caption{\textbf{Qualitative results on BubbleML FB-VelScale.} Visualization of gas phase distribution (void fraction). \sys preserves the integrity of small-scale bubbles (Box 2) that are prone to washout in global baselines, while maintaining stability in smooth inlet regions (Box 1).}
    \label{fig:case_inletvel}
\end{figure}

\begin{figure}[htbp]
    \centering
    \includegraphics[width=\buildfigurewidth{\linewidth}]{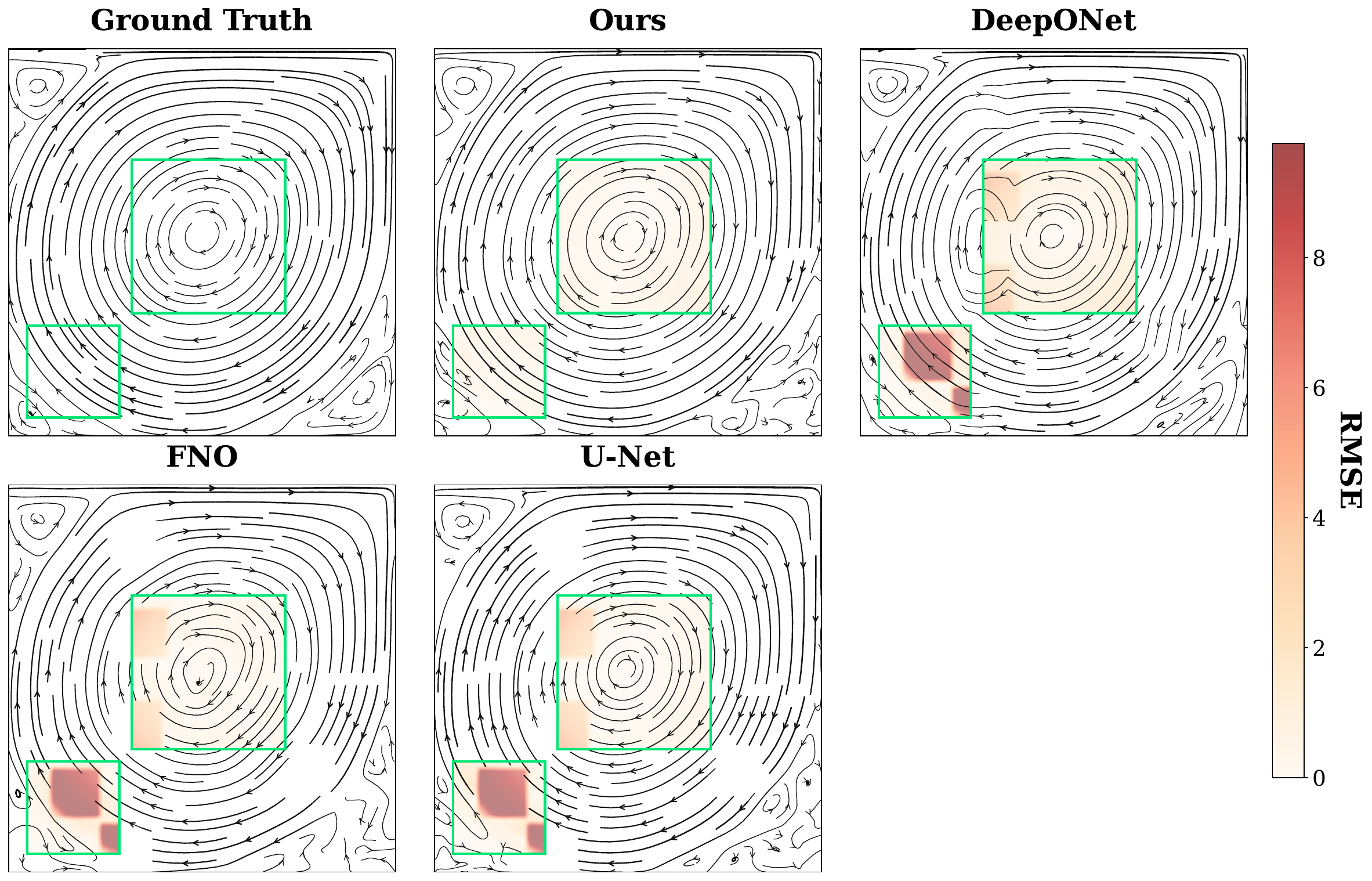}
    \caption{\textbf{Robustness to Global Block Masking (Cavity, $\approx$10\% masked).} The input contains zeroed-out blocks. Baselines exhibit non-local error propagation (artifacts appearing far from masks), whereas \sys strictly confines errors to the masked regions, preserving the central vortex details.}
    \label{fig:robust_cavity}
\end{figure}

\begin{figure}[htbp]
    \centering
    \includegraphics[width=\buildfigurewidth{\linewidth}]{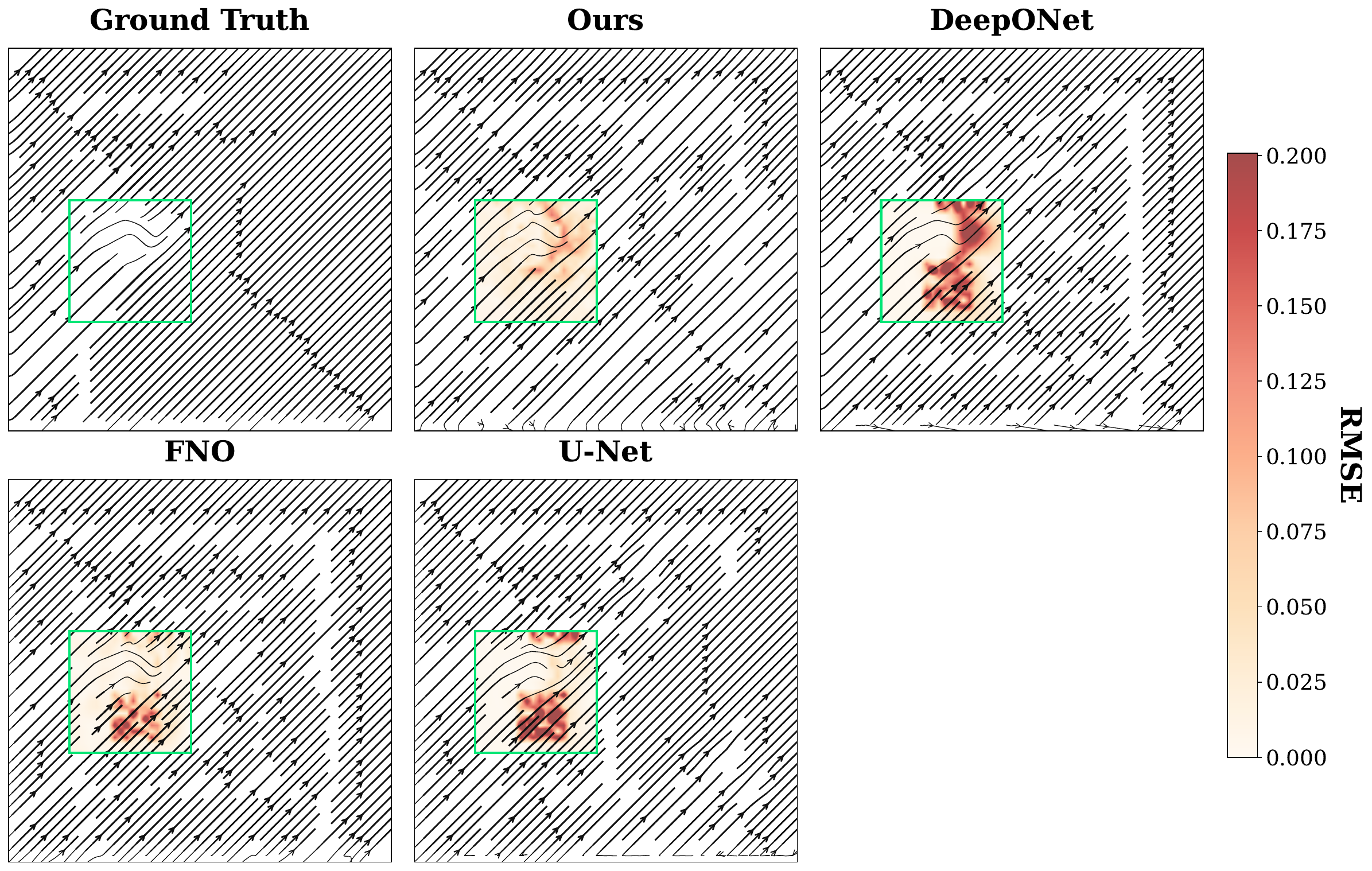}
    \caption{\textbf{Robustness to Localized Gaussian Noise (Cylinder, $\sigma=0.1$).} Comparison of streamlines under noisy input. Global models amplify noise into spurious turbulence. \sys acts as a conditional filter, recovering a smooth and physically consistent wake.}
    \label{fig:robust_cylinder}
\end{figure}

\clearpage


\section{Additional Theoretical Details and Proofs}
\label{sec:proofs}

\subsection{Notation and Standing Assumptions}
\begin{itemize}
  \item \textbf{Domain and norms.} The spatial domain is either the $d$-torus
  $\Omega=\mathbb{T}^d$ (periodic) or a bounded Lipschitz domain with homogeneous
  Neumann (no-flux) boundary conditions. We write
  $\langle f,g\rangle=\int_{\Omega} f\cdot g$ and $\|f\|_2^2=\langle f,f\rangle$.

  \item \textbf{Discrete field.} The next-step field is represented on a uniform grid as
  $U_{t+1}\in\mathbb{R}^{N\times c}$ (or equivalently vectorized in $\mathbb{R}^{Nc}$),
  with grid spacing $\Delta x$ per axis.

  \item \textbf{Ordering and stages.} An \emph{ordering scheme} induces an order $\sigma$
  of the spatial indices $\{1,\ldots,N\}$ and a stage partition $S_1,\ldots,S_K$ (disjoint union).

  \item \textbf{Stage-wise rollout map.} At inference, stage $s$ outputs
  \[
    Y_{S_s} \;=\; G_\theta\!\big(U_t,\;Y_{S_{<s}}\big),\qquad S_{<s}:=\{S_1,\ldots,S_{s-1}\}.
  \]
  The predictor $G_\theta$ is Fr\'echet differentiable in its second argument and is \emph{masked}:
  it does not access $Y_{S_s}$ nor any $Y_{S_r}$ with $r>s$.

  \item \textbf{Norms for block vectors.} For block vectors $z=(z_1,\ldots,z_m)$ we use
  the Euclidean norm $\|\cdot\|$ and the induced operator norm for Jacobians.

  \item \textbf{Helmholtz projection (incompressible case).} Let
  \[
    \mathcal{V}=\{\,v\in L^2(\Omega;\mathbb{R}^d)\;:\;\nabla\!\cdot v=0,\;
    v\!\cdot n|_{\partial\Omega}=0 \text{ (if } \partial\Omega\neq\emptyset)\,\},
  \]
  and let $P:L^2(\Omega;\mathbb{R}^d)\to \mathcal{V}$ denote the $L^2$-orthogonal projection
  (Helmholtz--Hodge projection): $Pv=v-\nabla \phi$ where $\phi$ solves
  $\Delta \phi=\nabla\!\cdot v$ with compatible BCs (and zero-mean in the periodic case).
\end{itemize}

\subsection{Causality from Masking: Triangular Jacobian}
\label{app:triangular}
\subsubsection{Block lower--triangular context Jacobian}

\begin{theorem}[Block lower--triangular context Jacobian]\label{thm:triangular}
Define $F:\mathbb{R}^{Nc}\to\mathbb{R}^{Nc}$ blockwise by
\[
\begin{aligned}
F_{S_1}(Y) &= G_\theta(U_t,\emptyset),\\
F_{S_2}(Y) &= G_\theta(U_t,Y_{S_1}),\\
&\ \,\vdots\\
F_{S_s}(Y) &= G_\theta(U_t,Y_{S_{<s}}).
\end{aligned}
\]
Let $J^{\mathrm{ctx}} := \frac{\partial F}{\partial Y}$ in the stage-block layout. Then
\[
J^{\mathrm{ctx}}_{s,r} \;=\; \frac{\partial F_{S_s}}{\partial Y_{S_r}} \;=\; 0 \qquad \text{for all } r\ge s,
\]
i.e., $J^{\mathrm{ctx}}$ is strictly block lower--triangular. In particular, $(J^{\mathrm{ctx}})^K=0$.
\end{theorem}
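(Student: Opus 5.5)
The plan is to read the zero pattern of $J^{\mathrm{ctx}}$ directly off the functional dependence of each block $F_{S_s}$, and then derive nilpotency from strict block lower-triangularity by a standard index-shifting argument.

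\textbf{Step 1: block structure.} Fix stages $s$ and $r$ with $r\ge s$. By the definition of $F$ and the masking assumption in the standing assumptions, $F_{S_s}(Y)=G_\theta(U_t,Y_{S_{<s}})$ depends on $Y$ only through the coordinates indexed by $S_1\cup\cdots\cup S_{s-1}$. Since the stages are disjoint, $S_r$ does not meet $S_{<s}$ when $r\ge s$. Write $Y=(Y_{S_{<s}},Y_{S_{\ge s}})$. Then the map $Y_{S_r}\mapsto F_{S_s}(Y)$, with all other blocks held fixed, is constant. Fr\'echet differentiability of $G_\theta$ in its second argument gives differentiability of $F$ by the chain rule, since $F_{S_s}$ is $G_\theta$ composed with a linear coordinate projection. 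The partial derivative of a constant map is zero, so $J^{\mathrm{ctx}}_{s,r}=0$ for every $r\ge s$. The case $s=1$ is immediate because $F_{S_1}$ does not depend on $Y$ at all.

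\textbf{Step 2: nilpotency.} Write $J=J^{\mathrm{ctx}}$. We show by induction on $p$ that $(J^p)_{s,r}=0$ whenever $s-r<p$. The base case $p=1$ is Step 1. For the inductive step, expand
\[
(J^{p+1})_{s,r}=\sum_q J_{s,q}(J^p)_{q,r}.
\]
A term can be nonzero only if $q<s$ (from Step 1) and $q-r\ge p$ (from the inductive hypothesis). Together these force $s-r\ge p+1$, so every term vanishes when $s-r<p+1$. Taking $p=K$, the condition $s-r<K$ holds for all $s,r\in\{1,\ldots,K\}$, since $s-r\le K-1$. Hence $(J^{\mathrm{ctx}})^K=0$.

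\textbf{Main obstacle.} The only subtle point is interpreting ``masked'' correctly. The stated masking must mean genuine functional independence from $Y_{S_{\ge s}}$, so that partial derivatives vanish identically and not merely at some points. We also need the Jacobian evaluated in the stage-block layout, that is, after permuting coordinates by $\sigma$ and grouping them by stage. Both facts are given by the standing assumptions, so the proof should go through without difficulty.
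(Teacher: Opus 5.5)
Your proposal is correct and follows the paper's proof: the zero blocks come from the fact that $F_{S_s}$ depends only on $Y_{S_{<s}}$, and nilpotency follows from strict block lower-triangularity. The only difference is that you write out the induction showing $(J^{p})_{s,r}=0$ whenever $s-r<p$, which the paper simply cites as a standard fact about strictly lower-triangular block matrices.
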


\begin{proof}
By construction, $F_{S_s}$ depends only on $(Y_{S_1},\ldots,Y_{S_{s-1}})$. Hence for any $r\ge s$, $F_{S_s}$ is independent of $Y_{S_r}$ and the Fr\'echet partial derivative with respect to $Y_{S_r}$ is the zero operator, giving $J^{\mathrm{ctx}}_{s,r}=0$. Strict lower--triangular block matrices are nilpotent with index at most $K$.
\end{proof}

\begin{corollary}[One-way error flow]\label{cor:oneway}
A perturbation injected into $Y_{S_r}$ cannot affect any $Y_{S_q}$ with $q\le r$. Error can only propagate to later stages.
\end{corollary}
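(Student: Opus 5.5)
The argument is a direct induction on the stage index. It relies only on the masking structure: stage $s$ reads $Y_{S_{<s}}$ and nothing else. I will make precise what "a perturbation injected into $Y_{S_r}$" means for the executed rollout. The executed output is not an arbitrary fixed point of $F$. It is the sequential composition
\[
Y_{S_1}=G_\theta(U_t,\emptyset),\quad Y_{S_2}=G_\theta(U_t,Y_{S_1}),\quad\ldots
\]
Injecting a perturbation $\delta$ at stage $r$ means replacing the committed value $Y_{S_r}$ by $Y_{S_r}+\delta$ and then continuing the rollout unchanged. Denote the perturbed trajectory by $\tilde Y$.

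The first step handles stages $q<r$. These values are computed, and committed, before stage $r$ is executed. Each of them is a function only of $U_t$ and of $Y_{S_1},\ldots,Y_{S_{q-1}}$. I will show $\tilde Y_{S_q}=Y_{S_q}$ for all $q<r$ by induction on $q$. The base case is $\tilde Y_{S_1}=G_\theta(U_t,\emptyset)=Y_{S_1}$. For the inductive step, suppose the claim holds for all indices below $q$. Then $\tilde Y_{S_q}=G_\theta(U_t,\tilde Y_{S_{<q}})=G_\theta(U_t,Y_{S_{<q}})=Y_{S_q}$. This step uses only the fact that $G_\theta$ at stage $q$ accesses no $Y_{S_{r'}}$ with $r'\ge q$, which is the masking assumption.

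For $q=r$, the perturbed entry is the injected value itself. The content of the claim is that no later stage writes back into $S_r$. This holds because the stages are disjoint and each stage overwrites only its own block, by \eqref{eq:executor_scatter}. Hence the error in $Y_{S_r}$ is exactly $\delta$ and is not modified by any feedback. Stages $q>r$ are the only blocks whose inputs include $Y_{S_r}$, so they are the only places the error can appear; this gives the second sentence of the corollary.

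For an infinitesimal perturbation, I will also give the linearized version via Theorem~\ref{thm:triangular}. The first-order error in the stage-$q$ output is $\sum_{r'<q} J^{\mathrm{ctx}}_{q,r'}\,\delta Y_{S_{r'}}$. All terms with $r'\ge q$ vanish by strict block lower-triangularity. Unrolling this recursion, the propagated error is $\delta_q=\big[(I-J^{\mathrm{ctx}})^{-1}\big]_{q,r}\delta$. This expression is a finite Neumann sum because $(J^{\mathrm{ctx}})^K=0$, and the resulting matrix is block lower-triangular with identity diagonal blocks, so $\delta_q=0$ for $q<r$. The main subtlety is notational rather than mathematical: the argument must distinguish "affect" in the executed sequential rollout from the Jacobian of $F$ taken at a fixed $Y$. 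I will state the nonlinear induction as the actual proof and present the Jacobian computation as a consistency check.
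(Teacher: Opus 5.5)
Your proof is correct, but its main line differs from the paper's. The paper's proof is one line: it reads the corollary off Theorem~\ref{thm:triangular}, namely $J^{\mathrm{ctx}}_{q,r}=0$ for $q\le r$.

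You instead induct on the stage index along the nonlinear executed trajectory. The induction uses only masking and the fact that the scatter \eqref{eq:executor_scatter} writes only into the current, disjoint block, and the Jacobian serves only as a check. Your route is more elementary and slightly stronger in three ways:
\begin{itemize}
\item it is exact for finite perturbations;
\item it needs no Fr\'echet differentiability;
\item it makes explicit why multi-hop propagation through intermediate stages never returns to an earlier stage.
\end{itemize}
The paper's one-liner leaves the third point implicit. Its linearized form is exactly your observation that $(I-J^{\mathrm{ctx}})^{-1}=\sum_{k=0}^{K-1}(J^{\mathrm{ctx}})^k$ is block lower-triangular with identity diagonal blocks. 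The paper's route buys brevity and ties the corollary to the nilpotent structure reused in Proposition~\ref{prop:prodL} and Theorem~\ref{thm:selfhealing}.

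One clarification of your framing: the executed output \emph{is} the unique fixed point of $F$. This follows by forward substitution; indeed $F^K$ sends every $Y$ to it. That fact resolves the subtlety you raise. The perturbed rollout satisfies $\tilde Y=F(\tilde Y)+e$ with $e$ supported on $S_r$, so linearizing at the executed trajectory gives, to first order, $(I-J^{\mathrm{ctx}})\,\delta Y=e$.

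Also note that the paper's version of the case $q=r$ is $J^{\mathrm{ctx}}_{r,r}=0$. That is, stage $r$ does not read its own block, so a perturbation already sitting in $S_r$ when stage $r$ runs is simply overwritten. Yours is the complementary statement that nothing later writes back into $S_r$. One extra line would let your nonlinear argument cover both readings.
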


\begin{proof}
Immediate from $J^{\mathrm{ctx}}_{q,r}=0$ for $q\le r$.
\end{proof}

\subsection{Stage Sensitivity and Error Propagation}\label{app:stability}
\subsubsection{Lipschitz product bound}

\begin{assumption}[Stage Lipschitzness]\label{assump:lipschitz}
For each stage $s$ there exists $L_s\ge 0$ such that for all context vectors $a,b$,
\[
\big\|G_\theta(U_t,a)-G_\theta(U_t,b)\big\| \;\le\; L_s\,\|a-b\|.
\]
\end{assumption}

\begin{proposition}[Geometric damping of upstream perturbations]\label{prop:prodL}
Fix $r<s$. Consider the composite mapping from $Y_{S_r}$ to $Y_{S_s}$ induced by executing stages $r+1,\ldots,s$ with all other inputs fixed. Then
\[
\left\|\,\frac{\partial\,Y_{S_s}}{\partial\,Y_{S_r}}\,\right\|
\;\le\; \prod_{q=r+1}^{s} L_q .
\]
Equivalently, any small perturbation $\delta Y_{S_r}$ obeys
\[
\|\delta Y_{S_s}\| \;\le\; \Big(\prod_{q=r+1}^{s} L_q\Big)\,\|\delta Y_{S_r}\|.
\]
\end{proposition}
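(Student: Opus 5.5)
The plan is to read the composite map in the statement literally, as a chain of single-stage maps, and then combine the chain rule with submultiplicativity of operator norms. Theorem~\ref{thm:triangular} gives the structural input: stage $q$ reads only blocks $S_{<q}$. So once $U_t$ and every context block other than the one being propagated are frozen, stage $q$ becomes a map of the single block produced by stage $q-1$. Accordingly, for $q=r+1,\ldots,s$ I define
\[
h_q(z) \;:=\; G_\theta\big(U_t,\;(\bar Y_{S_1},\ldots,\bar Y_{S_{q-2}},\,z)\big),
\]
where the $\bar Y$ are the fixed nominal values. The composite map of the proposition is then
\[
\Phi \;:=\; h_s\circ h_{s-1}\circ\cdots\circ h_{r+1},
\]
which sends $Y_{S_r}$ to $Y_{S_s}$. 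Corollary~\ref{cor:oneway} confirms this is well defined: no later stage feeds back into an earlier one, so the composition contains no cycles.

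\textbf{Step 1 (each link is $L_q$-Lipschitz).} Take two context vectors $a,b$ that agree except in the last block, where they equal $z$ and $z'$. Their Euclidean difference is $\|a-b\|=\|z-z'\|$. Assumption~\ref{assump:lipschitz} therefore gives
\[
\|h_q(z)-h_q(z')\| \;\le\; L_q\|z-z'\|.
\]
Because $G_\theta$ is Fr\'echet differentiable in its second argument, this Lipschitz bound also controls the derivative: $\|Dh_q(z)\|\le L_q$ in the induced operator norm. To see this, bound the difference quotient $\|h_q(z+\epsilon v)-h_q(z)\|/\epsilon$ and let $\epsilon\to0$.

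\textbf{Step 2 (chain rule).} The chain rule gives
\[
\frac{\partial Y_{S_s}}{\partial Y_{S_r}} \;=\; Dh_s\cdots Dh_{r+1},
\]
with each factor evaluated at the corresponding intermediate point. Submultiplicativity of the operator norm then yields the first inequality of the proposition, $\prod_{q=r+1}^{s}L_q$. For the perturbation form, compose the Lipschitz bounds from Step 1 directly:
\[
\|\Phi(y)-\Phi(y')\| \;\le\; \Big(\prod_{q=r+1}^{s} L_q\Big)\|y-y'\|.
\]
This holds for all perturbations $\delta Y_{S_r}$, not only small ones, and it does not need differentiability.

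\textbf{Main obstacle.} The crux is justifying the chain reading of "all other inputs fixed". Stage $s$ also reads $Y_{S_r}$ directly, as well as $Y_{S_{r+2}},\ldots$. If the intermediate blocks were allowed to respond to the perturbation while only $U_t$ is held fixed, the total derivative would be a sum over all paths through the block lower-triangular Jacobian of Theorem~\ref{thm:triangular}, namely $(I-J^{\mathrm{ctx}})^{-1}$ restricted to block $(s,r)$. A product bound cannot control that sum in general. I will therefore state explicitly, at the start of the proof, that the composite map is the stage-to-stage chain in which each stage's only varying input is the preceding stage's output. This is exactly what "with all other inputs fixed" licenses, and under this reading the stated bound $\prod_{q=r+1}^{s}L_q$ follows exactly from the assumption's constants.
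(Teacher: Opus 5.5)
Your argument matches the paper's proof: freeze everything except the immediately preceding block, so each stage map is $L_q$-Lipschitz by Assumption~\ref{assump:lipschitz}, then multiply the constants along $h_s\circ\cdots\circ h_{r+1}$; your chain-rule step only makes the Jacobian form explicit. Your caveat is also right. The paper's proof silently uses the same chain reading of ``all other inputs fixed.'' Under the full-execution reading, where stage $s$ also sees $Y_{S_r}$ directly and through the intermediate stages, the product bound can fail, so stating the interpretation explicitly is the correct move.
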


\begin{proof}
Define $H_{r+1}(z)=G_{r+1}(U_t,(Y_{S_{<r}},z))$, which is $L_{r+1}$-Lipschitz by Assumption~\ref{assump:lipschitz}. Similarly define $H_{r+2},\ldots,H_s$ for successive stages. The dependence from $Y_{S_r}$ to $Y_{S_s}$ is the composition $Y_{S_s} = H_s\!\circ\cdots\circ H_{r+1}(Y_{S_r})$. Lipschitz constants multiply under composition, yielding the bound.
\end{proof}

\begin{corollary}[Monotone decay under easier later stages]\label{cor:monotone}
If $L_{q+1}\le L_q$ and often $L_q<1$, then $\prod_{q=r+1}^{s}L_q$ decays geometrically in $s-r$. A widening schedule (smaller $K$) further reduces worst-case propagation depth.
\end{corollary}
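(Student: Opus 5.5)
The statement to prove is Corollary~\ref{cor:monotone}. I will obtain it directly from Proposition~\ref{prop:prodL}, which already bounds the sensitivity of $Y_{S_s}$ to $Y_{S_r}$ by $\prod_{q=r+1}^{s}L_q$. The work is to make precise what ``often $L_q<1$'' means and to bound this product by a geometric sequence in $s-r$.

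\textbf{Step 1: fix the hypothesis.} I will read the hypothesis as follows: the sequence $(L_q)$ is nonincreasing, and there is some stage $q_0$ with $L_{q_0}=\lambda<1$. Monotonicity then gives $L_q\le\lambda$ for every $q\ge q_0$.

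\textbf{Step 2: split the product.} For $r<s$, write
\[
\prod_{q=r+1}^{s}L_q=\Big(\prod_{q=r+1}^{\min(s,q_0-1)}L_q\Big)\Big(\prod_{q=\max(r+1,q_0)}^{s}L_q\Big).
\]
The first factor has at most $q_0-1$ terms, each at most $L_1$. It is therefore bounded by a constant $C=\max(1,L_1)^{q_0-1}$ that does not depend on $s$ or $r$. The second factor is at most $\lambda^{\,s-\max(r,q_0-1)}$. Combining the two gives
\[
\|\delta Y_{S_s}\|\le C\lambda^{-q_0}\,\lambda^{\,s-r}\,\|\delta Y_{S_r}\|,
\]
which is geometric decay in $s-r$. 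In the clean case where $L_q<1$ for every stage, one takes $\lambda=L_{r+1}$ and obtains $\prod_{q=r+1}^{s} L_q\le L_{r+1}^{\,s-r}$ with no constant.

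\textbf{Step 3: the widening-schedule claim.} This is the step I expect to be the main obstacle, because it is informal. I will formalize ``worst-case propagation depth'' as the maximum number of Lipschitz factors in any chain, which is $s-r\le K-1$ by Theorem~\ref{thm:triangular} and the nilpotency $(J^{\mathrm{ctx}})^K=0$. Choosing a smaller $K$ shortens the longest chain, so the worst-case amplification is at most $\max(1,L_{\max})^{K-1}$, and this bound is monotone in $K$.

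I will state the caveat explicitly: the $L_s$ themselves may change when stages are merged. The clean claim therefore holds only under a uniform bound $L_s\le L$ that is assumed across schedules, and I will record that assumption in the statement.
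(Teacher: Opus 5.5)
\textbf{What is correct.} Your Steps 1--2 are right as arithmetic and follow the route the paper intends. The paper gives no separate proof of this corollary and treats it as immediate from Proposition~\ref{prop:prodL}. Read literally, as a claim about the numbers $\prod_{q=r+1}^{s}L_q$, your split is a complete proof and does not even need that proposition. One edge case: if $\lambda=0$, replace it by any $\lambda'\in(0,1)$ before writing $\lambda^{-q_0}$.

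\textbf{The gap.} It lies in converting the product into $\|\delta Y_{S_s}\|\le C\lambda^{-q_0}\lambda^{s-r}\|\delta Y_{S_r}\|$, and into the Step 3 amplification bound $\max(1,L_{\max})^{K-1}$. Both rest on Proposition~\ref{prop:prodL}, whose proof writes $Y_{S_s}=H_s\circ\cdots\circ H_{r+1}(Y_{S_r})$. That composition is valid only if stage $q$ reads nothing but $Y_{S_{q-1}}$. In the paper's model stage $q$ reads all of $Y_{S_{<q}}$, and Algorithm~\ref{alg:compiler} draws context from any earlier stage. So $\partial Y_{S_s}/\partial Y_{S_r}$ is the $(s,r)$ block of $(I-J^{\mathrm{ctx}})^{-1}$. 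That is, it is a sum over all increasing paths $r=j_0<\cdots<j_k=s$ of $J^{\mathrm{ctx}}_{s,j_{k-1}}\cdots J^{\mathrm{ctx}}_{j_1,r}$. The one-hop term $J^{\mathrm{ctx}}_{s,r}$ alone can have norm $L_s$, whatever $s-r$ is.

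Two counterexamples, both with scalar stages and $L_q\equiv\lambda<1$:
\begin{itemize}
\item Let every stage $q\ge2$ output $\lambda y_1$. Each map is $\lambda$-Lipschitz in its context, yet $\partial y_s/\partial y_1=\lambda$ for all $s$, not $\lambda^{s-1}$.
\item Let stage $q$ output $\lambda\langle w_q,\cdot\rangle$, with $w_q$ the unit vector along the accumulated sensitivities $(\partial y_j/\partial y_1)_{j<q}$. Then $\partial y_s/\partial y_1=\lambda(1+\lambda^2)^{(s-2)/2}$, which grows. With $\lambda=1$ and $K=3$ this is $y_2=y_1$, $y_3=(y_1+y_2)/\sqrt2$, with amplification $\sqrt2>\max(1,L_{\max})^{K-1}=1$.
\end{itemize}

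\textbf{What does hold, and how to repair.} Take the perturbation supported on stages $\ge r$. Then $\|\delta Y_{S_{j+1}}\|\le L_{j+1}\|\delta Y_{S_{\le j}}\|$ gives $\|\delta Y_{S_{\le j+1}}\|\le\sqrt{1+L_{j+1}^2}\,\|\delta Y_{S_{\le j}}\|$. Hence $\|\delta Y_{S_s}\|\le L_s\prod_{q=r+1}^{s-1}\sqrt{1+L_q^2}\,\|\delta Y_{S_r}\|$, which does not decay in $s-r$.

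You therefore have two options. Either state the corollary purely about the products, or add a hypothesis that restores decay of perturbations:
\begin{itemize}
\item Stage-Markov context, $\mathrm{ctx}(i)\subseteq S_{s-1}$ for $i\in S_s$. Then the composition argument is valid and your bound holds verbatim.
\item Direct couplings with $\|J^{\mathrm{ctx}}_{q,j}\|\le a\mu^{q-j}$ and $\mu(1+a)<1$. Then the path sum is at most $a\mu^{s-r}(1+a)^{s-r-1}$.
\end{itemize}

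\textbf{Step 3.} The depth part survives, since it uses only nilpotency: every path has at most $K-1$ hops. With a uniform $L$, the corrected worst case $L(1+L^2)^{(K-2)/2}$ is still increasing in $K$, so it should replace $\max(1,L_{\max})^{K-1}$. Your caveat that merging stages changes the $L_s$ still stands.
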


\subsubsection{Stage-wise attenuation (sufficient-condition analysis)}

\begin{theorem}[Stage-wise error attenuation under contraction]\label{thm:selfhealing}
Let $y^\star_{S_s}$ be the predictions obtained when earlier contexts equal their ground-truth values, and let $\hat y_{S_s}$ be the actual predictions obtained when earlier stages may carry errors. Suppose Assumption~\ref{assump:lipschitz} holds and $\max_s L_s\le \lambda<1$. Then for every $s$,
\[
\big\|\hat y_{S_s}-y^\star_{S_s}\big\|
\;\le\;
\sum_{r=1}^{s-1}\Big(\prod_{q=r+1}^{s} L_q\Big)\;\big\|\hat y_{S_r}-y^\star_{S_r}\big\|.
\]
In particular,
\[
\max_{j\le s}\big\|\hat y_{S_j}-y^\star_{S_j}\big\|
\;\le\;
\frac{\lambda}{1-\lambda}\;\max_{j< s}\big\|\hat y_{S_j}-y^\star_{S_j}\big\|.
\]
Thus newly generated tokens attenuate prior errors, and total error cannot blow up across stages.
\end{theorem}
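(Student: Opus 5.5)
The plan is to write the error as a one-way recursion over stages, using the strictly block lower--triangular structure of Theorem~\ref{thm:triangular}, and then bound it with the Lipschitz constants of Assumption~\ref{assump:lipschitz} and the composition bound of Proposition~\ref{prop:prodL}. Write $e_s:=\|\hat y_{S_s}-y^\star_{S_s}\|$. By Corollary~\ref{cor:oneway}, $e_s$ can depend only on discrepancies in $S_{<s}$. The comparison at stage $s$ is between $G_\theta(U_t,\hat Y_{S_{<s}})$ and $G_\theta(U_t,Y^\star_{S_{<s}})$. Assumption~\ref{assump:lipschitz} together with the triangle inequality for block vectors ($\|(z_1,\dots,z_m)\|\le\sum_j\|z_j\|$) gives the crude one-step bound
\[
e_s\;\le\;L_s\sum_{r<s}e_r .
\]

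The second step upgrades this to the product-weighted form in the statement. I would linearize along the rollout, treating each $\hat y_{S_r}-y^\star_{S_r}$ as a discrepancy injected at stage $r$. Theorem~\ref{thm:triangular} says the total sensitivity $\partial Y_{S_s}/\partial Y_{S_r}$ is a sum over chains $r=q_0<q_1<\dots<q_k=s$ of products of block Jacobians. Proposition~\ref{prop:prodL} bounds the contribution from stage $r$ to stage $s$ by $\prod_{q=r+1}^{s}L_q$. Summing over the injection stages $r<s$ then yields
\[
e_s\;\le\;\sum_{r<s}\Big(\prod_{q=r+1}^{s}L_q\Big)e_r .
\]

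The third step is the geometric estimate. Since $L_q\le\lambda<1$, the weight $\prod_{q=r+1}^{s}L_q$ is at most $\lambda^{s-r}$. Bounding each $e_r$ by $\max_{j<s}e_j$ gives
\[
e_s\;\le\;\Big(\sum_{k\ge1}\lambda^{k}\Big)\max_{j<s}e_j\;=\;\frac{\lambda}{1-\lambda}\,\max_{j<s}e_j .
\]
For the stated maximum over $j\le s$, I would combine this with the trivial bound on the terms $j<s$. This last combination is valid as written only when $\lambda/(1-\lambda)\ge1$. Otherwise the clean reading is the bound on $e_s$ alone, and the conclusion ``errors cannot blow up'' follows by induction because the factor is finite.

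I expect the main obstacle to be the second step. The Lipschitz assumption controls only the whole-context map $a\mapsto G_\theta(U_t,a)$. So the naive argument gives the weaker weight $L_s$ on every earlier stage, not the full product $\prod_{q=r+1}^{s}L_q$. To get the product I would first try to define $e_r$ as the \emph{fresh} error introduced at stage $r$ and propagate it through intermediate stages with Proposition~\ref{prop:prodL}. This needs the stage-to-stage maps to compose along a single chain, which Proposition~\ref{prop:prodL}'s proof implicitly assumes. If that fails, I would fall back to the crude bound $e_s\le\lambda\sum_{r<s}e_r$ and state the attenuation result with an explicitly adjusted constant.
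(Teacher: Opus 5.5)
Your outline is the paper's own proof: a one-step Lipschitz bound, then ``expand block-by-block and apply Proposition~\ref{prop:prodL}'', then the geometric series. The gap is exactly where you suspected, in Step~2. Assumption~\ref{assump:lipschitz} controls only the whole-context map, and at stage $s$ the predictor reads $Y_{S_r}$ \emph{directly}, not only through stages $r+1,\dots,s-1$. If you actually sum your chain expansion over all increasing chains $r=q_0<q_1<\dots<q_k=s$, bounding each block Jacobian by $L_{q_i}$, the weight you obtain is
\[
L_s\prod_{q=r+1}^{s-1}\bigl(1+L_q\bigr)\qquad\text{rather than}\qquad \prod_{q=r+1}^{s}L_q .
\]
The product $\prod_{q=r+1}^{s}L_q$ is the contribution of only the single chain through every intermediate stage. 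That chain is all the composition $H_s\circ\cdots\circ H_{r+1}$ in the proof of Proposition~\ref{prop:prodL} accounts for. Two further problems: feeding the total discrepancies $e_r$ in as ``injected'' errors double counts, since $e_{r'}$ for $r<r'<s$ already contains the propagated part of $e_r$. And Step~1 tacitly identifies $y^\star_{S_{<s}}$ with the ground-truth context, i.e.\ it assumes the predictor is exact on ground-truth inputs.

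The step cannot be patched, because the first inequality is false under the stated hypotheses. Take scalar stages with $K=3$. Use context maps $a_1\mapsto\epsilon a_1$ at stage~2 and $(a_1,a_2)\mapsto\lambda a_1$ at stage~3, so $L_2=\epsilon\le\lambda=L_3$, and let stage~1 carry a discrepancy $\delta\neq0$. Then $e_2=\epsilon|\delta|$ and $e_3=\lambda|\delta|$, while the claimed bound is $L_2L_3e_1+L_3e_2=2\epsilon\lambda|\delta|$. This is smaller than $e_3$ as soon as $\epsilon<1/2$. With $r=1$, $s=3$, the same example refutes Proposition~\ref{prop:prodL}.

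Your fallback does not rescue the second display either. The hypothesis yields only $e_s\le L_s\bigl(\sum_{r<s}e_r^2\bigr)^{1/2}\le\lambda\sqrt{s-1}\,\max_{j<s}e_j$. This is sharp: take $a\mapsto\frac{\lambda}{\sqrt{s-1}}\sum_{r<s}a_r$ with equal earlier discrepancies. So no $s$-independent constant such as $\lambda/(1-\lambda)$ exists.

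Nor can you conclude that errors do not blow up. Let each stage map be $\lambda$ times the component of its context along the accumulated discrepancy. Then a single stage-1 error gives $\|\hat y_{S_{\le s}}-y^\star_{S_{\le s}}\|=(1+\lambda^2)^{(s-1)/2}|\delta|$.

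Your remark that the $\max_{j\le s}$ form forces $\lambda\ge 1/2$ is correct. Note, though, that an induction of that kind needs a factor at most $1$, not merely a finite one.

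An attenuation theorem needs a stronger hypothesis. One option is Lipschitz continuity with respect to the block-max norm $\max_{r<s}\|a_r-b_r\|$, which gives $e_s\le\lambda\max_{j<s}e_j$ at once. Another is a chain structure in which stage $s$ reads only stage $s-1$; there Proposition~\ref{prop:prodL} does hold.
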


\begin{proof}
By definition,
\(
\hat y_{S_s}-y^\star_{S_s}
=
G_\theta(U_t,\hat y_{S_{<s}})-G_\theta(U_t,y^\star_{S_{<s}}).
\)
Assumption~\ref{assump:lipschitz} and the triangle inequality imply
$\|\hat y_{S_s}-y^\star_{S_s}\|\le L_s\,\|\hat y_{S_{<s}}-y^\star_{S_{<s}}\|$.
Expand block-by-block and apply Proposition~\ref{prop:prodL}; then bound the geometric series by $\lambda/(1-\lambda)$.
\end{proof}

\subsection{Short-Horizon Locality and Coverage Bounds}\label{app:locality}

\subsubsection{Kernel form for advection--diffusion}

\begin{lemma}[Advection--diffusion fundamental solution]\label{lem:fundamental}
Consider the scalar advection--diffusion PDE
\[
\partial_t u + a\!\cdot\!\nabla u \;=\; \nu\,\Delta u,\qquad a\in \mathbb{R}^d,\ \nu>0,
\]
with sufficiently regular $u(\cdot,t)$ and periodic or no-flux BCs. Then
\[
u(x,t+\Delta t)
\;=\;
\int_{\mathbb{R}^d} G_{\Delta t}(y)\; u\big(x - a\,\Delta t - y,\;t\big)\,dy,
\]
\[
G_{\Delta t}(y)
\;=\;
\frac{1}{(4\pi\nu\Delta t)^{d/2}}\,
e^{-\frac{\|y\|^2}{4\nu\Delta t}}.
\]
\end{lemma}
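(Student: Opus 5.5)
The plan is to reduce the problem to the pure heat equation by moving to a frame that travels with the constant velocity $a$, and then use the Gaussian heat kernel. For now I take the whole-space setting $\Omega=\mathbb{R}^d$ (or, in the periodic case, work on the lifted periodic function on $\mathbb{R}^d$) and deal with boundary conditions at the end. Set
\[
w(x,\tau):=u(x+a\tau,\;t+\tau),\qquad \tau\in[0,\Delta t].
\]
The chain rule gives $\partial_\tau w=(\partial_t u + a\cdot\nabla u)(x+a\tau,t+\tau)=\nu\,\Delta u(x+a\tau,t+\tau)=\nu\,\Delta w(x,\tau)$. So $w$ solves the heat equation $\partial_\tau w=\nu\Delta w$ with initial datum $w(\cdot,0)=u(\cdot,t)$. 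Because $a$ is constant, the advection term is removed completely, with no commutator terms.

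Next I would apply the standard representation of the heat equation,
\[
w(z,\Delta t)=\int_{\mathbb{R}^d}G_{\Delta t}(y)\,w(z-y,0)\,dy,
\]
where $G_{\Delta t}$ is the Gaussian kernel in the statement. I can justify this either by quoting the classical result or by a quick Fourier-transform check: $\hat w(\xi,\tau)=e^{-\nu|\xi|^2\tau}\hat w(\xi,0)$, and the inverse transform of $e^{-\nu|\xi|^2\Delta t}$ is exactly $G_{\Delta t}$. To return to $u$, set $z=x-a\Delta t$. This gives $u(x,t+\Delta t)=w(x-a\Delta t,\Delta t)=\int G_{\Delta t}(y)\,u(x-a\Delta t-y,t)\,dy$, which is the claimed formula.

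The main obstacle is the boundary conditions, and the hypothesis "sufficiently regular" is what I would lean on here. The periodic case is routine. The periodic extension of $u(\cdot,t)$ to $\mathbb{R}^d$ is bounded, and the Gaussian convolution of a bounded initial datum is the unique bounded solution, so the Tychonoff growth class applies. That solution is periodic, so the whole-space formula holds verbatim. Equivalently, summing $G_{\Delta t}$ over lattice translates gives the torus kernel.

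The no-flux case on a bounded domain is where the formula cannot hold literally. There the integral is over $\mathbb{R}^d$, and it needs values of $u$ outside $\Omega$; moreover a nonzero drift $a$ is in general incompatible with reflection. I would therefore state it as holding exactly in the interior, up to an error from the boundary. Concretely, I would extend $u(\cdot,t)$ by even reflection across a flat boundary, where the Neumann condition makes the reflection consistent for $a$ tangential. I would then note that for points at distance $\gg |a|\Delta t+\sqrt{\nu\Delta t}$ from $\partial\Omega$ the discrepancy is exponentially small by Gaussian tail bounds. That interior statement is all the later locality results (such as Proposition~\ref{prop:cfl}) actually need.
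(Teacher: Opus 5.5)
Your proof is correct and follows the paper's argument: the moving-frame substitution $w(x,\tau)=u(x+a\tau,t+\tau)$ reduces the equation to the heat equation, and convolving with the Gaussian kernel and undoing the shift gives the formula. Your boundary discussion goes beyond the paper, whose one-line proof implicitly works on $\mathbb{R}^d$. You are right that the periodic case follows by lifting, and that in the bounded no-flux case the formula holds exactly only under a suitable extension (e.g.\ even reflection across a flat boundary with tangential $a$), and otherwise only up to an exponentially small error away from the boundary. Note, however, that the global sup- and $L^2$-bounds in Proposition~\ref{prop:cfl} would need more than that interior version.
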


\begin{proof}
  Let $w(x,t)=u(x+a t,t)$. Then $\partial_t w = a\!\cdot\!\nabla u + \partial_t u = \nu\,\Delta u = \nu\,\Delta w$. Hence $w_t=\nu\Delta w$ and $w(\cdot,t+\Delta t)=G_{\Delta t}*w(\cdot,t)$. Rewriting in terms of $u$ yields the claim.
\end{proof}
 
\subsubsection{Tail mass inside a radius}

\begin{lemma}[Mass inside a radius]\label{lem:mass}
For $Y\sim \mathcal{N}(0,2\nu\Delta t\,I_d)$,
\[
\mathbb{P}\big(\|Y\|\le r\big)
\;=\;
F_{\chi^2_d}\!\left(\frac{r^2}{2\nu\Delta t}\right),
\]
where $F_{\chi^2_d}$ is the CDF of the $\chi^2$ distribution with $d$ degrees of freedom.
\end{lemma}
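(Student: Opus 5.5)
The identity follows from one standard fact: the squared Euclidean norm of a standard Gaussian vector has a $\chi^2_d$ law. I will rescale $Y$ to a standard normal and then translate the event $\{\|Y\|\le r\}$ into an event about a $\chi^2_d$ variable.

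Set $\tau^2:=2\nu\Delta t>0$. Because $Y\sim\mathcal{N}(0,\tau^2 I_d)$, the vector $Z:=Y/\tau$ is distributed as $\mathcal{N}(0,I_d)$. Its coordinates $Z_1,\ldots,Z_d$ are therefore i.i.d.\ standard normals. By the definition of the $\chi^2$ distribution with $d$ degrees of freedom,
\[
\|Z\|^2=\sum_{j=1}^d Z_j^2\sim\chi^2_d .
\]
If a self-contained justification is wanted, one can instead compute the moment generating function, $\mathbb{E}\,e^{s\|Z\|^2}=(1-2s)^{-d/2}$ for $s<1/2$, which is the $\chi^2_d$ MGF.

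Now fix $r\ge 0$. Since both sides of $\|Y\|\le r$ are nonnegative, squaring gives the equality of events
\[
\{\|Y\|\le r\}=\{\|Y\|^2\le r^2\}=\{\|Z\|^2\le r^2/\tau^2\}.
\]
Taking probabilities yields $F_{\chi^2_d}\!\big(r^2/(2\nu\Delta t)\big)$, which is the claim. For $r<0$ the left-hand side is $0$, and the formula agrees under the convention that the right-hand side is also $0$; this case can be noted in one line or excluded by assuming $r\ge0$.

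No step is a real obstacle. The only points requiring care are the bookkeeping ones. First, the covariance of $Y$ is $2\nu\Delta t\,I_d$, and this matches the heat kernel $G_{\Delta t}$ of Lemma~\ref{lem:fundamental}, whose variance per coordinate is $2\nu\Delta t$. Second, the scaling therefore divides $r^2$ by $2\nu\Delta t$, not by $4\nu\Delta t$ and not by its square root. I would close by remarking the consequence the authors presumably use next: choosing $r_p$ so that $F_{\chi^2_d}(r_p^2/(2\nu\Delta t))\ge 1-\varepsilon$ gives the diffusion radius $r_p=O(\sqrt{\nu\Delta t})$ that enters $r_{\mathrm{req}}$.
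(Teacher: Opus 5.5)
Your proof is correct and takes the same route as the paper. The paper's proof is a single line asserting $\|Y\|^2/(2\nu\Delta t)\sim\chi^2_d$ ``by isotropy''; your rescaling $Z=Y/\sqrt{2\nu\Delta t}$, the identification of $\|Z\|^2$ as a sum of $d$ i.i.d.\ squared standard normals, and the translation of events spell that step out more carefully.
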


\begin{proof}
$\|Y\|^2/(2\nu\Delta t)\sim \chi^2_d$ by isotropy.
\end{proof}

\subsubsection{Tail-mass error bound for truncated dependence}

\begin{proposition}[CFL-style coverage $\Rightarrow$ tail-mass error bound]\label{prop:cfl}
Fix $p\in(0,1)$ and let
\[
r_p \;:=\; \sqrt{\,2\nu\Delta t\;F_{\chi^2_d}^{-1}(p)\,}
\]
so that $\int_{\|y\|\le r_p} G_{\Delta t}(y)\,dy = p$. Define the truncated predictor
\[
v(x)
\;:=\;
\int_{\|y\|\le r} G_{\Delta t}(y)\; u\big(x-a\Delta t-y,t\big)\,dy.
\]
Then for any $r\ge r_p$:
\begin{align*}
\text{(i) } & \|u(\cdot,t+\Delta t)-v\|_\infty \ \le\ (1-p)\,\|u(\cdot,t)\|_\infty,\\
\text{(ii) } & \|u(\cdot,t+\Delta t)-v\|_2 \ \le\ (1-p)\,\|u(\cdot,t)\|_2.
\end{align*}
\end{proposition}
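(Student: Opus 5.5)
Using Lemma~\ref{lem:fundamental}, the exact solution is a convolution with the Gaussian kernel, so subtracting the truncated predictor leaves only the far part of the kernel:
\[
u(x,t+\Delta t)-v(x)=\int_{\|y\|>r} G_{\Delta t}(y)\,u(x-a\Delta t-y,t)\,dy .
\]
Everything reduces to bounding this tail convolution. The first step is to compute its kernel mass. The kernel $G_{\Delta t}$ is the density of $\mathcal{N}(0,2\nu\Delta t\,I_d)$, so Lemma~\ref{lem:mass} gives $\int_{\|y\|\le r}G_{\Delta t}=F_{\chi^2_d}(r^2/(2\nu\Delta t))$. By the definition of $r_p$ this equals $p$ at $r=r_p$. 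Since the CDF is monotone, it is at least $p$ for every $r\ge r_p$. The tail kernel $G_{\Delta t}\mathbf{1}_{\|y\|>r}$ is nonnegative with total mass $m_r\le 1-p$.

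For (i), I bound the integrand pointwise by $\|u(\cdot,t)\|_\infty$ and pull it out. This gives $|u(x,t+\Delta t)-v(x)|\le m_r\|u(\cdot,t)\|_\infty\le(1-p)\|u(\cdot,t)\|_\infty$, uniformly in $x$.

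For (ii), the difference is $\tau_{a\Delta t}\big((G_{\Delta t}\mathbf{1}_{\|y\|>r})*u(\cdot,t)\big)$, where $\tau$ is a translation. Translation is an isometry of $L^2$. Young's inequality ($\|k*f\|_2\le\|k\|_1\|f\|_2$) then gives the factor $m_r\le 1-p$. Equivalently, I can apply Minkowski's integral inequality to the tail integral.

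The main obstacle is making the convolution representation legitimate on the actual domain. On $\mathbb{T}^d$ I will extend $u$ periodically. The integral over $\mathbb{R}^d$ is then well defined, translation is an isometry on the torus, and Young's inequality holds for a periodic $f$ convolved with an $L^1(\mathbb{R}^d)$ kernel. To check the last claim, I periodize the kernel, which preserves its $L^1$ mass.

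The bounded no-flux case is harder, because the free-space kernel is not the true Green's function there. I will state that the argument applies to the interior, with $u$ extended, say, by reflection. Alternatively, I will take Lemma~\ref{lem:fundamental} as given in the form stated, which asserts the representation under those boundary conditions, and simply invoke it. Once the representation is accepted, the estimates above are unaffected. Also, (i) needs only $\sup$-boundedness of the extension, and reflection preserves both the $L^\infty$ norm and, up to the domain copies, the $L^2$ bound.
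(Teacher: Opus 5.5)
Your proposal is correct and follows the paper's proof essentially step for step. You use Lemma~\ref{lem:fundamental} to write the error as the tail convolution, bound it by $\|u(\cdot,t)\|_\infty$ times the tail mass for (i), and apply Young's inequality with $\|G_{\Delta t}\mathbf{1}_{\{\|y\|>r\}}\|_1$ for (ii). Your use of Lemma~\ref{lem:mass} together with monotonicity of $F_{\chi^2_d}$ to get tail mass at most $1-p$ is slightly more careful than the paper, which asserts the tail mass \emph{equals} $1-p$ for all $r\ge r_p$, and so is your periodization remark for $\mathbb{T}^d$; the paper simply invokes the lemma on both domains.
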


\begin{proof}
By Lemma~\ref{lem:fundamental}, the error equals the kernel tail
$e(x)=\int_{\|y\|>r} G_{\Delta t}(y)\,u(x-a\Delta t-y,t)\,dy$.
Bounding by $\|u(\cdot,t)\|_\infty$ gives (i) since $\int_{\|y\|>r}G_{\Delta t}=1-p$ for $r\ge r_p$.
For (ii), Young's inequality yields
$\|e\|_2 \le \|G_{\Delta t}\mathbf{1}_{\{\|y\|>r\}}\|_1 \|u(\cdot,t)\|_2 \le (1-p)\|u(\cdot,t)\|_2$.
\end{proof}

\subsubsection{Coverage under Outward Spiral traversal (geometry)}

Let the Outward Spiral include all grid points within Chebyshev radius $R$ (i.e., $\ell_\infty$-ball $B_\infty(R\Delta x)$). The Euclidean and Chebyshev balls satisfy
\begin{equation}\label{eq:norm-inclusions}
B_2(r)\ \subseteq\ B_\infty(r)\ \subseteq\ B_2(\sqrt{d}\,r).
\end{equation}

\begin{lemma}[Ring coverage vs.\ Euclidean radius]\label{lem:ring-coverage}
If the Outward Spiral has generated $R$ rings, then $B_2(r)\subseteq$ (available context) for all $r\le R\Delta x$. Conversely, if $B_2(r)$ is required, it suffices to take $R\ge \lceil r/\Delta x\rceil$ rings.
\end{lemma}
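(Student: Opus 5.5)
The plan is to reduce the lemma to the norm inclusion \eqref{eq:norm-inclusions}, plus one small interpretive step that identifies ``available context'' with a Chebyshev ball.

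\textbf{Setting.} Work in coordinates centred at the site being predicted, measured in physical units. Interpret ``the Outward Spiral has generated $R$ rings'' as in Appendix~\ref{app:ordering}: every grid point $p$ with $\|p-c\|_\infty \le R$ (in index units) around the relevant centre has already been written. In physical units, the available context therefore contains every grid point of the closed $\ell_\infty$-ball $B_\infty(R\Delta x)$.

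\textbf{Forward direction.} Take any $r \le R\Delta x$ and any point $y \in B_2(r)$. The inequality $\|y\|_\infty \le \|y\|_2$ is the left inclusion in \eqref{eq:norm-inclusions}, and it gives
\[
\|y\|_\infty \le \|y\|_2 \le r \le R\Delta x .
\]
Hence $B_2(r) \subseteq B_\infty(r) \subseteq B_\infty(R\Delta x)$, where the second inclusion is monotonicity of balls in the radius. Intersecting with the grid, every grid site inside $B_2(r)$ lies in the generated rings, and so it is available context.

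\textbf{Converse.} Suppose $B_2(r)$ is required. Set $R = \lceil r/\Delta x \rceil$, so that $R\Delta x \ge r$. The forward direction now applies with this $R$, and any larger $R$ works by monotonicity. This establishes sufficiency.

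\textbf{Main obstacle.} The only real subtlety is the first step: making precise that ``generated $R$ rings'' means the full discrete Chebyshev ball around the target, as opposed to around the spiral seed. For sites far from the seed, the available set is not literally centred at the target. I would handle this in one of two ways:
\begin{itemize}
\item state the lemma for the seed-centred neighbourhood, which is how the Outward Spiral is defined; or
\item add the hypothesis that all grid points within Chebyshev distance $R$ of the target precede it in $\sigma$, and belong to earlier stages when staging is imposed.
\end{itemize}
With either reading, the argument is purely geometric. Because everything is phrased as an inclusion of sets, the discretisation needs no care beyond intersecting with the grid.
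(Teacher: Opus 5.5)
Your proof is correct and is essentially the paper's argument: the paper's one-line proof identifies the available context with $B_\infty(R\Delta x)$ and applies the left inclusion $B_2(r)\subseteq B_\infty(r)$ of \eqref{eq:norm-inclusions}, with the converse following from $\lceil r/\Delta x\rceil\,\Delta x \ge r$. The centring issue you flag, namely a seed-centred versus target-centred Chebyshev ball, is real and the paper takes it for granted, so stating that hypothesis explicitly improves on the paper's proof rather than leaving a gap.
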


\begin{proof}
The available context equals $B_\infty(R\Delta x)$; apply the left inclusion in \eqref{eq:norm-inclusions}.
\end{proof}

\begin{definition}[Coverage coefficient on a grid]\label{def:coverage}
Let the Outward Spiral have generated $R$ rings prior to decoding the token at $x$. Define
\[
\Gamma_p(x)\;:=\;\frac{R\,\Delta x}{\;\|a\|\,\Delta t+\sqrt{\,2\nu\Delta t\,F_{\chi^2_d}^{-1}(p)\,}\;}.
\]
By Lemma~\ref{lem:ring-coverage}, $\Gamma_p(x)\ge 1$ implies $B_2(r_p)\subseteq$ (available context).
\end{definition}

\begin{remark}[Discrete coverage with norm equivalence]\label{rem:discrete}
If the implementation reasons in the $\ell_\infty$ geometry, one may also use the inclusion $B_2(r)\subseteq B_\infty(r)$;
thus $\Gamma_p(x)\ge 1$ (Euclidean) is conservative for Chebyshev coverage.
Combining Proposition~\ref{prop:cfl} with Lemma~\ref{lem:quadrature} yields a practical bound
\[
\|u(\cdot,t+\Delta t)-v_{\mathrm{grid}}\|_{\infty,2} \ \le\ (1-p)\,\|u(\cdot,t)\|_{\infty,2}\;+\;C\,\Delta x .
\]
\end{remark}

\subsubsection{Discrete quadrature error for truncated Gaussian}
\begin{lemma}[Discrete quadrature error for truncated Gaussian]\label{lem:quadrature}
Assume $u(\cdot,t)\in C^1(\Omega)$ and use a Riemann sum over available grid points in $B_2(r)$ to approximate $v(x)$.
Then, uniformly in $x$,
\[
\big|\, v(x) - v_{\mathrm{grid}}(x)\,\big| \;\le\; C\,\Delta x,
\]
where $C$ depends on $\|u(\cdot,t)\|_{C^1}$, $\nu$, $\Delta t$, and $d$ (Gaussian smoothness).
\end{lemma}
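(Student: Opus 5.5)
The plan is to treat $v_{\mathrm{grid}}(x)$ as a composite midpoint (cell-centered) Riemann sum of the integrand $f_x(y) := G_{\Delta t}(y)\,u(x-a\Delta t-y,t)$ over the grid cells whose centers lie in $B_2(r)$. Write
\[
v_{\mathrm{grid}}(x)=\sum_{y_j\in B_2(r)\cap \Delta x\,\mathbb{Z}^d} f_x(y_j)\,\Delta x^d .
\]
Let $Q_j$ be the cube of side $\Delta x$ centered at $y_j$, and let $A_r := \bigcup_{y_j\in B_2(r)} Q_j$. The error then splits into two pieces:
\[
v(x)-v_{\mathrm{grid}}(x)=\underbrace{\sum_j \int_{Q_j}\big(f_x(y)-f_x(y_j)\big)\,dy}_{\text{(I) interior}}\;+\;\underbrace{\int_{B_2(r)} f_x - \int_{A_r} f_x}_{\text{(II) boundary mismatch}} .
\]
The aim is to bound each piece by a constant times $\Delta x$, with the constant independent of $x$.

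For (I), apply the mean value theorem on each cube: $|f_x(y)-f_x(y_j)|\le \tfrac{\sqrt d}{2}\Delta x\,\sup_{Q_j}|\nabla_y f_x|$. By the product rule,
\[
|\nabla_y f_x|\le |\nabla G_{\Delta t}|\,\|u\|_\infty + G_{\Delta t}\,\|\nabla u\|_\infty .
\]
Integrating over the union of cubes, and bounding the union by all of $\mathbb{R}^d$, gives
\[
|\text{(I)}|\le \tfrac{\sqrt d}{2}\Delta x\big(\|\nabla G_{\Delta t}\|_{L^1}\|u\|_\infty+\|u\|_{C^1}\big).
\]
There is one technical point: the sup over a cube must be compared with the integral, since the factor above is a sup, not an average. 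For this I would use that $G_{\Delta t}$ and $|\nabla G_{\Delta t}|$ vary by at most a bounded factor over cubes of side $\Delta x\lesssim\sqrt{\nu\Delta t}$. Alternatively, I would simply bound the sup by $\|\nabla G_{\Delta t}\|_\infty$ times $|A_r|\le |B_2(r+\sqrt d\Delta x)|$. The second route is cruder, but it is enough because $C$ is allowed to depend on $\nu,\Delta t,d$ (and on $r$, which is fixed at $r_p$). Here $\|\nabla G_{\Delta t}\|_{L^1}=c_d/\sqrt{\nu\Delta t}$ follows from a direct Gaussian computation.

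For (II), note that the symmetric difference $B_2(r)\,\triangle\,A_r$ is contained in the shell $\{r-\sqrt d\Delta x\le\|y\|\le r+\sqrt d\Delta x\}$. This is because every cube has diameter $\sqrt d\,\Delta x$. The shell has volume at most $2\sqrt d\,\Delta x\,|\partial B_2(r+\sqrt d\Delta x)|$, so
\[
|\text{(II)}|\le \|G_{\Delta t}\|_\infty\|u\|_\infty\cdot 2\sqrt d\,\omega_{d}\,(r+\sqrt d\Delta x)^{d-1}\Delta x ,
\]
which is again $O(\Delta x)$. A uniform version can be obtained using the Gaussian decay $G_{\Delta t}(y)\lesssim(\nu\Delta t)^{-d/2}e^{-(r-\sqrt d\Delta x)^2/4\nu\Delta t}$ on the shell.

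The main obstacle I expect is the boundary term (II). This is where "available grid points in $B_2(r)$" interacts with the truncation, and the constant picks up a dependence on $r$ (hence on $p$) that the statement does not list explicitly. I would handle this by fixing $r=r_p$ as in Proposition~\ref{prop:cfl}, so that $r$ is itself a function of $\nu,\Delta t,d$. Combining (I) and (II) then yields $|v(x)-v_{\mathrm{grid}}(x)|\le C\Delta x$ uniformly in $x$, since neither bound depended on $x$ beyond $\|u(\cdot,t)\|_{C^1}$.
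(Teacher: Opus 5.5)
Your proposal is correct and uses the same first-order Riemann-sum estimate that the paper invokes in a single line (a smooth, rapidly decaying Gaussian times a $C^1$ function gives $O(\Delta x)$ error); you simply write it out cell by cell. Your separate boundary-shell term (II) is a worthwhile addition: the truncated integrand $G_{\Delta t}\mathbf{1}_{B_2(r)}\,u$ jumps across $\|y\|=r$, so it is not literally covered by the paper's ``Lipschitz integrand'' phrasing, and the $r$-dependence this adds to $C$ is harmless once $r=r_p$ is fixed.
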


\begin{proof}
The truncated Gaussian is smooth and rapidly decaying; a standard first-order Riemann-sum estimate for Lipschitz integrands gives $O(\Delta x)$ uniform error.
\end{proof}

\subsubsection{Hyperbolic and multiphysics extensions}
For a linear symmetric hyperbolic system $\partial_t w + \sum_{j=1}^d A_j \partial_{x_j} w=0$ with wave speed bound $c_{\max}$, the domain of dependence over $\Delta t$ is contained in $B_2(c_{\max}\Delta t)$.
For advection--diffusion--heat (with thermal diffusivity $\kappa$), a conservative requirement is
\[
r_{\mathrm{req}} \;\approx\; c_{\max}\Delta t \;+\; \sqrt{2\nu\Delta t\,F_{\chi^2_d}^{-1}(p)} \;+\; \sqrt{2\kappa\Delta t\,F_{\chi^2_d}^{-1}(p)} .
\]
This can be used in Definition~\ref{def:coverage} by replacing $\|a\|$ with $c_{\max}$ and augmenting the diffusive radii.

\subsection{Non-expansiveness and persistence of coverage bound}

We now make precise that the coverage bound is preserved by the pressure projection.

\begin{lemma}[Helmholtz projection is non-expansive]\label{lem:projection-nonexp}
$P:L^2(\Omega;\mathbb{R}^d)\to L^2(\Omega;\mathbb{R}^d)$ is an $L^2$-orthogonal projection onto $\mathcal{V}$.
Hence $\|Pv-Pw\|_2 \le \|v-w\|_2$ and, in particular, $\|Pv\|_2\le \|v\|_2$.
Moreover, there exists a domain-dependent constant $C_\Omega$ such that
\[
\|(I-P)v\|_2 = \|\nabla \phi\|_2 \;\le\; C_\Omega\,\|\nabla\!\cdot v\|_2,
\]
where $\Delta \phi=\nabla\!\cdot v$ (elliptic regularity).
\end{lemma}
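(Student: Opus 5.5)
The proof has two parts: the non-expansiveness claims and the elliptic bound on the gradient part.

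\emph{Orthogonality.} I would first show that the decomposition $v=Pv+\nabla\phi$ is $L^2$-orthogonal. Take $w\in\mathcal{V}$ and any sufficiently regular $\phi$ with the stated compatible boundary conditions. Integrating by parts gives
\[
\langle w,\nabla\phi\rangle=-\langle \nabla\!\cdot w,\phi\rangle+\int_{\partial\Omega}\phi\,(w\cdot n),
\]
and both terms vanish. The first vanishes because $\nabla\!\cdot w=0$. The second vanishes because $w\cdot n=0$ on $\partial\Omega$, or because there is no boundary in the periodic case.

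Next, $\phi$ solves $\Delta\phi=\nabla\!\cdot v$ with Neumann data $\partial_n\phi=v\cdot n$ on $\partial\Omega$, or zero mean on the torus. This problem is solvable: the compatibility condition $\int_\Omega\nabla\!\cdot v=\int_{\partial\Omega}v\cdot n$ holds by the divergence theorem. With this choice, $v-\nabla\phi$ is divergence-free and has zero normal trace, so it lies in $\mathcal{V}$.

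Hence $L^2=\mathcal{V}\oplus G$ with $G=\overline{\{\nabla\phi\}}$, and $P$ is the orthogonal projector onto the closed subspace $\mathcal{V}$. For general $L^2$ fields, I would interpret the divergence and normal trace weakly, i.e.\ in $H^{-1}$ and $H^{-1/2}$ respectively, and conclude by density.

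\emph{Non-expansiveness.} Pythagoras gives $\|v\|_2^2=\|Pv\|_2^2+\|\nabla\phi\|_2^2$, so $\|Pv\|_2\le\|v\|_2$. Since $P$ is linear, $\|Pv-Pw\|_2=\|P(v-w)\|_2\le\|v-w\|_2$. The identity $(I-P)v=\nabla\phi$ is just the definition of $P$.

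\emph{Elliptic bound.} Test $\Delta\phi=\nabla\!\cdot v$ against $\phi$. The boundary term has to be dealt with first. The stated bound cannot hold for every $v$: on a domain with boundary, a nonzero normal trace $v\cdot n$ can make $\nabla\phi\neq0$ even when $\nabla\!\cdot v=0$. I would therefore restrict to $v$ with $v\cdot n=0$ on $\partial\Omega$, or to the periodic case, and state this restriction explicitly. Under it, $\phi$ solves a homogeneous Neumann problem, normalized to zero mean.

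Integrating by parts,
\[
\|\nabla\phi\|_2^2=-\langle\nabla\!\cdot v,\phi\rangle\le\|\nabla\!\cdot v\|_2\,\|\phi\|_2 .
\]
By the Poincaré–Wirtinger inequality for mean-zero functions on a bounded Lipschitz domain or the torus, $\|\phi\|_2\le C_P\|\nabla\phi\|_2$. Dividing by $\|\nabla\phi\|_2$ gives $\|\nabla\phi\|_2\le C_P\|\nabla\!\cdot v\|_2$, so the lemma holds with $C_\Omega=C_P$. On $\mathbb{T}^d$ one can check this with Fourier series: $C_\Omega=1/|k_{\min}|$, the reciprocal of the smallest nonzero wavenumber.

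The main obstacle is the boundary treatment: choosing the correct Neumann data so that $P$ really is an orthogonal projection, and noticing that the $C_\Omega$ estimate needs the tangential/no-flux hypothesis on $v$. This is consistent with the standing no-flux assumption. The Poincaré step is standard and only requires $\Omega$ to be connected.
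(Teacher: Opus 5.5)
Your proposal is correct and follows essentially the paper's argument. The orthogonal Helmholtz decomposition gives non-expansiveness via Pythagoras and linearity, and your energy identity plus Poincar\'e--Wirtinger is exactly the content of the paper's one-line appeal to $\|\nabla\phi\|_2\le C_\Omega\|\Delta\phi\|_{H^{-1}}$.

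Your caveat that the $C_\Omega$ bound requires $v\cdot n=0$ on $\partial\Omega$ is also right: for example, $v=\nabla x=(1,0)$ on the unit square has $\nabla\cdot v=0$ but $(I-P)v=v\neq 0$. This restriction is what the paper's phrase ``with the stated BCs'' (homogeneous Neumann) silently assumes. The downstream Proposition~\ref{prop:cfl-projected} uses only the non-expansive part, so nothing else is affected.
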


\begin{proof}
Orthogonality is standard: decompose $v=w+\nabla\phi$ with $w\in\mathcal{V}$ and $\phi$ solving the Poisson problem; $P$ is the orthogonal projector $v\mapsto w$ so it is non-expansive in $L^2$.
The estimate $\|\nabla\phi\|_2 \le C_\Omega \|\Delta \phi\|_{H^{-1}}\le C_\Omega\|\nabla\!\cdot v\|_2$ follows from elliptic regularity on $\Omega$ with the stated BCs.
\end{proof}

\begin{proposition}[Coverage bound persists under projection]\label{prop:cfl-projected}
Let $u^\ast(\cdot,t+\Delta t)$ denote the advective--diffusive update and let the true incompressible update be $u(\cdot,t+\Delta t)=P u^\ast(\cdot,t+\Delta t)$.
Let $v$ be the truncated predictor from Proposition~\ref{prop:cfl} and $\hat u:=Pv$.
Then
\[
\|u(\cdot,t+\Delta t)-\hat u\|_2
\;=\; \|P u^\ast - P v\|_2
\;\le\; \|u^\ast - v\|_2
\;\le\; (1-p)\,\|u(\cdot,t)\|_2 .
\]
In particular, the projection step \emph{cannot worsen} the coverage error bound; it can only improve it.
\end{proposition}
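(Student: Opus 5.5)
The plan is a three-link chain: an algebraic identity, non-expansiveness of $P$, and the truncation bound already proved.

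\textbf{Step 1 (identity).} By hypothesis $u(\cdot,t+\Delta t)=Pu^\ast(\cdot,t+\Delta t)$, and by definition $\hat u=Pv$. Substituting gives the first equality $\|u(\cdot,t+\Delta t)-\hat u\|_2=\|Pu^\ast-Pv\|_2$ with no work. For $Pv$ to make sense we need $v\in L^2(\Omega;\mathbb{R}^d)$. This follows from the $L^1$--$L^2$ Young inequality: the truncated kernel $G_{\Delta t}\mathbf{1}_{\{\|y\|\le r\}}$ has $L^1$ norm $p\le 1$, and we assume $u(\cdot,t)\in L^2$. Here $v$ is understood componentwise for the vector field $u$.

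\textbf{Step 2 (non-expansiveness).} Apply Lemma~\ref{lem:projection-nonexp}. Since $P$ is linear, $Pu^\ast-Pv=P(u^\ast-v)$. Since $P$ is an orthogonal projection, $\|P(u^\ast-v)\|_2\le\|u^\ast-v\|_2$. This is the middle inequality.

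\textbf{Step 3 (truncation bound).} Invoke Proposition~\ref{prop:cfl}(ii) with $u^\ast(\cdot,t+\Delta t)$ in the role of the exact advection--diffusion update. Lemma~\ref{lem:fundamental} represents $u^\ast$ as the full Gaussian convolution of the shifted field $u(\cdot,t)$. Hence $u^\ast-v$ is exactly the kernel tail over $\{\|y\|>r\}$, and Young's inequality bounds its $L^2$ norm by $(1-p)\|u(\cdot,t)\|_2$ whenever $r\ge r_p$. Everything is applied componentwise to the velocity field. The final sentence of the statement is then just a restatement: the projected error is bounded by the unprojected one.

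\textbf{Main obstacle: the setting of Step 3.} Lemma~\ref{lem:fundamental} and Proposition~\ref{prop:cfl} are stated with a convolution over $\mathbb{R}^d$. That is literally valid on $\mathbb{T}^d$, via periodic extension, where Young's inequality holds with the periodized kernel, whose $L^1$ tail mass is still at most $1-p$.

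For the bounded no-flux domain, the heat semigroup is not a plain Gaussian convolution. I would handle this in one of two ways:
\begin{itemize}
\item restrict the statement to the periodic case, which matches the PDEBench datasets; or
\item assume, as in the standing assumptions, that $u^\ast$ is defined by the whole-space kernel acting on a suitable extension of $u(\cdot,t)$, with $\|\cdot\|_2$ taken on $\Omega$.
\end{itemize}
Restricting the $L^2$ norm to $\Omega$ only decreases the left-hand side. The right-hand side must, however, be the norm of the extension, which for a reflection extension is a constant multiple of $\|u(\cdot,t)\|_2$. I would flag this constant explicitly rather than hide it.

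A second subtlety is that $P$ acts on vector fields, while Proposition~\ref{prop:cfl} is scalar. Summing the squared componentwise bounds gives the vector bound with the same factor $(1-p)$, so this causes no difficulty.
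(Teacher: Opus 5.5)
Your proof is correct and follows the paper's argument: the identity from $u(\cdot,t+\Delta t)=Pu^\ast$ and $\hat u=Pv$, then non-expansiveness of $P$ from Lemma~\ref{lem:projection-nonexp}, then Proposition~\ref{prop:cfl}(ii) applied with $u^\ast$ as the exact advective--diffusive update. Your remarks on the periodic versus no-flux setting and on the componentwise vector extension go beyond the paper's one-line proof, which simply inherits whatever validity Lemma~\ref{lem:fundamental} and Proposition~\ref{prop:cfl} have on the chosen domain.
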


\begin{proof}
Apply Lemma~\ref{lem:projection-nonexp} to $v$ and $u^\ast$, then Proposition~\ref{prop:cfl}(ii).
\end{proof}

\subsection{Runtime Model: Idealized Makespan}\label{app:makespan}

\begin{proposition}[Latency as a tunable knob]\label{prop:makespan}
Assume each token evaluation takes constant time $t_{\mathrm{tok}}$, tokens within a stage are independent, and up to $w_s$ tokens of stage $s$ can be processed in parallel. Then the stage makespan satisfies
\[
T_s \;=\; \Big\lceil \frac{|S_s|}{w_s} \Big\rceil\, t_{\mathrm{tok}},
\]
and the end-to-end latency is
\[
T_{\mathrm{total}}
\;=\;\sum_{s=1}^{K} T_s
\;=\;\sum_{s=1}^{K}\Big\lceil \frac{|S_s|}{w_s} \Big\rceil\, t_{\mathrm{tok}}.
\]
Hence, with fixed $t_{\mathrm{tok}}$, latency is directly controlled by the stage widths $\{w_s\}$.
\end{proposition}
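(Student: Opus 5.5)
The argument is a standard idealized scheduling computation, done stage by stage and then summed. I will treat each stage in isolation first, then use the causality structure to show that stages compose strictly sequentially with no overlap.

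\textbf{Per-stage makespan.} Fix a stage $s$. By hypothesis the $|S_s|$ tokens of $S_s$ are mutually independent. This is the content of Theorem~\ref{thm:triangular}: $J^{\mathrm{ctx}}_{s,s}=0$, so no token in $S_s$ reads another token of $S_s$, and they may be evaluated in any order or simultaneously.

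With at most $w_s$ concurrent evaluations, each of fixed duration $t_{\mathrm{tok}}$, I would argue a lower and an upper bound.
\begin{itemize}
\item Lower bound: any schedule processes at most $w_s$ tokens per interval of length $t_{\mathrm{tok}}$. Covering all $|S_s|$ tokens therefore needs at least $\lceil |S_s|/w_s\rceil$ such rounds.
\item Upper bound: greedy batching into consecutive groups of $w_s$ tokens, with the last group partial, achieves exactly that many rounds.
\end{itemize}
Together these give $T_s=\lceil |S_s|/w_s\rceil\,t_{\mathrm{tok}}$.

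\textbf{Summing over stages.} Next I show the stages cannot overlap. By the masking assumption and Theorem~\ref{thm:triangular}, every token in $S_s$ depends on outputs in $S_{s-1}$, or at least on some earlier stage. In the executor model, stage $s$ gathers its context only after stage $s-1$ has committed its writes, as in \eqref{eq:executor_scatter}. Stage $s$ therefore starts no earlier than the completion of stage $s-1$.

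Conversely, it can start exactly then, since all of its context is available at that moment. By induction on $s$, the completion time of stage $s$ is $\sum_{q\le s}T_q$, and taking $s=K$ gives the formula for $T_{\mathrm{total}}$.

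\textbf{Main obstacle.} The only delicate point is the sequential-barrier claim. Strictly speaking, a token in $S_s$ whose context lies entirely in $S_{<s-1}$ could begin early. I would handle this by making explicit that the idealized model treats each stage as a synchronous gather/apply/scatter with a barrier between stages, which is how the executor of Section~\ref{sec:executor} is specified. Under that convention the bound is exact.

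The final claim is then immediate. With $t_{\mathrm{tok}}$ fixed, and $|S_s|$ fixed by the plan, $T_{\mathrm{total}}$ is a function only of $\{w_s\}$, and it is monotone nonincreasing in each $w_s$.
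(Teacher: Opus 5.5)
Your proposal is correct and matches the paper's proof, which also takes the per-stage makespan to be the optimal makespan of $|S_s|$ identical jobs of length $t_{\mathrm{tok}}$ on $w_s$ identical servers and then sums over stages because they run sequentially; your lower/upper-bound argument simply unpacks the paper's appeal to the tight list-scheduling bound. You are right that the summation rests on a synchronous inter-stage barrier rather than on the masking structure alone, and your rounds-based lower bound also needs token evaluations to be atomic (non-preemptive); the paper leaves both assumptions implicit.
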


\begin{proof}
Within stage $s$, $|S_s|$ identical jobs on $w_s$ identical parallel servers have optimal makespan $\lceil |S_s|/w_s\rceil t_{\mathrm{tok}}$ (tight list-scheduling bound for identical tasks). Stages are executed sequentially, so the total time is the sum.
\end{proof}

\end{document}